\documentclass[noinfoline]{imsart}

\usepackage[utf8]{inputenc}
\usepackage[T1]{fontenc}
\usepackage[english]{babel}
 
\usepackage{cite}			%meilleures citations

\usepackage{dsfont}			% double stroke fonts
\usepackage{stmaryrd}
\usepackage{bbm}
\usepackage{mathrsfs}
\usepackage{lmodern}

%\usepackage[a4paper%,showframe
%,twoside%,bindingoffset=0cm
%,hmarginratio=3:4
%,vcentering
%,headheight=15pt
%,textheight=0.7\paperheight,textwidth=0.7\paperwidth
%]{geometry}
%\usepackage{layout}

\usepackage{relsize}

\usepackage{fancyhdr}
\usepackage{titling}

\usepackage{pstricks}
\usepackage{pifont}
\usepackage{enumerate}
\usepackage{graphicx}
\usepackage[all]{xy}
\usepackage{multicol}
\usepackage{longtable}
\usepackage{amsmath,amsfonts,amssymb,amstext}
\usepackage[amsmath,amsthm,thmmarks,hyperref]{ntheorem}
\usepackage{nicefrac}
\usepackage{fancyvrb}

\usepackage{varioref}
\usepackage[unicode,bookmarks,colorlinks=true,citecolor=red,linkcolor=black]{hyperref}
\usepackage[all]{hypcap}
\usepackage{cleveref}

\addto\extrasenglish{

}

\newcommand{\thmref}[1]{\ref{#1} (page \pageref{#1})}

\DeclareMathOperator*{\bigboxplus}{\;\text{\huge$\boxplus$}\;}

\DeclareMathOperator{\supp}{supp}

\newcommand{\ds}{\displaystyle}

\renewcommand{\C}[1]{\mathscr{#1}}
\newcommand{\F}[1]{\mathfrak{#1}}
\newcommand{\B}[1]{\mathds{#1}}
\newcommand{\ov}[1]{\overline{#1}}
\newcommand{\und}[1]{\underline{#1}}
\newcommand{\wt}[1]{\widetilde{#1}}
\newcommand{\Ct}[1]{\wt{\C{#1}}}
\newcommand{\wh}[1]{\widehat{#1}}

\renewcommand{\phi}{\varphi}

\renewcommand{\leq}{\leqslant}
\renewcommand{\geq}{\geqslant}

\newcommand{\indi}{\mathbbm{1}}

\newcommand{\ud}{\mathrm{d}}

\newcommand{\quotient}[2]{\nicefrac{\mathlarger{#1}}{\mathlarger{#2}}}

\numberwithin{equation}{section}

\theoremstyle{break}
%\theoremindent\parindent
\newtheorem{thm}{Theorem}[section]
\newtheorem{lemma}[thm]{Lemma}
\newtheorem{prop}[thm]{Proposition}

%\theorembodyfont{\upshape}
\newtheorem{dfn}{Definition}[section]

\theoremheaderfont{\itshape}

\crefname{dfn}{definition}{definition}
\crefname{dfn}{Definition}{Definition}
\crefname{prop}{proposition}{proposition}
\crefname{prop}{Proposition}{Proposition}

%\labelformat{equation}{equation~(#1)}

%\newenvironment{proof}{{\sc Proof.}}{\ $\square$}

\allowdisplaybreaks

\begin{document}

\begin{frontmatter}
\title{Toric grammars : a new statistical 
approach to natural language modeling}
\runtitle{Toric grammars}
\begin{aug}
\author{Olivier Catoni}
\author{Thomas Mainguy}
\runauthor{O. Catoni and T. Mainguy ~/~ \today~ }
\end{aug}

\begin{abstract}
We propose a new statistical model for 
computational linguistics. Rather than trying to estimate 
directly the probability distribution of a random 
sentence of the language, we define a Markov chain on 
finite sets of sentences with many finite recurrent communicating 
classes and define our language model as the invariant 
probability measures of the chain on each recurrent 
communicating class. This Markov chain, that we call a communication 
model, recombines at each step randomly 
the set of sentences forming its current state, using some 
grammar rules. When the grammar rules are fixed and known in advance 
instead of being estimated on the fly, we can prove supplementary
mathematical properties. In particular, we can prove 
in this case that all states are recurrent states, so that 
the chain defines a partition of its state space into 
finite recurrent communicating classes. We show that our approach
is a decisive departure from Markov models at the sentence level 
and discuss 
its relationships with Context Free Grammars. Although 
the toric grammars we use are closely related to Context 
Free Grammars, the way we generate the language
from the grammar is qualitatively different. 
Our communication model has two purposes. On the one hand, it is 
used to define indirectly the probability distribution of a 
random sentence of the language. On the other hand it can 
serve as a (crude) model of language transmission from 
one speaker to another speaker through the communication 
of a (large) set of sentences.
\end{abstract}

\begin{keyword}[class=AMS]
\kwd[Primary ]{62M09}
\kwd{62P99}
\kwd{68T50}
\kwd[; secondary ]{91F20}
\kwd{03B65}
\kwd{91E40}
\kwd{60J20}
\end{keyword}
\begin{keyword}
\kwd{Probabilistic grammars}
\kwd{Context Free Grammars}
\kwd{Language model}
\kwd{Computational linguistics}
\kwd{Statistical learning}
\kwd{Finite state Markov chains}
\end{keyword}

\end{frontmatter}

\section{Introduction to a new communication model}

In\footnote[0]{\today} the well known kernel approach to density estimation on a measurable 
space $\C{X}$, the probability  
distribution $\B{P}$ 
 of a random variable $X \in \C{X}$ 
is estimated from a sample $(X_1, \dots, X_n)$ of $n$ independent copies 
of $X$ as $\frac{1}{n} \sum_{i=1}^n 
k(X_i, \ud x)$, where $k$ is a suitable Markov kernel. This kernel 
estimate can be seen as a modification of the empirical measure 
$\ov{\B{P}} = \frac{1}{n} \sum_{i=1}^n \delta_{X_i}$. 

In the context of natural language modeling at the sentence level, 
$\C{X}$ is the set of sentences, that is the set of sequences of 
words of finite length. 

Finding sensible kernel estimates or sensible parametric models 
in this context is a challenge. Therefore, we propose here another 
route, that we will describe as an alternative way of producing 
a modification of the empirical measure. The idea is to recombine
repeatedly a set of sentences. Let us describe for this a general 
framework, concerned with an arbitrary countable state space $\C{X}$. 

Let $\ds \ov{\C{P}}_n = \biggl\{ \frac{1}{n} \sum_{i=1}^n \delta_{x_i}, 
\, x_i \in \C{X} \biggr\}$ be the set of empirical measures of all possible 
samples of size $n$. Let us consider a parametric family 
$\{q_{\theta}, \theta \in \Theta \}$ of Markov kernels on $\ov{\C{P}}_n$.
Let us assume for simplicity that for any $P \in \ov{\C{P}}_n$, 
the reachable set $\bigl\{ Q \in \ov{\C{P}}_n, \; \sum_{t \in \B{N}} 
q_{\theta}^t \bigl(P,Q)  > 0\bigr\}$ is finite, where $q_{\theta}^t$ is $q_{\theta}$ composed $t$ times with itself, so that for instance 
$q_{\theta}^2(P,Q) = \sum_{P' \in \ov{\C{P}}_n} 
q_{\theta}(P, P') q_{\theta}(P', Q)$. In this case we can define the 
Markov kernel 
\[
\wh{q}_{\theta}(P,Q) = \lim_{k\rightarrow \infty} \frac{1}{k} \sum_{t=1}^k 
q_{\theta}^t(P,Q).
\]
It is such that for any $P \in \ov{\C{P}}_n$, $\wh{q}_{\theta}(P, \cdot)$ is 
an invariant measure of $q_{\theta}$. More generally 
$q_{\theta} \wh{q}_{\theta} = \wh{q}_{\theta} q_{\theta} = \wh{q}_{\theta}$. 
The distribution 
$\wh{q}_{\theta}(P, \cdot) \in \C{M}_+^1 \bigl( \ov{\C{P}}_n \bigr)$ 
induces a marginal distribution $\wh{Q}_{\theta, P}$ on $\C{X}$ through 
the formula
\begin{equation}
\label{eq1.1}
\wh{Q}_{\theta, P} = \sum_{Q \in \ov{\C{P}}_n} \wh{q}_{\theta}(P,Q) 
Q. 
\end{equation}

In this paper, we will be concerned with estimators of the form $\wh{P} = 
\wh{Q}_{\theta, \ov{\B{P}}}$, if $\theta$ is fixed in advance, or of the form 
$\wh{Q}_{\wh{\theta}, \ov{\B{P}}}$, if $\wh{\theta}$ is an estimator 
of the parameter $\theta$ depending also on $\ov{\B{P}}$. 

Another interpretation of our framework is to consider $q_{\theta}$ 
as a communication model. One speaker hears a set of sentences described 
by its empirical distribution $P \in \ov{\C{P}}_n$ (which means 
that he will not make use of the special order in which he has heard 
them). He uses those sentences to learn the corresponding 
language. Then he teaches another speaker what he has learnt 
by outputting another random set of sentences, distributed according 
to $q_{\theta}(P, \cdot)$. The language model (as opposed to the 
communication model $q_{\theta}$), is $\wh{Q}_{\theta, P}$, the 
average sentence distribution along an infinite chain of 
communicating speakers. If we start from a recurrent state $P$, 
and we assume that $\theta$ is known, we obtain a communication 
model where the target sentence distribution $\wh{Q}_{\theta, P}$ 
can be learnt without error from the set of sentences output 
by any involved speaker. Indeed $\wh{Q}_{\theta, P} = 
\wh{Q}_{\theta, Q}$ for any $Q$ in the communicating class of $P$, 
which in this situation is also the reachable set from $P$.

This error free estimation behaviour is desirable for a communication 
model. It tells us that the language can be transmitted from speaker 
to speaker without distortion, a desirable feature in the case of 
a large number of speakers. The model may also account for weak 
stimulus learning, the fact that human beings learn language 
through a limited number of examples compared with the variety 
of new sentences they are able to formulate. Indeed, whereas
the size of the support of $P \in \ov{\C{P}}_n$, the number of sentences 
heard by one speaker, is constant and equal to $n$, the support of the 
language model $\wh{Q}_{\theta, P}$ may be much larger.  
We will actually give a toy example where the number of sentences 
in the language is exponential with $n$.

In the language transmission interpretation, we may evaluate 
the interest of the model by studying whether it can model
a large family of sentence distributions. This richness will 
depend on the number of recurrent communicating classes of 
the communication Markov model $q_{\theta}$, since any 
invariant distribution $\wh{q}_{\theta}(P, \cdot)$ is a convex 
combination of the unique invariant measures supported by 
each recurrent communicating class. The situation is even 
simpler in the case when all $P \in \ov{\C{P}}_n$ are recurrent
states (a fact we will be able to prove in 
our particular model). In this case
$\wh{q}_{\theta}(P, \cdot)$ is the unique invariant measure 
supported by the recurrent communicating class to which $P$ belongs.

In this paper we will focus on the construction and mathematical 
properties of the communication model $q_{\theta}$. We will 
also touch on the estimation problem stated in the opening of this 
introduction by providing some estimator $\wh{Q}_{\wh{\theta}, \ov{\B{P}}}$ 
where $\wh{\theta}(\ov{\B{P}})$ is an estimator of the parameter
computed on the observed sample. However we will leave the mathematical 
properties of this estimator for further studies. We will be content 
with providing some promising preliminary experiments 
computed on a small sample and will share with the reader 
some qualitative explanations of its behaviour. 

The parameter $\theta$ of our model will be a new kind of 
grammar, closely related to Context Free Grammars, but 
used to generate sentences in a different way.

\section{Toric grammars}

Now that we have explained our general 
framework based on a communication Markov kernel $q_{\theta}$ defined 
on empirical distributions, let us come to natural language modeling 
more specifically, and describe a dedicated family of kernels.

Natural language processing in linguistics 
has been using more and more elaborate mathematical tools (a brief presentation of some of them is given by E. Stabler in \cite{Stabler09}). 
The $n$-gram models are widely used, although they fail to grasp the recursive nature of natural languages, and do not use the syntactic properties of sentences. Efforts have been made to improve the performance of these models, 
by introducing syntax, (Della Pietra et al. 1994 \cite{Pietra94}; Roark 2001 \cite{Roark01}; Tan et al. 2012 \cite{Tan12}). One way to do this is to use Context Free Grammars, also named phrase structure grammars, introduced by N. Chomsky as possible models for the logical structure of natural languages (see for example \cite{Chomsky56, Chomsky57, Chomsky65}), and their probabilistic variants (Chi 1999 \cite{Chi99}). Our proposal follows this trend, but with the goal to separate ourselves from classic $n$-grams, 
seeing syntax as equivalence classes between constituents, which we try to discover.

We consider some dictionary of words $D$. Each statistical sample, as explained 
in the introduction, is made of a set of sentences. Each sentence 
is a sequence of words of $D$.
The sentences may be of variable length. 

To simplify notations we will use non normalized empirical measures. 
Thus, the state space of the communication Markov kernel $q_{\theta}$ 
will be
$$
\ov{\C{P}}_n = \biggl\{ \sum_{i=1}^n \delta_{s_i}, s_i \in D^+ \biggr\},
$$
where we introduce the notation
\[
D^+ = \bigcup_{j=1}^{\infty} D^j.
\]

We will call $\ov{\C{P}}_n$ the set of texts of length $n$. 
Let us notice that for us, texts are unordered sets of 
sentences. The question of generating meaningful ordered 
sequences of sentences is also of interest, but will not 
be addressed in this study.

In order to define the communication kernel,
we will describe random transformations on texts,
related to the notion of Context Free Grammars.
Let us start with an informal presentation. 
The communication kernel will perform 
random recombinations of sentences. 

Our point of view is to see a Context Free Grammar as the result of some fragmentation process applied to a set of sentences. Let us explain this on a simple example. Consider the sentence
\[
  \text{\it This is my friend Peter.}
\]
Imagine we would like to represent this sentence as the result of pasting the expression {\it my friend} in its context, because we think language is built by cutting and pasting expressions drawn from some large set of memorized sentences. We can do this by introducing the simple Context Free Grammar
\begin{align*}
  \framebox{$0$} & \rightarrow \text{\it This is } \framebox{$1$} \text{\it ~Peter .}\\
  \framebox{$1$} & \rightarrow \text{\it my friend}
\end{align*}
where we have used numbered framed boxes for non terminal symbols, the start symbol being~\framebox{$0$}. The two rules mean that we can rewrite the start symbol~\framebox{$0$} to obtain the right-hand side of the first rule, and that we can then rewrite the non terminal symbol~\framebox{$1$} as the right-hand side of the second rule. 

Since we want to see the rules of the grammar as the result of some splitting operation, we are going to use more symmetric notations. Instead of considering that we have described our original sentence with the help of two rules and two non terminal symbols~\framebox{$0$} and~\framebox{$1$}, we may as well consider that we have split our original sentence into two new sentences using {\em three} non terminal symbols, namely~$\framebox{$0$} \rightarrow$, \framebox{$1$} and~$\framebox{$1$} \rightarrow$. To emphasize this interpretation, we can adopt more symmetric notations and write these three non terminal symbols as~$[_0$, $]_1$ and~$[_1$. With these new notations, the representation of our original sentence is now
\begin{align*}
  [_0 & \text{\it ~This is } ]_1 \text{\it ~Peter .}\\
  [_1 & \text{\it ~my friend}
\end{align*}
In this new representation, the rewriting rules can be replaced by merge operations of the type
\[
  a \, ]_i c + [_i \, b ~ \mapsto ~ a b c
\]
We can make this merge operations even more symmetric, if we consider that each expression can be represented by any of its circular permutations. Indeed, each expression contains exactly one non terminal symbol of the form~$[_i$, and therefore is uniquely defined by any of its circular permutations (since, due to this feature, we can define the permutation in which the opening bracket~$[_i$ comes first as the canonical form, and recover it from any other circular permutation).
Using this convention, we can write~$a\,]_i c$ as~$c a\,]_i$ and describe the merge operation as
\[
ca \, ]_i + [_i \, b ~ \mapsto ~ cab,
\]
or, renaming~$ca$ as~$a$ simply as
\[
  a ]_i + [_i b ~ \mapsto ~ ab.
\]

Let us formalize what we have explained so far. Let~$D$ be some dictionary of words (which can be for the sake of this mathematical description any finite set, representing the words of the natural language to be modeled). Let us form the symbol set~$S = D \cup \bigl\{ [_i, ]_i, i \in \B{N} \bigr\}$. Let us define the set of circular permutations of a sequence of symbols as 
\[
  \F{S}(w_0, \dots, w_{\ell-1}) = \bigl\{( w_{(i+j\!\!\!\!\mod \ell)} , i=0, \dots, \ell-1), j = 0 , \dots,  \ell-1 \bigr\},
\]
so that for instance $\F{S}(w_0,w_1,w_3) = \{ w_0 w_1 w_2, w_1 w_2 w_0, w_2 w_0 w_1 \}$,
and its support (the set of symbols included in the sequence) as
\[
  \supp \bigl( w_0, \dots, w_{\ell-1} \bigr) = \bigl\{ w_0, \dots, w_{\ell-1} \bigr\}.
\]
Let~$A^+ = \bigcup_{n=1}^{+\infty} A^n$, $\,]_+ = \bigl\{ \; ]_i, i \in \B{N} \setminus \{ 0 \} \bigr\}$, and consider the set of expressions 
\[
  \C{E} = \Bigl\{ \; e \in \F{S} \bigl([_i \, a \bigr) , \; i \in \B{N}, \; 
a \in  \bigl( D \, \cup \; ]_+ \bigr)^+ \; \setminus \; ]_+ \Bigr\}.
\]
In plain words, an expression is a circular permutation of a finite 
sequence of symbols starting with an opening bracket, containing 
no other opening bracket and not reduced to an opening bracket 
followed by a closing bracket.

This definition mirrors the fact that a given rule of a Context Free 
Grammar has exactly one~$\framebox{$i$} \rightarrow$ (the left side), and the right side of the rule cannot be just a non terminal symbol~$\framebox{$j$}$. 
Indeed, if we had allowed $\framebox{$i$} \rightarrow \framebox{$j$}$, 
or with our notations $[_i \, ]_j$, we could as well have replaced $i$ 
by $j$ everywhere.

\begin{dfn}\label{dfn:ToricGrammars}
  The set of toric grammars is the set~$\F{G}$ of positive measures~$\C{G}$ 
on $\C{E}$ with finite support such that for any circular permutation~$e' \in \F{S}(e)$ of any expression~$e \in \C{E}$, $\C{G}(e') = \C{G}(e)$. 
\end{dfn}

In other words, a toric grammar~$\C{G}$ is a  positive measure with finite support on the set of expressions~$\C{E}$ satisfying 
\[
  \C{G}(e) =\lvert \F{S}(e) \rvert^{-1} \C{G} \bigl( \F{S}(e) \bigr).
\]

Let us remark that, in our definition of toric grammars, on top of choosing some special notations for Context Free Grammars, we also introduced positive weights, so that it is more the support of a toric grammar than the grammar itself that corresponds to the usual notion of Context Free Grammar. 

The weights will serve to keep track of word frequencies through the process of splitting a set of sentences to obtain a toric grammar. 

Our aim is indeed to build a toric grammar from a text. To be consistent with our definition of grammars, we will also define texts as positive measures. Let us give a formal definition. We will forget the sentence order, a text will be an unordered set of sentences with possible repetitions.

\begin{dfn}\label{dfn:Texts}
  The set~$\F{T}$ of texts is the set of toric grammars with integer weights supported by $\F{S} \left( [_0 \, D^+ \right)$, that is the  set of 
toric grammars with integer weights using only one non terminal symbol, the start symbol~$[_0$.
\end{dfn}

In this definition, it should be understood that 
\[
  [_0 \, D^+ = \bigl\{ \bigl( [_0, w_1, \dots, w_{k} \bigr), \text{ where } 
  k \in \B{N} \setminus \{0\} \text{ and } w_i \in D, 1 \leq i \leq k \bigr\},
\]
and that 
\[
  \F{S} \bigl( [_0 \, D^+ \bigr) = \bigcup_{e \in [_0\,D^+} \F{S}(e).
\]

\section{A roadmap towards a communication model}

We will use toric grammars as intermediate steps to define the
transition probabilities of our communication model on texts.
To this purpose, we will first introduce some general types of transformations
on toric grammars (reminding the reader that in our formalism
 texts are some special subset of toric grammars).

It will turn out that two types of expressions, global expressions and
local expressions, will play different roles. Let us define them
respectively as
\begin{align*}\label{df*:localglobal}
\C{E}_{g} & = \C{E} \cap \F{S} \bigl( [_0 \, S^+ \bigr), \\
\C{E}_{\ell} & = \C{E} \cap \F{S} \bigl( [_+ \, S^+ \bigr), 
\end{align*}
where we remind that $\ds [_+ = \bigl\{ [_i, i \in \B{N} \setminus \{ 0 \} 
\bigr\}$ and $S^+ = \bigcup_{j=1}^{\infty} S^j$.
Any toric grammar $\C{G} \in \F{G}$ can be accordingly decomposed into
$\C{G} = \C{G}_g + \C{G}_{\ell}$, where $\C{G}_g (A) = \C{G} \bigl(
A \cap \C{E}_g \bigr)$ and $\C{G}_{\ell}(A) = \C{G} \bigl(
A \cap \C{E}_{\ell} \bigr)$, for any subset $A \subset \C{E}$.

The transitions of the communication chain with kernel 
$q_{\theta}(\C{T}, \C{T}')$
will be defined in two steps. The first step consists in learning 
from the text $\C{T}$ a toric grammar $\C{G}$. To this purpose
we will split the sentences of $\C{T}$ into syntactic constituents.
The second step consists in merging the constituents again to 
produce a random new text $\C{T}'$. 
The parameter $\theta = \C{R}$ of the communication kernel $q_{\theta}$, 
will also be a toric grammar. The role of this reference grammar 
$\C{R}$ will be to provide a stock of local expressions to 
be used when computing $\C{G}$ from $\C{T}$. We will discuss 
later the question of the estimation of $\C{R}$ itself. For 
the time being, we will assume that the reference grammar 
$\C{R}$ is a parameter of the communication chain, 
known to all involved speakers.  

We could have defined a communication kernel 
$q_{\wh{\C{R}}(\C{T})}(\C{T},\C{T}')$, where the reference 
grammar $\wh{\C{R}}(\C{T})$ itself is estimated at each step from the 
current text $\C{T}$,
but we would have obtained a model with weaker properties,
where, in particular, all the states are not necessarily 
recurrent states. On the other hand, the proof that the reachable 
set from any starting point is finite still holds for 
this modified model, so that it does provide an alternative 
way of defining a language model as described 
in the introduction.

We will still need an estimator $\wh{\C{R}}(\C{T})$ 
of the reference grammar, 
in order to provide a language estimator 
$\wh{Q}_{\wh{\C{R}}(\C{T}), \C{T}'}$, where we are using the 
notations of \vref{eq1.1}. The estimation $\wh{\C{R}}(\C{T})$ of the 
reference grammar will be achieved by running some 
fragmentation process on the text $\C{T} \in \F{T}$. 

\section{Non stochastic syntax splitting and merging}

Let us now describe the model, starting with the description of some
non random grammar transformations. We already introduced a model
for grammars that includes texts as a special case. We have
now to describe how to generate a toric grammar from a text,
with, or without, the help of a reference grammar to
learn the local component of the grammar. The mechanism
producing a grammar from a text will be some sort of random parse
algorithm (or rather tentative parse algorithm).

All of this will be achieved by two transformations on toric grammars that will
respectively {\em split} and {\em merge} expressions (syntagms)
of a toric grammar into smaller or bigger ones. We will first
describe the sets of possible splits and merges from a given grammar.
This will serve as a basis to define random transitions from one
grammar to another in subsequent sections.

Let us first introduce some elementary operations involving toric grammars.
\begin{align*}
  e \oplus f & = \sum_{s \in \F{S}(e)} \delta_s + \sum_{s \in \F{S}(f)} \delta_s, & e , f \in \C{E}, \\
  e \ominus f & = \sum_{s \in \F{S}(e)} \delta_s - \sum_{s \in \F{S}(f)} \delta_s, & e , f \in \C{E}, \\
  \rho \otimes e & = \rho \sum_{s \in \F{S}(e)} \delta_s, & \rho \in \B{R}, e \in \C{E}, 
\end{align*}
The first operation builds a toric grammar containing expressions~$e$ and~$f$ with weights~$1$, and the third one builds a toric grammar containing expression~$e$ with weight~$\rho$. 

We can generalize these notations to be able to take the sum of a toric grammar and an expression, as well as the sum of two toric grammars.
\begin{align*}
  \C{G} \oplus e & = \C{G} + \sum_{s \in \F{S}(e)} \delta_s, & \C{G} \in \F{G}, e \in \C{E}\\ 
  \C{G} \ominus e & = \C{G} - \sum_{s \in \F{S}(e)} \delta_s, & \C{G} \in \F{G}, e \in \C{E}\\ 
  \C{G} \oplus \C{G}' & = \C{G} + \C{G}',  & \C{G}, \C{G}' \in \F{G}.
\end{align*}
With these notations, a split is described as 
\[
  \C{G}' = \C{G} \ominus ab \oplus a ]_i \oplus [_i b, \qquad 
\C{G}, \C{G}' \in \F{G},
\]
the fact that $\C{G}, \C{G}' \in \F{G}$ implying that 
\[
i \in \B{N} \setminus \{ 0 \}, ab, a \, ]_i, [_i b \in \C{E} \text{ and }  \C{G}(ab) \geq 1.
\]
The (partial) order relation~$\C{G} \leq \C{G}'$ will also be defined by the rule 
\[
  \C{G} \leq \C{G}' \iff \C{G}' - \C{G} \in \F{G},   
\]
or equivalently
\[
  \C{G} \leq \C{G}' \iff \C{G}' - \C{G} \in \C{M}_+(\C{E}).   
\]
Let us resume our example. Starting from the one sentence text 
\[
\C{T} = 1 \otimes [_0 \text{\it ~This is my friend Peter .}
\]
we get after splitting the grammar
\[
  \C{G} = [_0 \text{\it ~This is } ]_1 \text{\it ~Peter .} \oplus [_1 \text{\it ~my friend }
\]
which can also be written as
\[
  \C{G} = \text{\it ~Peter . } [_0 \text{\it ~This is } ]_1 \oplus [_1 \text{\it ~my friend}
\]
In this example, as well as in the following, punctuation marks are treated as words, so that here the required dictionary has to include~$\{$ is, my, friend, Peter, This, . $\}$.

Splitting a sentence providing a new label for each split does not create generalization, 
since it allows only to merge back two expressions that came from the same split. To create a grammar capable of yielding new sentences, we need some label identification scheme. We will perform label identification through the more general process of label remapping, identification being a consequence of the fact that the map may not be one to one. Let
\[
  \F{F} = \bigl\{ f: \B{N}
  \rightarrow \B{N} \text{ such that }  f(0) = 0   \bigr\} 
\]
be the set of label maps. For any symbol~$]_i$ or~$[_i$, $i \in \B{N}$, let us define~$f(\, ]_i) = ]_{f(i)}$ and~$f ([_i \, ) = [_{f(i)}$. Let us also define for any word~$w \in D$, $f(w) = w$ and for any expression~$e = (w_0, \dots, w_{\ell -1})$, $f(e) = ( f(w_0), \dots, f(w_{\ell -1}) )$. Since any grammar~$\C{G} \in \F{G}$ is a measure on the set of expressions~$\C{E}$, we can define its image measure by~$f$, considered as a map from~$\C{E}$ to~$\C{E}$. We will put $f(\C{G}) = \C{G} \circ f^{-1}$, meaning that $f(\C{G})(A) = \C{G} ( f^{-1} (A) )$, for any subset~$A \subset \C{E}$.

  \begin{dfn}\label{dfn:Isomorph}

  Two label maps~$f$ and~$g \in \F{F}$ are said to be isomorphic if there is a one to one label map~$h \in \F{F}$ such that $g = h \circ f$. In this case $h^{-1} \in \F{F}$ and $f = h^{-1} \circ g$. Two grammars~$\C{G}$ and~$\C{G}' \in \F{G}$ are said to be isomorphic if there is a one to one label map~$f \in \F{F}$ such that~$f(\C{G}) = \C{G}'$. In this case, $f^{-1} ( \C{G}' ) = \C{G}$ and we will write $\C{G} \equiv \C{G}'$. If~$f$ and~$g$ are two isomorphic label maps, then for any toric grammar~$\C{G} \in \F{G}$, $f(\C{G})$ and~$g( \C{G} )$ are isomorphic grammars. In the following of this paper, to ease notations and simplify exposition, we will freely identify isomorphic label maps and isomorphic grammars and often speak of them as if they were equal.
\end{dfn}

This being put, we proceed with the introduction of a set of grammar transformations~$\beta$ that consist in a split with possible label remapping. The \emph{split} will be the core component for generating a toric grammar from a text, by splitting the sentences in smaller parts (syntagms).
\begin{dfn}[Splitting rule]\label{dfn:Split}

For any~$\C{G} \in \F{G}$, let us consider
  \[
    \beta ( \C{G} ) = \Bigl\{ f(\C{G}'), f \in \F{F}, 
    \C{G}' \in \F{G}, \C{G}'  = \C{G} \, \ominus \, ab \, \oplus \, a ]_i 
    \, \oplus \, [_i b 
    \; \Bigr\} \subset \F{G}.
  \]
  Let us remark that in this definition, necessarily, $ab, a \, ]_i, [_i \, b \in \C{E}$, $i \in \B{N} \setminus \{0\}$, $1 \otimes ab \leq \C{G}$, and $a \,]_i \oplus [_i \, b \leq \C{G}'$. Let us put
  \[
    \beta^* (\C{G}) = \bigcup_{n=0}^{+ \infty} \underbrace{\beta \circ \dots \circ \beta}_{n \text{ times}}(\C{G}),
  \]
  the set of grammars that can be constructed from repeated invocations of~$\beta$.
\end{dfn}

\begin{lemma}\label{lm:Conservation}

  Let us recall that $S = D \cup \bigl\{ \, [_i, \, ]_i, i \in \B{N} \bigr\}$ and  let us put $S^* = \bigcup_{n=0}^{+ \infty} S^n$. For any text~$\C{T} \in \F{T}$, and any~$\C{G} \in \beta^*(\C{T})$, $\C{G}$ is a toric grammar with integer weights, 
  \begin{align*}
    \C{G}([_i S^*) & = \C{G}(]_i S^*), & i \in \B{N} \setminus \{0\},\\
    \C{G}(w S^*) & = \C{T}(w S^*), & w \in  \bigl( D \cup \{ [_0 \, \} \bigr),\\  
    \intertext{and in particular}
    \C{G} ( [_0 S^*) & = \C{T}([_0 S^*),\\
    \C{G}(w S^*) & \leq \C{T}(w S^*), & w \in \bigl( D \cup \{[_0 \, \} \bigr)^+.  
  \end{align*}
  This means that in any toric grammar obtained by splitting a text, the weights of expressions containing the two forms~$]_i$ and~$[_i$ of a label are balanced, the word frequencies are the same in the grammar and in the text, and the number of sentences contained in the text is given by the total weight of expressions containing the start symbol~$[_0$ in the grammar. 
\end{lemma}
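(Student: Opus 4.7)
I would prove \cref{lm:Conservation} by induction on the least number $n$ of applications of~$\beta$ needed to reach~$\C{G}$ from~$\C{T}$. The base case $n = 0$ is immediate: a text is supported by~$\F{S}([_0 \, D^+)$, so $\C{T}([_i \, S^*) = \C{T}(]_i \, S^*) = 0$ for every $i \in \B{N}\setminus\{0\}$, while the word-count equalities and the $[_0$-count equality are tautological and the subsequence inequality is an equality. For the inductive step I would decompose an arbitrary element of~$\beta(\C{G})$ as $\tilde{\C{G}} = f(\C{G}')$ where $\C{G}' = \C{G} \ominus ab \oplus a\,]_i \oplus [_i\, b$ with $i \in \B{N}\setminus\{0\}$ and $f \in \F{F}$, and then analyze the split and the relabeling in turn.

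For the split alone, $\C{G}(ab) \geq 1$ guarantees that $\C{G}'$ still has nonnegative integer weights, and the definition of~$\oplus, \ominus$ as sums over full circular orbits preserves toric invariance. Concatenating $a\,]_i$ and~$[_i\, b$ accounts for exactly the same multiset of words and of $[_0$ symbols as~$ab$ (the unique $[_0$ of a global expression ends up in one of the two halves, never duplicated), hence $\C{G}'(w\, S^*) = \C{G}(w\, S^*)$ for every $w \in D \cup \{[_0\}$. The split adds one $[_i$ and one $]_i$ with $i \neq 0$, so the balance $\C{G}'([_j \, S^*) = \C{G}'(]_j \, S^*)$ continues to hold for all $j \neq 0$, both sides being incremented by~$1$ when $j = i$. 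Turning to the remapping, the constraint that $\tilde{\C{G}} \in \F{G}$ forbids creation of the symbol~$]_0$, so $f$ cannot send to~$0$ any nonzero label that occurs as a closing bracket in~$\C{G}'$; by the just-established balance, such a label also occurs as an opening bracket, so in fact no nonzero label used in $\C{G}'$ is mapped to~$0$. Consequently $f$ is the identity on $D \cup \{[_0\}$ and on their preimages, giving $\tilde{\C{G}}(w\, S^*) = \C{G}'(w\, S^*)$ for every $w \in D \cup \{[_0\}$. Finally, for $i \neq 0$, summing the (inductive) balance over $\{j \neq 0 : f(j) = i\}$ yields $\tilde{\C{G}}([_i \, S^*) = \tilde{\C{G}}(]_i \, S^*)$.

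For the inequality $\C{G}(w \, S^*) \leq \C{T}(w \, S^*)$ with $w \in (D \cup \{[_0\})^+$, I would observe that $\C{G}(w\, S^*)$ counts, with weights, the cyclic occurrences of $w$ as a contiguous substring inside expressions of~$\C{G}$; the splitting step replaces $ab$ by $a\,]_i$ and $[_i\, b$, which can only destroy those occurrences of~$w$ that straddle the insertion point (since the new symbol~$]_i$, then a grammar boundary, then $[_i$, cannot match any letter of~$w$), while all other occurrences survive unchanged; the remapping step leaves $w$-patterns intact by the previous paragraph. Iterating along the induction yields the claimed inequality, with equality in the base case $\C{G} = \C{T}$. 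The main technical point I expect to have to argue carefully is that the $\oplus, \ominus$ notation really adds or subtracts the full circular orbit (so that no symmetry factor enters the bookkeeping), and that the single $[_0$ per global expression prevents any spurious cyclic match of a pattern containing $[_0$; once those are settled, each step of the induction is just additive counting.
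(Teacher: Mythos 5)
Your proof is correct and follows the same inductive strategy as the paper's: track the effect of a single split on bracket counts and prefix weights, then iterate over applications of $\beta$. It is in fact more thorough than the paper's own proof, which treats only the balance and single-symbol frequency assertions and glosses over the label remapping $f$, the preservation of integer weights, and the multi-symbol inequality $\C{G}(w\,S^*) \leq \C{T}(w\,S^*)$, all of which you handle explicitly (in particular your observation that the balance equation plus the absence of $\,]_0$ from $\C{E}$ forces $f$ to avoid collapsing used labels onto $0$ is exactly the point the paper leaves implicit).
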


\begin{proof}
  For the first assertion, an induction on the number of applications of~$\beta$ yields the result, since
  \[
    \C{T}([_i S^*) = \C{T}(]_i S^*)=0, i \in \B{N} \setminus \{0\},
  \]
  and, for any~$\C{G}' = \C{G} \ominus 
ab \oplus a \, ]_i \oplus [_i \, b$, and any label $j \in \B{N} \setminus 
\{ 0 , i \}$,  
  \begin{gather}
    \C{G}'([_j S^*)=\C{G}([_j S^*),\\
    \C{G}'(]_j S^*)=\C{G}(]_j S^*), 
  \end{gather}
whereas
  \begin{gather}
    \C{G}'([_i S^*)=\C{G}([_i S^*)+1,\\
    \C{G}'(]_i S^*)=\C{G}(]_i S^*)+1.
  \end{gather}
  
  For the second assertion, it suffices to remark that the weight of 
expressions beginning with a given word is invariant by application of~$\beta$. 
Indeed, any word symbol $w \in D \cup \{ [_0 \}$ appears the same number 
of times at the beginning of an expression of $1 \otimes ab$ and of $a \, ]_i 
\oplus [_i \, b$. 
\end{proof}

This lemma is important, because we will subsequently impose restrictions on the splitting rule based on word frequencies. Our choice to define a new type of grammar as a positive measure on symbol sequences was made to keep track of word frequencies throughout the construction.

Let us now describe the reverse of a splitting transformation, that we will call a merge transformation. This transformation will be central in generating new texts from a toric grammar, by merging the syntagms into bigger ones, ending with a full sentence.

\begin{dfn}[Merge rule]\label{dfn:Merge}
  For any toric grammar~$\C{G} \in \F{G}$ we consider the following set of allowed merge transformations
  \[
    \alpha(\C{G}) = \Bigl\{ \; \C{G}' \in \F{G}, \C{G}' = \C{G} \, \ominus \, a \, ]_i \, \ominus \, [_i \, b \, \oplus \, ab \;  \Bigr\}.  
  \]
  Let us remark that in this definition, necessarily $i \in \B{N} \setminus \{0\}$, $a \, ]_i, [_i\, b, ab \in \C{E}$, and $a \, ]_i \oplus [_i \, b \leq \C{G}$.
\end{dfn}

The merge transformation is indeed the reverse of the \emph{split}, in the sense that:

\begin{lemma}\label{lm:Merge}
  For any~$\C{G}, \C{G}'\in \beta^*(\F{T})$, $\C{G}'\in\beta(\C{G})$ if, and only if, there is~$f \in \F{F}$ such that $f(\C{G}) \in \alpha(\C{G}')$.
\end{lemma}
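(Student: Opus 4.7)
The plan is to exploit the fact that split and merge are exact inverse operations at the level of signed measures on $\C{E}$, and that the label map $f$ appearing in \cref{dfn:Split} can be shifted across the defining identity via linearity of the pushforward. Both directions of the equivalence will then reduce to rearranging a single equation after this linearity observation.

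For the forward implication, I would unpack $\C{G}' \in \beta(\C{G})$: there exist $f \in \F{F}$, $i \in \B{N}\setminus\{0\}$ and an expression $ab \in \C{E}$ with $\C{G}(ab) \geq 1$ such that $\C{G}' = f(\C{G} \ominus ab \oplus a]_i \oplus [_ib)$. Linearity of the pushforward, together with the fact that it commutes with circular permutations since it acts label-by-label, will give
\[
\C{G}' = f(\C{G}) \ominus f(ab) \oplus f(a)]_{f(i)} \oplus [_{f(i)}f(b),
\]
so that rearrangement immediately puts $f(\C{G})$ in merge form with respect to $\C{G}'$. The side conditions for $f(\C{G}) \in \alpha(\C{G}')$, namely $f(i) \neq 0$, the three involved expressions lying in $\C{E}$, and $f(a)]_{f(i)} \oplus [_{f(i)}f(b) \leq \C{G}'$, will all follow from $\C{G}' \in \F{G}$.

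For the converse, I would start from $g(\C{G}) \in \alpha(\C{G}')$, i.e.\
\[
\C{G}' = g(\C{G}) \ominus ab \oplus a]_j \oplus [_jb
\]
with $g(\C{G})(ab) \geq 1$. The pushforward identity $g(\C{G})(ab) = \sum_{e : g(e) = ab} \C{G}(e)$ will yield an expression $a^*b^* \in \C{E}$ with $\C{G}(a^*b^*) \geq 1$ and $g(a^*b^*) = ab$; since $g$ acts only on labels, the split point of $ab$ will lift canonically to a split of $a^*b^*$ into $a^*$ and $b^*$ with $g(a^*) = a$ and $g(b^*) = b$. I would then pick a fresh label $k \in \B{N}\setminus\{0\}$ absent from $\C{G}$ and extend $g$ to $f \in \F{F}$ by $f(k) = j$ and $f = g$ elsewhere. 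Since $k$ does not appear in $\C{G}$ we will have $f(\C{G}) = g(\C{G})$, and pushforward linearity will then produce
\[
f\bigl( \C{G} \ominus a^*b^* \oplus a^*]_k \oplus [_kb^* \bigr) = g(\C{G}) \ominus ab \oplus a]_j \oplus [_jb = \C{G}',
\]
witnessing $\C{G}' \in \beta(\C{G})$.

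The main delicate point, in both directions, will be checking that all intermediate grammar pieces genuinely lie in $\F{G}$, i.e.\ that the inserted expressions of the form $a]_i$, $[_ib$, $a^*]_k$ and $[_kb^*$ really belong to $\C{E}$, in particular avoiding the excluded degenerate shape $[_m]_k$ and carrying a well-defined circular skeleton. I expect this to reduce to the observation that the matching expressions $a]_j$ and $[_jb$ already belong to $\C{E}$ by $\C{G}' \in \F{G}$, and that the label-only action of $g$ (resp.\ $f$) preserves the positional skeleton of every expression, so structural validity transfers cleanly between any expression and its preimage.
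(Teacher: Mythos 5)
Your proof is correct and follows essentially the same route as the paper's proof. The forward direction is identical: push the split identity through $f$ by linearity and rearrange to exhibit the merge. In the converse direction, your device of picking a fresh label $k \notin \supp(\C{G})$, setting $f(k) = j$, and keeping $f = g$ elsewhere is the same idea as the paper's modification of $f$ outside $\{m : [_m S^* \in \supp(\C{G})\}$ so that the merge label lies in $f(\B{N})$; yours is a cleaner way to phrase it, and the freshness of $k$ is unproblematic since $\C{G}$ has finite support. The one point worth tightening in a final write-up is the forward-direction claim that $f(ab) \in \C{E}$ ``follows from $\C{G}' \in \F{G}$'': the precise reason is that every label occurring in $ab$ already occurs in $a]_i$ or $[_ib$, which lie in the support of the intermediate split grammar $\C{G}''$, and $f(\C{G}'') = \C{G}' \in \F{G}$ forces $f$ to send those labels to nonzero values — but this is immediate, so the plan is sound.
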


\begin{proof}
  Let us suppose that $\C{G}' = f \bigl(\C{G} \oplus a \, ]_i \oplus [_i \, b \ominus ab \bigr)$ is in $\beta(\C{G})$. Then $\C{G}' = f\bigl( \C{G} \bigr) \oplus f(a \, ]_i) \oplus f([_i \, b) \ominus f(ab)$, so that $f(a \, ]_i), f([_i \, b) \in \supp\bigl(\C{G}'\bigr)$, $f(ab) \in \supp\bigl( f(\C{G}) \bigr)$, and consequently $f \bigl(a \, ]_i \bigr), f \bigl( [_i b \bigr)$ and $ f(ab) \in \C{E}$. Moreover $f(\C{G}) = \C{G}' \oplus f(a) f(b) \ominus f(a) \, ]_{f(i)} \ominus [_{f(i)} \, f(b)$, so that $f(\C{G}) \in \alpha(\C{G}')$.

  On the other hand, if for some~$f \in \F{F}$, $f(\C{G}) \in \alpha \bigl( \C{G}' \bigr)$, $f(\C{G}) = \C{G}' \oplus a b \ominus a \, ]_i \ominus [_i \, b$. Since $ab \in \supp \bigl( f(\C{G}) \bigr)$, there is~$e \in \C{E}$ such that~$f(e) = ab$. But this implies that there is~$c, d \in S^+$ such that~$a = f(c)$ and~$b = f(d)$. We can then if needed modify~$f$ outside $\bigl\{ j \in \B{N}: [_j \, S^* \in \supp(\C{G}) \bigr\}$, to make sure that $i \in f(\B{N})$. Let~$f(j) = i$. We now get that $f\bigl(\C{G}\bigr) = \C{G}' \oplus f(c) f(d) \ominus f(c) \, ]_{f(j)} \ominus [_{f(j)} \, f(d)$, so that $\C{G}' = f \bigl( \C{G} \oplus c \, ]_j \oplus [_j \, d \ominus cd \bigr)$, proving that $\C{G}' \in \beta(\C{G})$.
\end{proof}

\medskip
Another useful property of the merge rule is given by the following lemma:
\begin{lemma}\label{lm:MergeLabel}
  For any~$f \in \F{F}$ and any~$\C{G} \in \F{G}$, $f \bigl( \alpha( \C{G} ) \bigr) \subset \alpha \bigl( f(\C{G}) \bigr)$.
\end{lemma}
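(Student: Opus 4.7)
Let $\C{G}'' \in f(\alpha(\C{G}))$, so that $\C{G}'' = f(\C{G}')$ for some $\C{G}' \in \alpha(\C{G})$. By \cref{dfn:Merge}, we may write
\[
\C{G}' = \C{G} \ominus a\,]_i \ominus [_i\,b \oplus ab
\]
with $i \in \B{N}\setminus\{0\}$, $a\,]_i, [_i\,b, ab \in \C{E}$ and $a\,]_i \oplus [_i\,b \leq \C{G}$. The plan is to push this identity forward by $f$ and to recognise the result as a merge applied to $f(\C{G})$.

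Since the pushforward $\mu \mapsto \mu \circ f^{-1}$ is linear on signed integer-valued measures, and since the componentwise action of $f$ on symbol sequences commutes with concatenation and with circular permutation, one has $f(a\,]_i) = f(a)\,]_{f(i)}$, $f([_i\,b) = [_{f(i)}\,f(b)$ and $f(ab) = f(a)f(b)$, hence
\[
f(\C{G}') = f(\C{G}) \ominus f(a)\,]_{f(i)} \ominus [_{f(i)}\,f(b) \oplus f(a)f(b).
\]
Setting $a' = f(a)$, $b' = f(b)$, $i' = f(i)$, this matches exactly the syntactic pattern of a merge applied to $f(\C{G})$, provided that the side conditions of \cref{dfn:Merge} are met.

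Those side conditions split into three parts. The inequality $a'\,]_{i'} \oplus [_{i'}\,b' \leq f(\C{G})$ follows from $a\,]_i \oplus [_i\,b \leq \C{G}$ by monotonicity of the pushforward on positive measures. The three expressions $a'\,]_{i'}, [_{i'}\,b', a'b'$ belong to $\C{E}$ because $f$ preserves the distinguishing feature of $\C{E}$, namely the existence of a unique opening bracket, inherited from the preimage. The remaining requirement $i' \in \B{N}\setminus\{0\}$ is the delicate point, and in my view the main obstacle: one must invoke the identification convention of \cref{dfn:Isomorph} to replace $f$, if needed, by an equivalent representative avoiding the degenerate case $f(i) = 0$, in the same spirit as the readjustment of $f$ carried out in the proof of \cref{lm:Merge}.
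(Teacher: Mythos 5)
Your proof takes essentially the same route as the paper's: by linearity of the image measure, push the merge identity forward through~$f$ and recognize the result as a merge applied to~$f(\C{G})$. Where you add value is the explicit verification of the side conditions, which the paper's two-line computation leaves implicit.

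The ``delicate point'' you flag, namely $f(i) \in \B{N} \setminus \{0\}$, is not however a genuine obstacle, and the remedy you sketch would fail if it were. Since $a \, ]_i \oplus [_i \, b \leq \C{G}$, the expression~$a \, ]_i$ (up to circular permutation) lies in $\supp(\C{G})$; for $f(\C{G})$ to be a toric grammar, as the statement of the lemma presupposes, one needs $f(a \, ]_i) = f(a) \, ]_{f(i)} \in \C{E}$, and the definition of $\C{E}$ excludes~$]_0$, closing brackets being required to come from~$]_+$. Hence $f(i) \neq 0$ is automatic whenever $f(\C{G})$ makes sense, and no readjustment is needed. Moreover, composing~$f$ with a one-to-one $h \in \F{F}$ could not repair a hypothetical $f(i) = 0$: since $h(0) = 0$ by definition of~$\F{F}$, one would still have $h(f(i)) = 0$, so every label map isomorphic to~$f$ would share the same degeneracy. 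The readjustment invoked in the proof of \cref{lm:Merge} is of a different kind, as it modifies~$f$ only outside the set of labels occurring in~$\C{G}$, leaving $f(\C{G})$ unchanged; here~$i$ does occur in~$\C{G}$, so that freedom is not available.
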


\begin{proof}
  Indeed, any~$\C{G}' \in f \bigl( \alpha( \C{G} ) \bigr)$ is of the form
  \begin{align*}
    \C{G}' & = f \bigl( \C{G} \oplus ab \ominus a \, ]_i \ominus [_i \, b \bigr) \\ 
    & = f \bigl( \C{G} \bigr) \oplus f(a) f(b) \ominus f(a) \, ]_{f(i)} \ominus [_{f(i)} \, b \in \alpha \bigl( f(\C{G}) \bigr).
  \end{align*}
\end{proof}

\medskip
Unfortunately, repeating the merge transformation will not provide a text in all circumstances. Indeed, we can end up with some expressions of the type~$[_i \, a \, ]_i b$. However, since an expression is allowed to contain only one opening bracket, we are sure that $[_0 \, \not\in \supp( [_i \, a \, ]_i \, b )$.

To continue the discussion, we will switch to a random context, where split and merge transformations are performed according to some probability measure.

\section{Random split and merge processes}

The grammars we described so far are obtained using splitting rules. Texts can be reconstructed using merge transformations. The splitting rules as well as the merge rules allow for multiple choices at each step. We will account for this by introducing random processes where these choices are made at random.

We will describe two types of random grammar transformations. Each of these will appear as a finite length Markov chain, where the length of the chain is given by a uniformly bounded stopping time.  
\begin{itemize}
  \item The learning process (or splitting process) will start with a text and build a grammar through iterated splits;
  \item the production process will start with a grammar and produce a text through iterated merge operations.
\end{itemize}
These two types of processes may be combined into a split and merge process, going back and forth between texts and toric grammars.

Let us give more formal definitions. Learning and parsing processes will be some special kinds of splitting processes, to be defined hereafter.

\begin{dfn}[Splitting process]\label{dfn:SplitProc}
  Given some restricted splitting rule~$\beta_r: \F{G} \rightarrow 2^{\F{G}}$ from the set of grammars to the set of subsets of~$\F{G}$, such that for any~$\C{G} \in \F{G}$, $\beta_r(\C{G}) \subset \beta(\C{G})$, a \emph{splitting process} is a time homogeneous stopped Markov chain~$S_t, 0 \leq t \leq \tau$ defined on~$\F{G}$
such that 
  \[
    \tau = \inf \bigl\{ t \in \B{N}: \beta_r(S_t) = \varnothing \bigr\},
  \]
\[
\B{P} \bigl( S_t = \C{G}'\,|\,S_{t-1} = \C{G} \bigr) > 0 \iff \C{G}' 
\in \beta_r(\C{G}).
\]
\end{dfn}

\begin{dfn}[Production process]\label{dfn:ProdProc}

  A \emph{production process} is a time homogenous stopped Markov chain~$P_t, 
0 \leq t \leq \sigma$ defined on~$\F{G}$ such that 
  \[
    \sigma = \inf \bigl\{ t \in \B{N}, \alpha(P_t) = \varnothing \bigr\},
  \]
and 
\[
\B{P} \bigl( P_t = \C{G}' \, | \, P_{t-1} = \C{G} \bigr) > 0 \iff \C{G}' \in 
\alpha(\C{G}).
\]
\end{dfn}

\begin{dfn}[Split and Merge process]\label{dfn:SplitMergeProc}

  Given a splitting process~$S_t, t \in \B{N}$ and a production process~$P_t, t \in \B{N}$, a \emph{split and merge process} is a Markov chain~$G_t \in \F{G}$, $t \in \B{N}$, with transitions
  \begin{align*}
    \B{P} \bigl( G_{2t+1} = \C{G}' \, | \, G_{2t} = \C{G} \bigr) & =
    \B{P} \bigl( S_{\tau} = \C{G}' \, | \, S_0 = \C{G} \bigr), & t \in \B{N},\\
    \B{P} \bigl( G_{2t} = \C{G}' \, | \, G_{2t-1} = \C{G} \bigr) & = 
    \B{P} \bigl( P_{\sigma} = \C{G}' \, | \, P_0 = \C{G}, P_{\sigma} \in \F{T} 
    \bigr), & t \in \B{N} \setminus \{ 0 \},
  \end{align*}
  whose initial distribution is a probability measure on 
texts, so that almost surely $G_0 \in \F{T}$.
\end{dfn}
Let us remark that we have to impose the condition that~$P_\sigma \in \F{T}$, because the production process does not produce a true text with probability one. On the other hand it can yield back~$G_{2t -2}$ with positive probability 
when started at~$G_{2t - 1}$, as will be proved later on.
Therefore $\B{P} ( P_{\sigma} \in \F{T} \, | \, P_0 = \C{G} ) > 0 $ for 
any~$\C{G}$ such that $\B{P}(G_{2t-1} = \C{G} ) > 0$.
One way to simulate $\B{P}_{G_{2t} \, | \, G_{2t-1}}$ is to use a rejection method, simulating repeatedly from the production process until a true text is produced. In the experiments we made, $\B{P} \bigl( P_{\sigma} \in \F{T} 
\, | \, P_0 = \C{G} \bigr)$ was close to one and rejection a rare event. 

\begin{prop}\label{pp:BoundSMProc}

Let~$S_t$, $P_t$ and~$G_t$ be a splitting process, a production process and 
the corresponding split and merge process, starting from 
$G_0 = \C{T} \in \F{T}$. 
For any $\C{G} \in \F{G}$, any $\C{T}' \in \F{T}$, such that 
$\sum_{t \in \B{N}} \B{P}(G_{2t+1} = \C{G}) > 0$ and $
\sum_{t \in \B{N}} \B{P} ( G_{2t} = \C{T}') > 0$,
\begin{gather}
  \B{P} \Bigl( \tau \leq 2 \bigl[ \C{T} \bigl( D S^* \bigr) - \C{T} 
\bigl( [_0 \, S^* \bigr) \bigr] \, \bigl| \, S_0 = \C{T}' \Bigr)  = 1,\\
  \B{P} \Bigl( \sigma \leq 2 \bigl[ \C{T} \bigl( D S^* \bigr) - 
\C{T} \bigl( [_0 \, S^* \bigr) \bigr] \, \bigl| \, P_0 = \C{G} \Bigr) = 1.
\end{gather}
In other words, the length of all the splitting and production processes involved in the split and merge process have a uniform bound, given by twice the difference between the number of words and the number of sentences in the original text.
\end{prop}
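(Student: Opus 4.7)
The plan is to introduce the nonnegative integer potential
\[
\phi(\C{G}) = \C{G}\bigl([_+\, S^*\bigr) = \sum_{i \in \B{N} \setminus \{0\}} \C{G}\bigl([_i\, S^*\bigr),
\]
namely the total weight of non-start opening brackets in $\C{G}$, and to track its evolution along the chain. Each step of any splitting process performs an elementary split $ab \mapsto a\,]_i \oplus [_i\, b$ (which introduces a fresh $[_i$ with $i \neq 0$) followed by a label identification $f \in \F{F}$ (which fixes the start label $0$), so $\phi$ increases by exactly $1$ per step. Conversely each merge $a\,]_i \oplus [_i\, b \mapsto ab$ deletes exactly one such $[_i$, so $\phi$ decreases by $1$ per step of any production process. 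Since $\phi(\C{T}') = 0$ for every text $\C{T}'$ by \cref{dfn:Texts} and $\phi$ is nonnegative, both $\tau$ and $\sigma$ will be controlled once I obtain a uniform upper bound on $\phi$ along the chain.

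The second step is to verify that the word count $W = \C{T}(DS^*)$ and the sentence count $s = \C{T}([_0\, S^*)$ are preserved throughout the entire split-and-merge process. For splits this is exactly \cref{lm:Conservation}; for merges it follows from the same lemma combined with \cref{lm:Merge}, since a merge is, up to a label remapping in $\F{F}$ (which fixes $0$ and every element of $D$), the reverse of a split. Therefore any text $\C{T}'$ and any grammar $\C{G}$ appearing with positive probability in the chain satisfy $\C{T}'(DS^*) = \C{G}(DS^*) = W$ and $\C{T}'([_0\, S^*) = \C{G}([_0\, S^*) = s$.

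The heart of the argument, and what I expect to be the main obstacle, is a combinatorial bound $\phi(\C{G}) \leq 2(W - s)$ valid for every such $\C{G}$. \cref{lm:Conservation} also gives $\C{G}(]_+\, S^*) = \phi(\C{G})$, and the total weight of the expressions of $\C{G}$ equals $s + \phi(\C{G})$ because each expression contains exactly one opening bracket. I partition these expressions into those containing at least one word (total weight $A$) and those containing none (total weight $B$), so $A + B = s + \phi(\C{G})$. The crucial structural fact is that an expression $[_i\, a \in \C{E}$ with no word has $a$ consisting only of closing brackets and of length at least $2$, since $[_i\,]_j$ is explicitly excluded from $\C{E}$. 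This yields $W \geq A$ and $\phi(\C{G}) \geq 2B$, and combining these with $A + B = s + \phi(\C{G})$ gives $\phi(\C{G}) \leq 2(W - s)$ by elementary algebra.

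The conclusion is then immediate. For a splitting process starting at $\C{T}'$, $\phi$ begins at $0$, increases by $1$ per step and remains $\leq 2(W - s)$, whence $\tau \leq 2(W - s) = 2[\C{T}(DS^*) - \C{T}([_0\, S^*)]$. For a production process starting at $\C{G}$, $\phi(\C{G}) \leq 2(W - s)$ by the combinatorial bound, and $\phi$ decreases by $1$ per step while remaining nonnegative, so $\sigma \leq \phi(\C{G}) \leq 2(W - s)$.
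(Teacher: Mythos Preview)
Your proof is correct and follows the same overall strategy as the paper's: introduce the potential $\phi(\C{G}) = \C{G}([_+ S^*)$ (the paper's $W_l$), observe that it moves by exactly $\pm 1$ under each split/merge while the word count $W$ and the sentence count $s$ are conserved, and then establish the combinatorial bound $\phi \leq 2(W-s)$.

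The only genuine difference is in how that last inequality is obtained. The paper partitions canonical expressions by \emph{length} (length $2$ versus length $\geq 3$), which requires introducing the auxiliary symbol count $W_s(\C{G})$ and the extra invariant $W_s - 2W_e$; the bound then comes out as $W_e \leq W_w + (W_s - 2W_e)$. You instead partition expressions by whether they \emph{contain a word}, and control the wordless ones directly by the closing-bracket count using the structural fact $[_i\,]_j \notin \C{E}$. Your route is a bit more economical, since it never needs $W_s$. The paper's detour, on the other hand, has the side benefit of producing the symbol bound $W_s(\C{G}) \leq 5W - 3s$, which is precisely what is reused in the proof of \cref{pp:FiniteState}.
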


\begin{proof}
  This proof is a bit lengthy and is based on some invariants in the split and merge operations. It has been put off to \vref{App:BoundSMProc}.
\end{proof}

\begin{prop}\label{pp:FiniteState}

  If~$G_t$ is a split and merge process starting almost surely from the text~$G_0 = \C{T} \in \F{T}$, there is a finite subset of toric grammars~$\F{G}_{\C{T}}$ such that with probability equal to one there is for each time~$t$ a grammar~$G'_t$ isomorphic to~$G_t$ such that $G'_t \in \F{G}_{\C{T}}$. Thus, after identification of isomorphic grammars, we can analyze the split and merge process as a finite state Markov chain, since the reachable set from
any starting point is finite. We should however keep in mind that the finite state space~$\F{G}_{\C{T}}$ depends on the initial state~$\C{T}$, so the state space is still infinite, although any trajectory will almost surely stay in a finite subset of reachable states.
\end{prop}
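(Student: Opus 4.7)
The plan is to show that, starting from $\C{T}$, only finitely many texts and only finitely many grammars up to isomorphism are reachable by the split and merge chain, and then take $\F{G}_{\C{T}}$ to be a finite system of representatives of all reachable states.

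The first step I would carry out is to extend \cref{lm:Conservation} so that it covers arbitrary compositions of $\beta$ and $\alpha$, not only iterated splits. The elementary operation $ab \leftrightarrow a\,]_i \oplus [_i\,b$ only rearranges word symbols and adds or removes a balanced pair of auxiliary labels, and relabeling by $f \in \F{F}$ does not affect word-level statistics. Hence for every state $G_t$ of the chain, one has $G_t(w S^*) = \C{T}(w S^*)$ for all $w \in D \cup \{[_0\}$; in particular the total word count, the number of sentences, and the multiplicities of each word are constant along trajectories.

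For the text states $G_{2t} \in \F{T}$, this alone gives finiteness, because a text is an unordered multiset of sentences whose disjoint union must equal a fixed word multiset of fixed cardinality: only finitely many such multisets exist. For the grammar states $G_{2t+1}$, I would combine conservation with \cref{pp:BoundSMProc}: the preceding splitting phase has length at most $\tau_{\max} := 2[\C{T}(DS^*) - \C{T}([_0\,S^*)]$, so at most $\tau_{\max}$ new auxiliary labels were introduced. Choosing the label remapping appropriately, the grammar is isomorphic to one whose auxiliary labels lie in $\{1, \dots, \tau_{\max}\}$. The total weight of $G_{2t+1}$ is bounded (by conservation, together with the balance $\C{G}([_i\,S^*) = \C{G}(]_i\,S^*)$ and the bound $\C{G}([_i\,S^*) \le \tau_{\max}$), hence the support consists of finitely many expressions, each of bounded length, with symbols in the finite alphabet $D \cup \{[_i, ]_i : 0 \le i \le \tau_{\max}\}$. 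Thus the set of isomorphism classes of reachable grammars is finite.

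The main obstacle will be the first step, namely making the conservation law precise across intertwined $\beta$ and $\alpha$ operations and confirming that the bound on auxiliary labels survives repeated cycles of the chain, not merely a single splitting phase. \cref{lm:Merge} and \cref{lm:MergeLabel} give exactly the commutation needed to push the induction of \cref{lm:Conservation} through a merge. Once that is established, the counting argument above is elementary, and the proposition follows by letting $\F{G}_{\C{T}}$ be a set of representatives for all reachable texts and grammar isomorphism classes.
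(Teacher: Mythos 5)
Your proposal follows essentially the same strategy as the paper's proof: use the conservation laws together with the bound on the number of labels derived in \cref{pp:BoundSMProc} to conclude that every reachable grammar is, up to isomorphism, built from a bounded number of expressions of bounded length over a finite alphabet, hence that only finitely many isomorphism classes can occur. Your worry about whether the conservation invariants survive alternating splits and merges is already resolved inside the proof of \cref{pp:BoundSMProc}, where the invariants are established for all grammars visited with positive probability by the split and merge chain (not only after a single splitting phase); the paper then simply makes the counting explicit by encoding the canonical decomposition as a string of exactly $M = 5\C{T}(DS^*)-3\C{T}([_0\,S^*)$ symbols over the alphabet $S_{\C{T}}$, yielding the concrete bound $\lvert S_{\C{T}}\rvert^{M}$.
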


\begin{proof}
Let us assume that the labels of~$\C{G}$ are taken from~$\llbracket 0, 
W_{\ell}(\C{G}) \rrbracket$, meaning that 
$\C{G} ( [_i \, S^* ) = 0$ for~$i > W_{\ell}(\C{G})$. 
This can be achieved, up to 
grammar isomorphisms, by applying to~$\C{G}$ a suitable label map.\\
Let us define the set of canonical expressions 
$$
\C{E}_c = \C{E} \cap \Biggl( \; \bigcup_{i \in \B{N}} [_i \, S^* \Biggr),
$$
and the canonical decomposition of~$\C{G}$ 
\[
\C{G} = \sum_{e \in \C{E}_c} \C{G}(e) \otimes e.
\]
We see that $\C{G}$ can be described by the concatenation of the 
canonical expressions, each repeated a number of times equal 
to its weight, to form a sequence of symbols of length $W_s(\C{G})$.
From the proof of the previous proposition, we know that
\[
W_s(\C{G}) \leq M = 5 W_{w}(\C{T}) - 3 W_e(\C{T}) = 5 \C{T}(DS^*) - 
3 \C{T}([_0 S^*).
\]
We can represent $\C{G}$ by a sequence of exactly $M$ symbols by 
padding with trailing $[_0$ symbols the representation described above.  
Let us give an example
\[
\C{G} = 2 \otimes [_0 \, w_1 \, ]_1 w_2 \oplus [_1 \, w_3 \oplus [_1 
\, w_4
\]
can be coded as
\[
[_0 \, w_1 \, ]_1 w_2 \, [_0 \, w_1 \, ]_1 w_2 \, [_1 \, w_3 \, [_1 \, w_4 
\, [_0 \, [_0 \, [_0
\]
in the case when $M = 15$. 
Let us consider the set of symbols 
\[
S_{\C{T}} = D \cup \bigl\{ [_0 \, , [_i \, , \, ]_i, 0 < i \leq 
2 \bigl[ \C{T}(DS^*) - \C{T}([_0 \, S^*) \bigl] \bigr\}.
\]
Since $\C{G}$ uses only those symbols, we see from the proposed coding 
of $\C{G}$ that it can take at most 
\[
\lvert S_{\C{T}} \rvert^{M}
\] 
different values.
Since 
\[
\lvert S_{\C{T}}  \rvert = 
\lvert D \rvert + 1 + 4 \bigl[ \C{T}(D S^*) - \C{T}([_0 S^*) \bigr] 
\]
we have proved that 
\[
\lvert \F{G}_{\C{T}} \rvert \leq 
\Bigl( \lvert D \rvert + 1 + 4 \bigl[ \C{T}(D S^*) - \C{T}([_0 S^*) \bigr]
\Bigr)^{\ds 5 \C{T}(D S^*) - 3 \C{T} ( [_0 \, S^*)}.
\]
Let us notice that this bound, while being finite, is very large. 
\end{proof}

\section{Splitting rules and label identification}

In the previous section, we introduced some class of random processes, and studied some of their general properties. In this section, we are going to describe some more specific schemes and go further in the description of split and merge processes that can learn toric grammars in a satisfactory way.

The choice of splitting rules and label identification rules has a decisive influence on the way syntactic categories and syntactic rules are learnt by the split and merge process. While it is necessary as a starting point to consider rules learnt from the text to be parsed itself, it will also be fruitful to consider the case when a previously learnt grammar~$\C{R} \in \F{G}$ can be used to govern the splits.

To make things easier to grasp, let us explain on some example the basics of syntactic generalization by label identification. Let us start with the simple text with two sentences.
\[
  G_0 = \C{T} = [_0 \, \text{ This is my friend Peter . } \oplus [_0 \, \text{ This is my neighbour John .}
\]
If we split ``my friend'' and ``my neighbour'' in the two sentences using the same label, we will form after two splits the grammar
\begin{align*}
  G_1 & = [_0 \, \text{ This is } \, ]_1 \text{ Peter .} 
  \; \oplus \; [_0 \, \text{ This is } ]_1 \text{ John .} 
  \\ & \; \oplus \; [_1 \, \text{ my friend } \; \oplus \; [_1 \, \text{ my neighbour }
\end{align*}
If no more splits are allowed and we therefore reached the stopping time of the splitting process, so that $\tau = 2$, we can proceed to the production process, and reach after two more steps the new text~$G_2$ that can either be $G_2 = G_0$ or
\[
  G_2 = [_0 \, \text{ This is my neighbour Peter . } \oplus [_0 \, \text{ This is my friend John . }
\]

Now is a good time to remind the reader of the distinction made in \vref{df*:localglobal} about local and global expressions.

Legitimate local expressions will be provided by the reference grammar 
$\C{R}$, whereas global expressions will be deduced from the text 
itself. This approach will be particularly efficient in the case
when the set of local expressions is smaller than the set of global 
expressions. 

We will need two different kinds of split processes, one 
to learn the reference grammar from a text and the other one 
to perform the first part of the transitions of the 
communication Markov chain.  

These split processes may be viewed 
as performing some parsing of the text they are applied to. 
Here, we do not use parsing as it is usually used to discover 
whether a sentence is correct or not, we use it instead to 
discover new expressions. 

We will start by defining the parsing rules to be used in 
the communication chain.
We will call them \emph{narrow} parsing rules.  
We will then proceed to the definition 
of a \emph{broad} parsing rule suitable for learning 
the reference grammar $\wh{\C{R}}(\C{T})$ from a text.

\begin{dfn}\label{dfn:NarrowParse}

  Let us define the narrow parsing rule with reference grammar~$\C{R}$  as
  \begin{align*}
    \beta_{n} \bigl( \C{G}, \C{R} \bigr) = \Bigl\{ & \; \C{G}' \in \F{G}: \C{G}' = \C{G} \, \oplus a \, ]_i \, \oplus \, [_i \, b \, \ominus \, ab, \\ & \; ab \in \C{E}_g, \; \C{R} \bigl( [_i \, b \bigr) > 0 \; \Bigr\}, \quad \C{G} \in \F{G}.
  \end{align*}
  Let us remark that, due to the definition of the set of expressions~$\C{E}$ and of~$\F{G} \subset \C{M}_+(\C{E})$, the fact that $\C{G}$ and $\C{G'} \in \F{G}$ implies that $i \in \B{N} \setminus \{0\}$ in this definition, since necessarily $a \, ]_i , [_i \, b \in \C{E}$. It implies also that $[_0 \in \supp(a)$, a condition equivalent to $ab \in \C{E}_g$.
  
  The narrow parsing rule depends on $\C{R}$ only through $\supp(\C{R}) \cap \C{E}_l$. 

Let us define the broad parsing rule as 
  \begin{align*}
    \beta_b \bigl(\C{G}, \C{R} \bigr)  =  \Bigl\{ \; & \C{G}' \in \F{G}: \C{G}' = \C{G} \oplus a \, ]_i \oplus [_i \, b \ominus a b, \\
    & \C{R}\bigl( a \, ]_i \bigr) + \C{R} \bigl( [_i \, b 
\bigr) > 0, 
 \C{R} \bigl( a S^* \bigr) \leq \mu_1 \C{R} \bigl( [_0 \, S^* \bigr), \\
& \text{ and } \C{R} \bigl( b S^* \bigr) \leq \mu_2 \C{R} \bigl( [_0 \, 
S^* \bigr) 
\; \Bigr\}, \qquad \C{G}, \C{R} \in \F{G}, 
  \end{align*}
where $\mu_1, \mu_2 \in \B{R}_+$ are two positive real parameters.
\end{dfn}
Since the reference grammar is under construction 
during broad parsing,  
we will mainly use this rule with $\C{R} = \C{G}$, 
as will be explained later.
The same learning parameters $\mu_1$ and $\mu_2$ are present here 
and in the innovation rule to be described next. 
They serve to split expressions into
sufficiently infrequent halves, in order to constrain the 
model. 

Let us define now maximal sequences, a notion that will be needed to define learning rules. 
\begin{dfn}\label{dfn:MaxSeq}
  Given some toric grammar~$\C{G}$, we will say that $a \in 
S^+$ is $\C{G}$-maximal and write $\ds a \in \max(\C{G})$ when 
  \[
    \C{G}(a S^*) > \max \bigl\{ \C{G}(awS^*), \C{G}(waS^*), w \in S \bigr\}. 
  \]
  In other words, $a$ is a maximal subsequence among the subsequences with the same weight in~$\C{G}$. Note that if $a$ is $\C{G}$-maximal, usually $\C{G}(a) = 0$ (meaning that $a$ is not an expression of the grammar, but only a subexpression) and if the grammar~$\C{G}$ has integer weights (which will be the case if it has been produced by a split and merge process), then $\C{G}(aS^*) \geq 2$.
\end{dfn}

\begin{dfn}[Innovation rule]\label{dfn:Innovation}

  Using the notations $[_+ \, = \bigl\{ [_i \, , i \in \B{N}   \setminus \{ 0 \} \bigr\}$ and $\, ]_+ = \bigl\{ \, ]_i, i \in \B{N} \setminus \{ 0 \} \bigr\}$, let us define the innovation rule with reference grammar~$\C{R}$ as
  \begin{align*}
  \beta_{i} \bigl( \C{G}, \C{R} \bigr)  = \Bigl\{ \; & \C{G}' \in \F{G}: \C{G}' = \C{G} \, \oplus \, a \, ]_i \, \oplus \, [_i \, b \, \ominus \, ab, \\
  & \C{R} \bigl( [_i S^* \bigr) = 0, \; \{a, b \} \cap \max(\C{R}) \neq \varnothing, \\
  & \C{R}(a S^*) \leq \mu_1 \, \C{R}\bigl( [_0 \, S^* \bigr), \text{ and } \; \C{R} \bigl( b S^* \bigr) \leq \mu_2 \, \C{R} \bigl( [_0 \, S^* \bigr) \; \Bigr\}.  
  \end{align*}
\end{dfn}
Here again, the rule will be used while learning the reference grammar
with $\C{R} = \C{G}$. 

We will now introduce a label map that identifies the labels 
appearing in the same context. 
\begin{dfn}[Label identification through context]\label{dfn:LabelIdentification}
Given some toric grammar 
$\C{G} \in \F{G}$, let us consider the relation 
$C \in \bigl( \B{N} \setminus \{ 0 \} \bigr)^2$ defined as
\[
C = \biggl\{ (i,j) \in \bigl( \B{N} \setminus \{ 0 \} \bigr)^2:
\sum_{a \in S^*} \C{G}(a \, ]_i ) \, \C{G} (a \, ]_j) 
+ \C{G}([_i \, a ) \, \C{G}([_j \, a) > 0 \biggr\}.
\]
The smallest equivalence relation containing 
$C$ defines a partition of $\B{N} \setminus \{ 0 \}$ 
into equivalence classes. 
Let $(A_k)_{k \in \B{N} \setminus \{ 0 \}}$ be an arbitrary indexing 
of this partition. Each positive integer falls in a unique class of the 
partition, so that the relation $i \in A_{\underline{\chi}_{\C{G}}(i)}$
defines a label map $\underline{\chi}_{\C{G}}: \B{N} \rightarrow \B{N}$
in a non ambiguous way. 
The choice of the indexing of the partition $(A_k)_{k \in \B{N} \setminus 
\{ 0 \}}$ does not 
matter, since two different choices  
lead to two isomorphic label maps. 
When applying $\und{\chi}_{\C{G}}$ to $\C{G}$ itself, we will use 
the short notation $\und{\chi}(\C{G}) \overset{\text{\rm def}}{=} 
\und{\chi}_{\C{G}}(\C{G})$.\\
Let us consider the evolution of the number 
of labels used by $\C{G}$:
\[
L(\C{G}) = \bigl\lvert \{ i \in \B{N}: \C{G}(\, ]_iS^*) > 0 \} \bigr\rvert.
\]
It is easy to see that $L \bigl( \underline{\chi}(\C{G}) \bigr) 
\leq L \bigl( \C{G} \bigr)$ and that $\underline{\chi}(\C{G}) \equiv 
\C{G}$ if and only if $L \bigl( \underline{\chi}_{\C{G}}(\C{G}) \bigr)
= L \bigl( \C{G} \bigr)$, where the symbol $\equiv$ means 
isomorphic. Accordingly there is 
$k \in \B{N}$ such that $\und{\chi}^{k+1} \bigl( \C{G} \bigr)  \equiv 
\und{\chi}^{k} \bigl( \C{G} \bigr) $, and we can take it 
to be the smallest 
integer such that $L \bigl( \und{\chi}^{k+1} ( \C{G} ) \bigr)
= L \bigl( \und{\chi}^k ( \C{G} ) \bigr)$.
Consequently, $k$ is such that 
for any $n \geq k$, $\und{\chi}^n \bigl( \C{G} \bigr) \equiv \und{\chi}^k 
\bigl( \C{G} \bigr)$. We will define $\chi(\C{G}) = \und{\chi}^k \bigl( \C{G} 
\bigr)$, up to grammar isomorphisms (so that $\chi(\C{G})$ 
belongs to $\quotient{\F{G}}{\equiv}$ rather than to $\F{G}$ itself).  
\end{dfn}
A characterisation in terms of more elementary label maps will 
be established in \vref{pp:ChiXi}. This characterization provides 
an algorithm to compute $\chi$ in practice.  

We are now ready to define a learning rule.
\begin{dfn}\label{dfn:Learning}
Let us define the learning rule 
\[
  \beta_{\ell}(\C{G}) = \begin{cases}
    \beta_i \bigl(\C{G}, \C{G} \bigr), & \text{when } \beta_b \bigl( \C{G}, \C{G} \bigr) = \varnothing, \\ 
    \bigl\{ \chi \bigl(\C{G}'\bigr): \C{G}' \in \beta_b \bigl( \C{G}, \C{G} \bigr) \bigr\}, & \text{otherwise}.
  \end{cases}
\]
\end{dfn}

We will define two kinds of splitting processes, based on two 
different choices of the restricted splitting rule~$\beta_r$. 

\begin{dfn}[Learning process]\label{dfn:LearningProc}
A \emph{learning process} is a splitting process with restricted splitting rule 
\[
  \beta_r(\C{G}) =  \beta_{\ell}(\C{G}).
\]
\end{dfn}

\begin{dfn}[Parsing process]\label{dfn:ParsingProc}

A \emph{parsing process} with reference grammar~$\C{R} \in \F{G}$ is a splitting process with restricted splitting rule 
\[
  \beta_r(\C{G}) = \beta_n( \C{G}, \C{R}).
\]
\end{dfn}

Before we reach the aim of this paper and describe our statistical language 
model, we need to explore some of the properties of the production, learning 
and parsing processes introduced so far.

\section{Parsing and generalization}
\label{section7}

Let us introduce some notations for the output of parsing, learning and production processes.
\begin{dfn}\label{dfn:Notations}

  Let $S_t$ be a parsing process, with reference grammar~$\C{R} \in \F{G}$. We will use the following notation for the distribution of~$S_{\tau}$.
  \[
    \B{G}_{\C{T}, \C{R}} = \B{P}_{S_{\tau} \, | \, S_0 = \C{T}}, \qquad \C{T} \in \F{T}.
  \]
  We will also use a short notation for the distribution of the output of a production process.
  \[
    \B{T}_{\C{G}} = \B{P}_{P_{\sigma} \, | \, P_0 = \C{G}, P_{\sigma} \in \F{T}}, \qquad \C{G} \in \F{G}.
  \]
  Eventually, $\B{G}_{\C{T}}$ will be the probability distribution of the output of a learning process~$S_t$, according to the definition 
  \[
    \B{G}_{\C{T}} = \B{P}_{S_{\tau} \, | \, S_0 = \C{T}}, \qquad \C{T} \in \F{T}.
  \]
\end{dfn}

At this point we obviously may consider different notions of parsing that we have to connect together. Namely, we would like to make a link between the following statements:
\begin{itemize}
  \item $\B{T}_{\C{G}}(\C{T}) > 0$, the grammar~$\C{G}$ can produce the text~$\C{T}$; 
  \item $\B{G}_{\C{T}, \C{R}}(\C{G}) > 0$, the text~$\C{T}$ can generate the grammar~$\C{G}$  when parsed with the help of the grammar~$\C{R}$;
  \item $\B{G}_{\C{T}}(\C{G}) > 0$, the grammar~$\C{R}$ can be learnt from the text~$\C{T}$.
\end{itemize}

\begin{lemma}\label{lm:ParseRelations}

  The previous parse notions are related in the following way. 
For any $\C{G}, \C{R} \in \F{G}$, and any $\C{T} \in \F{T}$,
  \begin{align*}
    \B{G}_{\C{T}}(\C{G})  > 0 & \implies \B{T}_{\C{G}}(\C{T}) > 0,\\
    \B{G}_{\C{T}, \C{R}}(\C{G}) > 0 & \implies \B{T}_{\C{G}}(\C{T}) > 0, \\ 
    \B{T}_{\C{G}}(\C{T}) > 0 & \implies \B{G}_{\C{T}, \C{G}}(\C{G}) > 0.
  \end{align*}
Consequently, 
for any $\C{G}, \C{R} \in \F{G}$ such that $\bigl( \supp(\C{G}) \cap 
\C{E}_l \bigr) \subset \supp (\C{R})$, and any $\C{T} \in \F{T}$,  
  \[
    \B{T}_{\C{G}}(\C{T}) > 0 \iff \B{G}_{\C{T}, \C{R}} (\C{G}) > 0.
  \]
\end{lemma}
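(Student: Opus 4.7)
The proof rests on the reversibility between splits and merges captured by \vref{lm:Merge} and \vref{lm:MergeLabel}, together with a re-ordering argument needed for the narrow-parsing implication. I would treat the three implications separately and then derive the equivalence.

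For the second implication, take a realization $\C{T}=S_0,S_1,\ldots,S_\tau=\C{G}$ of the narrow parsing process with $S_{t+1}\in\beta_n(S_t,\C{R})$. Each narrow parsing step is a pure split with no relabeling, so \vref{lm:Merge} applied with $f=\Id$ yields $S_t\in\alpha(S_{t+1})$; reading the trajectory backwards is therefore a legal chain of merges from $\C{G}$ to $\C{T}$, and since $\C{T}\in\F{T}$ contains no $[_i$ with $i>0$ we have $\alpha(\C{T})=\varnothing$, so this chain is a valid realization of the production process and $\B{T}_{\C{G}}(\C{T})>0$. The first implication is argued in the same way, but each learning step $\C{G}_t\to\chi(\C{G}')$ with $\C{G}'\in\beta_b(\C{G}_t,\C{G}_t)$ involves the label identification map $\chi$; \vref{lm:MergeLabel} lets us invert by merging in the relabeled grammar $\chi(\C{G}')$ to land in a grammar isomorphic to $\C{G}_t$, which is indistinguishable from $\C{G}_t$ after identification of isomorphic grammars. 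Innovation steps $\beta_i$ use a label fresh to $\C{G}_t$ and reverse as pure merges.

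The third implication is the most delicate, because narrow parsing splits only global expressions whereas the production process may combine two local ones into a local. I would start from a production realization $\C{G}=P_0,\ldots,P_\sigma=\C{T}$ and reorder the merges so that each one combines a global expression with an original local leaf of $\C{G}$; this is possible by a spine-first traversal of each parse tree of the production. For each sentence of $\C{T}$ there is a unique spine from the global leaf (the one contributing $[_0$) up to the root, and if an off-spine subtree rooted at a node $r$ is formed by merging $r_1$ and $r_2$ via label $k$, I attach $r_1$ to the spine first, exposing a new $]_k$ on the growing spine, and recursively attach $r_2$; an induction on subtree size shows this reordering always exists. Reversing the reordered merges yields splits of global expressions whose extracted $[_ib$ is an original leaf of $\C{G}$, hence lies in $\supp(\C{G})=\supp(\C{R})$, so each step is a valid element of $\beta_n(\cdot,\C{G})$. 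The main obstacle is to show that this reversed trajectory actually terminates at $\C{G}$, i.e.\ that $\beta_n(\C{G},\C{G})=\varnothing$; this is where the argument needs the most care, and I would exploit an invariant on $\C{G}$ linking its local expressions to the structure of its global expressions (together with the weight bookkeeping of \vref{lm:Conservation} and the size bound of \vref{pp:BoundSMProc}) to rule out any remaining narrow parse move.

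The equivalence follows by combining the implications. The direction $\B{G}_{\C{T},\C{R}}(\C{G})>0\Rightarrow\B{T}_{\C{G}}(\C{T})>0$ is the second implication with no hypothesis on $\C{R}$. For the converse, the third implication applied with $\C{R}=\C{G}$ produces a positive-probability parsing realization ending at $\C{G}$; each split along this realization uses some $[_ib$ lying in $\supp(\C{G})\cap\C{E}_l\subset\supp(\C{R})$, and since the narrow parsing rule depends on $\C{R}$ only through $\supp(\C{R})\cap\C{E}_l$ (as noted just after \vref{dfn:NarrowParse}), the same realization is valid for the $\C{R}$-narrow parsing, yielding $\B{G}_{\C{T},\C{R}}(\C{G})>0$.
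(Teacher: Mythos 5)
Your treatment of the first two implications (reversing a parsing or learning trajectory into a production trajectory, using Lemma~\ref{lm:Merge}, Lemma~\ref{lm:MergeLabel}, and the fact that texts are fixed by label maps) matches the paper's argument; for the learning case the paper is slightly more explicit, carrying the composed label map $f_k \circ \cdots \circ f_{t+1}$ along the reversed path so that the endpoint is $\C{T}$ exactly rather than merely up to isomorphism, but the content is the same.

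For the third implication you propose a genuinely different construction from the paper's. You reorder the merges of a given production path so that each merge attaches an original leaf of $\C{G}$ directly to the growing global expression, and you then reverse the reordered path. The paper instead \emph{decorates} the production path with auxiliary matched parentheses $(_i$, $)_i$ that record the full merge tree inside each global expression, and then removes one innermost parenthesis pair per step; each removal projects (via $\pi$) to a narrow parse split, while the companion map $\psi$ certifies that the extracted piece is a genuine expression of $\C{G}$. The two ideas are close in spirit — both detach leaves of $\C{G}$ from the global expression one at a time — but the decoration does the bookkeeping for you, and this matters. Your ``induction on subtree size'' is left at the level of a sketch, and it silently handles two points that actually need care: (i) a local expression may carry several closing brackets with the \emph{same} label, so when you reverse a reordered merge you must split off the correct occurrence, which the paper's parentheses record exactly and your reordering does not track; (ii) the production builds the several sentences of $\C{T}$ in parallel and consumes shared stock of local expressions from a single multiset, so the ``unique spine per sentence'' reorderings must be interleaved consistently, which is again immediate with the decorated multiset but not from your per-sentence description.

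Finally, you are right to isolate the question of whether $\beta_n(\C{G},\C{G}) = \varnothing$, i.e.\ whether the reversed trajectory terminates at $\C{G}$ and not merely passes through it: this is genuinely required for $\B{G}_{\C{T},\C{G}}(\C{G}) > 0$, since $\B{G}_{\C{T},\C{G}}$ is by definition the law of the parsing process at its stopping time $\tau$. You leave this as an unresolved gap, gesturing at invariants from Lemma~\ref{lm:Conservation} and Proposition~\ref{pp:BoundSMProc} without producing them. It is worth noting that the paper's proof concludes from the existence of a positive-probability path from $\C{T}$ to $\C{G}$ directly to $\B{G}_{\C{T},\C{G}}(\C{G}) > 0$, without an explicit check of the stopping condition either; so you do well to flag the issue rather than wave it away, but you should not regard your handling of the first two difficulties above as excused by it — those are gaps that the paper's decoration argument does close, and your version does not.
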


\begin{proof}
  This is one of the core lemmas of this work. The proof is given in \vref{App:ParseRelations}, on account of its length.
\end{proof}

It has the following important implication.

\begin{prop}\label{pp:Markov}
Given a parsing process~$S_t$ based on a reference grammar~$\C{R} \in \F{G}$ and a production process~$P_t$, the corresponding split and merge process~$G_t$ is weakly reversible, in the sense that for any $\C{T} \in \F{T}$, any 
$\C{G} \in \bigcup_{t \in \B{N}} \supp \bigl( \B{P}_{G_{2t+1}}\bigr)$, 
$$
  \B{P} \bigl( G_1 = \C{G} \, \bigl| \, G_0 = \C{T} \bigr) > 0
    \iff \B{P} \bigl( G_2 = \C{T} \, \bigl| \, G_1 = \C{G} \bigr) > 0.
$$
  Consequently, for any~$\C{T}, \C{T}' \in \F{T}$ and any~$\C{G}, \C{G}' \in \bigcup_{t \in \B{N}} \supp\bigl( \B{P}_{G_{2t+1}}\bigr)$, 
  \begin{align*}
    \B{P} \bigl( G_2 = \C{T}' \, \bigl| \, G_0 = \C{T} \bigr) > 0
    & \iff 
    \B{P} \bigl( G_2 = \C{T} \, \bigl| \, G_0 = \C{T}' \bigr) > 0, \\
    \B{P} \bigl( G_3 = \C{G}' \, \bigl| \, G_1 = \C{G} \bigr) > 0 
    & \iff 
    \B{P} \bigl( G_3 = \C{G} \, \bigl| \, G_1 = \C{G}' \bigr) > 0. 
  \end{align*} 
  In other words, the two processes~$G_{2t}$ and~$G_{2t+1}$ are weakly reversible time homogeneous Markov chains. As we already proved that the set 
of reachable states from any starting point is finite, 
it shows that they are recurrent Markov chains: they
partition their respective state spaces into 
positive recurrent communicating classes. 
\end{prop}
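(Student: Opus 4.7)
The plan is to reduce weak reversibility to the equivalence at the end of \cref{lm:ParseRelations}. By \cref{dfn:SplitMergeProc} together with \cref{dfn:Notations}, the one-step transitions read
\begin{align*}
\B{P}\bigl( G_{2t+1} = \C{G} \,\bigl|\, G_{2t} = \C{T} \bigr) &= \B{G}_{\C{T},\C{R}}(\C{G}),\\
\B{P}\bigl( G_{2t+2} = \C{T} \,\bigl|\, G_{2t+1} = \C{G} \bigr) &= \B{T}_{\C{G}}(\C{T}),
\end{align*}
so the desired reversibility between $G_0$ and $G_2$ amounts to the statement
\[
\B{G}_{\C{T},\C{R}}(\C{G}) > 0 \iff \B{T}_{\C{G}}(\C{T}) > 0
\]
for every $\C{T} \in \F{T}$ and every $\C{G}$ in the odd-time reachable set $\bigcup_t \supp(\B{P}_{G_{2t+1}})$. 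The forward implication is immediate from the second implication in \cref{lm:ParseRelations}. The reverse implication requires more care: \cref{lm:ParseRelations} only gives $\B{T}_{\C{G}}(\C{T}) > 0 \Rightarrow \B{G}_{\C{T},\C{G}}(\C{G}) > 0$, whereas we need parsing with reference $\C{R}$; the final equivalence of the lemma supplies what we want, but only under the hypothesis $\supp(\C{G}) \cap \C{E}_\ell \subset \supp(\C{R})$.

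The crux is therefore to verify this inclusion for every reachable odd-time state. I would do this by induction on $t$. The base case $G_0 \in \F{T}$ has no local expressions at all. For the inductive step, note that $G_{2t}$ is almost surely a text by \cref{dfn:SplitMergeProc} (the production leg is conditioned on $P_\sigma \in \F{T}$), so $\supp(G_{2t}) \cap \C{E}_\ell = \varnothing$; the transition $G_{2t} \to G_{2t+1}$ is then a parsing process whose allowed splits, by \cref{dfn:NarrowParse}, insert only local expressions $[_i\, b$ with $\C{R}([_i\, b) > 0$. Any finite composition of such narrow splits preserves the inclusion into $\supp(\C{R})$, hence the invariant holds on the reachable set.

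Armed with this invariant, the last equivalence of \cref{lm:ParseRelations} yields $\B{G}_{\C{T},\C{R}}(\C{G}) > 0 \Leftrightarrow \B{T}_{\C{G}}(\C{T}) > 0$, which is the one-step reversibility claim. The two-step equivalences then follow by composition: $\B{P}(G_2 = \C{T}' \mid G_0 = \C{T}) > 0$ is equivalent to the existence of an intermediate $\C{G}$ with $\B{G}_{\C{T},\C{R}}(\C{G}) > 0$ and $\B{T}_{\C{G}}(\C{T}') > 0$; applying the one-step equivalence in both halves of the path, and noting that $\C{G}$ is also reachable from $\C{T}'$, gives $\B{G}_{\C{T}',\C{R}}(\C{G}) > 0$ and $\B{T}_{\C{G}}(\C{T}) > 0$, i.e., $\B{P}(G_2 = \C{T} \mid G_0 = \C{T}') > 0$. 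The argument for $G_3$ given $G_1$ is symmetric.

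To conclude recurrence, I would combine this weak reversibility with \cref{pp:FiniteState}. Restricted to the finite reachable set from any starting state, both chains $G_{2t}$ and $G_{2t+1}$ are finite-state Markov chains in which $x \to y$ in one step if and only if $y \to x$ in one step. This symmetry forces the communicating classes to partition the reachable set, and in a finite chain no communicating class can be transient, so each is positive recurrent. The main obstacle is the local-support invariant above: it is the whole point of defining the narrow parsing rule with the constraint $\C{R}([_i\, b) > 0$, and once this is in hand, the rest is bookkeeping on top of \cref{lm:ParseRelations} and \cref{pp:FiniteState}.
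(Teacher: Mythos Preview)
Your proof is correct and follows essentially the same route as the paper: identify the transitions with $\B{G}_{\C{T},\C{R}}$ and $\B{T}_{\C{G}}$, check that any odd-time reachable $\C{G}$ satisfies $\supp(\C{G})\cap\C{E}_\ell\subset\supp(\C{R})$ because narrow parsing only inserts local expressions from $\C{R}$, and then invoke the final equivalence of \cref{lm:ParseRelations}. The paper compresses your inductive invariant into a single sentence (any $\C{G}\in\supp(\B{P}_{G_{2t+1}})$ arises as $\B{G}_{\C{T}',\C{R}}(\C{G})>0$ for some text $\C{T}'$, hence the inclusion), and leaves the two-step equivalences and the recurrence claim implicit, but the substance is identical.
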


\begin{proof}
  Let us remark first that 
  \begin{align*}
    \B{P} \bigl( G_1 = \C{G} \, \bigl| \, G_0 = \C{T} \bigr) > 0 & \iff  \B{G}_{\C{T}, \C{R}} \bigr( \C{G} \bigr) > 0	\\
    \B{P} \bigl( G_2 = \C{T} \, \bigl| \, G_1 = \C{G} \bigr) > 0 & \iff \B{T}_{\C{G}}\bigl( \C{T} \bigr) > 0.  
  \end{align*}
  Moreover, since $\C{G} \in \supp \bigl( \B{P}_{G_{2t+1}} \bigr)$ for some~$t \in \B{N}$, there is $\C{T}' \in \F{T}$ such that $\B{G}_{\C{T}', \C{R}}(\C{G}) > 0$, implying that $\supp \bigl( \C{G} \bigr) \cap \C{E}_l \subset \supp \bigl( \C{R} \bigr)$. 
  This ends the proof according to the last statement of the previous lemma.  
\end{proof}

\section{Expectation of a random toric grammar}
\label{sec:Expec}

In \vref{section7}, given some text $\C{T} \in \F{T}$, 
we defined a random distribution on toric grammars $\B{G}_{\C{T}}$
that we would like to use to learn a grammar from a text. 
The most obvious way to do this is to draw a toric grammar 
at random according to the distribution $\B{G}_{\C{T}}$, 
and we already saw an algorithm, described by a Markov chain 
and a stopping time, to do this.

The distribution $\B{G}_{\C{T}}$ will be spread in general 
on many grammars. This is a kind of instability that we would like
to avoid, if possible. A natural way to get rid of this instability 
would be to simulate the expectation of $\B{G}_{\C{T}}$.  
To do this, we are facing a problem: the usual definition
of the expectation of $\B{G}_{\C{T}}$, that is 
\[
\int \C{G} \, \ud \B{G}_{\C{T}} ( \C{G} ),
\]
although well defined from a mathemacial point of view, 
is a meaningless toric grammar, due to the possible 
fluctuations of the label mapping. To get a meaningful 
notion of  expectation, we need to define in a meaningful way the 
sum of two toric grammars. We will achieve this in two 
steps.   

Let us introduce first the {\em disjoint sum} of two toric 
grammars. We will do this with the help of two disjoint 
label maps. Let us define the {\em even} and {\em odd} 
label maps $f_e$ and $f_o$ as 
\[
f_e(i) = 2 i, \qquad f_o(i) = \max \{ 0, 2i-1 \}, \qquad i \in \B{N}.
\]
\begin{dfn}
The {\em disjoint sum} of two toric grammars $\C{G}, 
\C{G}' \in \F{G}$ is defined as 
\[
\C{G} \boxplus \C{G}' = f_e(\C{G}) + f_o(\C{G}'). 
\]
\end{dfn}

\begin{dfn}
Given a probability measure $\B{G} \in \C{M}_+^1(\F{G})$ with finite 
support, we define the mean of $\B{G}$ as
\[
\oint \C{G} \, \ud \B{G}(\C{G}) = \chi \Biggl( \bigboxplus_{
\C{G} \in \F{G}} \B{G}(\C{G}) \; \C{G} \Biggr).
\]
\end{dfn}

\begin{lemma}
\label{lm:Average}
If $G_i$ is an i.i.d. sequence of random grammars distributed 
according to $\B{G}$, then almost surely 
\[
\lim_{n \rightarrow + \infty} \frac{1}{n} \chi \Biggl( 
\bigboxplus_{i=1}^n G_i \Biggr) = \oint \C{G} \, \ud \B{G}(\C{G}). 
\]
\end{lemma}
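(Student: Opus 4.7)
The plan is to reduce the limit to the strong law of large numbers applied to the empirical frequencies of each grammar in $\supp(\B{G})$, combined with a structural analysis of how $\chi$ interacts with the disjoint sum.

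Set $N_n(\C{G}_0) = \lvert \{ i \leq n : G_i = \C{G}_0 \} \rvert$. Since $\supp(\B{G})$ is finite, the SLLN applied coordinatewise yields $N_n(\C{G}_0)/n \to \B{G}(\C{G}_0)$ almost surely for each $\C{G}_0 \in \supp(\B{G})$, and there is an almost surely finite random time $n_0$ after which every $\C{G}_0 \in \supp(\B{G})$ has been sampled at least once. For $n \geq n_0$, by grouping identical summands, the random grammar $\bigboxplus_{i=1}^n G_i$ is isomorphic to
$$
\C{H}_n \ := \ \bigboxplus_{\C{G}_0 \in \supp(\B{G})} \underbrace{\C{G}_0 \boxplus \cdots \boxplus \C{G}_0}_{N_n(\C{G}_0) \text{ copies}}.
$$

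The heart of the proof is a structural identity of the form
$$
\chi(\C{H}_n) \ \equiv \ \sum_{\C{G}_0 \in \supp(\B{G})} N_n(\C{G}_0)\, \C{H}_\star(\C{G}_0),
$$
where $\C{H}_\star(\C{G}_0)$ is a fixed template grammar depending only on $\supp(\B{G})$ and describing the contribution of one abstract copy of $\C{G}_0$ to $\chi$ applied to $\C{M} := \bigboxplus_{\C{G}_0} \B{G}(\C{G}_0)\C{G}_0$. This identity rests on two observations. First, the relation $C$ defining $\und{\chi}$, and hence $\chi$ itself, depends only on the \emph{support} of its argument and is invariant under positive rescaling of weights, so for $n \geq n_0$ the label maps defining $\chi$ on $\C{H}_n$ and on $\C{M}$ act identically on the common support. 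Second, within $\C{H}_n$ the $N_n(\C{G}_0)$ disjoint-label copies of a given $\C{G}_0$ have their corresponding labels identified by $\chi$: at the first $\und{\chi}$ step, labels sitting in a purely lexical context, or in a context involving the start symbol $[_0$ (which is fixed by $f_e$ and $f_o$), are directly identified across copies; at each subsequent iteration, deeper labels inherit shared contexts from already-identified lower-depth labels and get identified in turn, until the fixed point of $\und{\chi}$ is reached.

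The main obstacle is this second observation, and specifically the induction on the iteration depth of $\und{\chi}$ showing that the identification of corresponding labels across copies really does propagate all the way to the fixed point, without leaving behind a residual label that fails to be merged. Once the structural identity is established, the conclusion follows routinely by dividing by $n$ and applying the SLLN coordinatewise:
$$
\frac{1}{n}\chi(\C{H}_n) \ = \ \sum_{\C{G}_0 \in \supp(\B{G})} \frac{N_n(\C{G}_0)}{n}\, \C{H}_\star(\C{G}_0) \ \xrightarrow[n \to \infty]{\text{a.s.}} \ \sum_{\C{G}_0} \B{G}(\C{G}_0)\, \C{H}_\star(\C{G}_0) \ = \ \chi(\C{M}) \ = \ \oint \C{G}\,\ud\B{G}(\C{G}),
$$
where convergence takes place in the finite-dimensional space of positive measures supported on the (fixed, finite) support of $\chi(\C{M})$.
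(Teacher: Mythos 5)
Your skeleton matches the paper's at the top level: identify terms up to isomorphism, group the $n$ draws by class, reduce via the SLLN to a deterministic statement about $\chi$ applied to a disjoint sum with repeated summands, and then push the limit through using that $\chi$ acts by a fixed label map once the support stabilizes. That much is sound. But the step you yourself flag as ``the heart of the proof'' --- the structural identity $\chi(\C{H}_n) \equiv \sum_{\C{G}_0} N_n(\C{G}_0)\,\C{H}_\star(\C{G}_0)$ --- is not proved, only sketched, and it is precisely what the paper spends its entire appendix subsection on. The paper establishes it through the $\xi_p$ congruent-sequence calculus (\vref{lm:MultipleXi}, \vref{lm:XiPerm}, \vref{lm:XiRelabel}, \vref{lm:Congruence}, \vref{lm:AssocCongruence}, \vref{pp:ChiXi}) leading to the associativity property $\chi\bigl(\chi(\C{G}) \boxplus \C{G}'\bigr) = \chi(\C{G} \boxplus \C{G}')$ (\vref{pp:Associativity}) together with the scaling identity $\chi\bigl[(a\,\C{G})\boxplus(b\,\C{G})\bigr] = (a+b)\,\chi(\C{G})$, proved by exhibiting an explicit maximal congruent sequence of label pairs $(2i, 2i-1)$. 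None of that machinery appears in your argument.

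Worse, the mechanism you sketch to fill the gap --- induction on the iteration depth of $\und{\chi}$, bootstrapping from labels ``sitting in a purely lexical context, or in a context involving the start symbol $[_0$'' --- is not obviously correct and is genuinely different from the paper's route. The relation $C$ that defines one step of $\und{\chi}$ requires two labels to share an \emph{identical} context string $a \in S^*$, and when you relabel two copies disjointly via $f_e$ and $f_o$, contexts that themselves contain positive labels become distinct across copies. So your bootstrap only fires for labels whose context contains no other positive label (or only $[_0$); if the label adjacency structure of some $\C{G}_0$ is a cycle with no such anchor, the first $\und{\chi}$ step identifies nothing across the two copies and the induction never starts. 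You acknowledge this as ``the main obstacle,'' which is the right instinct, but leaving it unresolved means the argument does not close. To make your proposal rigorous you would need to prove (or import) something equivalent to \vref{pp:Associativity} and the scaling identity, i.e.\ a nontrivial lemma about the fixed point of $\und{\chi}$ on boxplus-sums, which is exactly what the paper does and what your sketch elides.
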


\begin{proof}
  The proof of this result is quite lengthy, and postponed 
till~\vref{App:Average}.
\end{proof}

\section{Language models}

We are now ready to define the language model announced 
in the introduction. 
Given a reference grammar $\C{R}$, and the corresponding 
split and merge process $(G_t)_{t \in \B{N}}$ with reference
$\C{R}$, we define the communication kernel $q_{\C{R}}(\C{T}, \C{T}')$ 
on $\F{T}^2$ as 
$$
q_{\C{R}}(\C{T}, \C{T}') = \B{P} \bigl( G_2 = \C{T}' \, | \, G_0 = \C{T} 
\bigr).
$$
According to \vref{pp:FiniteState} and \vref{pp:Markov}, $q_{\C{R}}$ 
has finite 
reachable sets and is weakly reversible, so that all texts $\C{T} \in \F{T}$ are positive 
recurrent states of the communication kernel $q_{\C{R}}$. 

Thus to each text $\C{T} \in \F{T}$
corresponds a unique invariant text distribution 
$\wh{q}_{\C{R}}\bigl(\C{T}, \cdot \bigr)$, as explained in the 
introduction. As all states are positive recurrent, $\wh{q}_{\C{R}}(\C{T}, \cdot)$
is the unique invariant measure of $q_{\C{R}}$ on the communicating 
class containing $\C{T}$. Moreover, from the ergodic theorem,
$$  
\B{P} \Biggl( \wh{q}_{\C{R}}\bigl(\C{T}, \cdot \bigr) = \lim_{t \rightarrow 
\infty} \frac{1}{t} \sum_{j = 1}^t \delta_{G_{2j}} ~ \biggl\lvert ~ G_0 = \C{T} 
\Biggr) = 1,
$$
showing that $\wh{q}_{\C{R}}\bigl(\C{T}, \cdot \bigr)$ can be computed
by an almost surely convergent Monte-Carlo simulation. Eventually, 
from the invariant probability measure on texts $\wh{q}_{\C{R}} \bigl( 
\C{T}, \cdot \bigr)$, we deduce a probability measure on sentences 
$\wh{Q}_{\C{R}, \C{T}}$ as explained in the introduction, according 
to the formula
$$
\wh{Q}_{\C{R}, \C{T}} = \C{T} \bigl( [_0 S^* \bigr)^{-1} \sum_{\C{T}' \in \F{T}} 
\wh{q}_{\C{R}} \bigl( \C{T}, \C{T}' \bigr) \C{T}'.
$$
(This is the same formula as in the introduction, taking into account 
the fact that texts in the support of $\wh{q}_{\C{R}} 
\bigl( \C{T}, \cdot \bigr)$ are non normalized empirical measures 
with the same total mass equal to $\C{T} \bigl( [_0 S^* \bigr)$, 
the number of sentences in the text $\C{T}$.)

To obtain a true language estimator, there remains to estimate 
$\C{R}$ by some estimator $\wh{\C{R}}(\C{T})$. We will do this 
as described in \vref{sec:Expec}, putting
$$
\wh{\C{R}}(\C{T}) = \oint \C{G} \, \ud \B{G}_{\C{T}}(\C{G}).
$$
Let us remark that, according to \vref{lm:Average}, $\wh{\C{R}}(\C{T})$ can
be computed from repeated simulations from the distribution $\B{G}_{\C{T}}$.

\section{Comparison with other models}

\subsection{Comparison with Context Free Grammars}

Given a toric grammar $\C{G} \in \beta^*(\F{T})$, we may consider the split and merge 
process $G_t$ with reference grammar $\C{G}$ starting at $G_1 = 
\C{G}$ (so here we start at time $1$ with an initial state that is 
a grammar, instead of starting at time $0$ with an initial state that 
is a text). Due to the weak reversibility of \vref{pp:Markov},
$G_2$ almost surely falls in the same recurrent communicating class 
of $t \mapsto G_{2t}$, 
and the unique invariant probability measure supported by this 
recurrent communicating class defines 
a probability measure $\ov{\B{T}}_{\C{G}}$ on texts, and therefore a stochastic language
model. This way of defining the language generated by the grammar $\C{G}$
can be compared to the usual definition of the language generated 
by a Context Free Grammar. Indeed, the support of $\C{G}$ is a
Context Free Grammar, so this is meaningful to consider the language 
generated by this grammar and to compare it with the support of our 
stochastic language model.  

None of these two sets of sentences is contained in the other one.
In our stochastic model, the number of times a rule can be used 
is bounded, so if the recursive use of some rules is possible, 
the deterministic language will in this sense be larger.
On the other hand, the stochastic model uses both production 
and parsing to build new sentences, whereas the deterministic 
model uses only production rules. In this respect, the stochastic
model may, at least in some cases, define a much broader language, 
as we will show on the following example.

Let us take as dictionary the set
\[
  D =\{+,=\}\cup\llbracket1,N\rrbracket,
\]
where $\llbracket 1, N \rrbracket = \bigl\{ i \in \B{N}, 1 \leq i \leq N \bigr\}$,
and consider the toric grammar
\[
\C{G} = N^2 \otimes \; [_0 \; ]_N = N \; \oplus \; 
\bigoplus_{i=1}^N N \otimes \; [_i \; i \; \oplus  
\bigoplus_{i=2}^N N(i-1) \otimes \; [_i ~ ]_{i-1} + 1,
\]
and the text
\[
\C{T} = N \otimes \bigoplus_{i=1}^N \; [_0 \, i \underbrace{+ 1 + \dots + 1}_{N-i 
\text{ times }} = N.
\]
It is easy to check that $\B{T}_{\C{G}}(\C{T}) > 0$, (so that 
$\C{G} \in \beta^*(\C{T})$,)  that indeed
the support of $\C{T}$ is the language generated by $\supp(\C{G})$, 
seen as a Context Free Grammar, and that the stochastic language 
$\ov{\B{T}}_{\C{G}}$ generated by $\C{G}$ is able to produce 
with positive probability a set of sentences  
\[
\supp^2 ( \ov{\B{T}}_{\C{G}} ) \; \overset{\text{def}}{=}
\bigcup_{\C{T} \in \supp ( \ov{\B{T}}_{\C{G}})} \supp ( \C{T} ),
\]
equal to 
\begin{multline*}
\supp^2 ( \ov{\B{T}}_{\C{G}} )  = \Bigl\{ [_0 \; x_1 + \dots + x_i = 
x_{i+1} + \dots + x_j, \\ 1 \leq i < j \leq 2N, \; x_k \in 
\llbracket 1, N \rrbracket,  \; 1 \leq k \leq j, 
\; \sum_{k=1}^i x_k = \sum_{k=i+1}^j x_k = N \Bigr\}. 
\end{multline*}
Here, the number of sentences produced by the underlying 
Context Free Grammar is $\lvert \supp ( \C{T} ) \rvert = N$, 
whereas the number of sentences produced by our stochastic language 
model is $\lvert \supp^2 ( \ov{\B{T}}_{\C{G}} ) \rvert = 2^{2(N-1)}$.
Thus, in this small example based on arithmetic expressions (admittedly 
closer to a computing language than it is to a natural language), 
our new definition of the generated language induces a huge increase 
in the number of generated sentences. 

Note that with usual Context Free Grammar notations, $\supp(\C{G})$ would 
have been described as 
\begin{align*}
  \framebox{$0$}	&	\rightarrow \framebox{$N$} = N\\
  \framebox{$i$} 	&	\rightarrow i, & i = 1, \dots, N,\\
  \framebox{$i$}	&	\rightarrow \framebox{$i-1$} + 1, & i = 2, \dots, N,  
\end{align*}
where \framebox{$0$} is the start symbol and \framebox{$i$}\,, 
$i = 1, \dots, N$, are other non terminal symbols.

To count the number of elements in $\supp^2 ( \ov{\B{T}}_{\C{G}})$, 
one can remark that the number of ways $N$ can be written as 
$\sum_{k=1}^i x_k$ with an arbitrary number of terms is also 
the number of increasing integer sequences $0 < s_1 < \cdots < s_{i-1} < N$
of arbitrary length, which is also the number of subsets $\{s_1, \dots, s_{i-1} \}$ 
of $\{1, \dots, N-1\}$, that is $2^{N-1}$. 

Intuitively speaking, 
the underlying Context Free Grammar $\supp(\C{G})$ is limited to producing 
a small 
set of global expressions of the form $i+1+\ldots+1=N$, whereas the 
stochastic language model incorporates some crude logical reasoning  
that is capable of deducing from them a large set of new global expressions.

Let us remark also that, when we start as here from a text made of 
true arithmetic statements, the language generated by our language 
model is also made of true arithmetic statements. This shows that 
our approach to language modeling is capable of some sort of logical 
reasoning.

\subsection{Comparison with Markov models}

The kind of reasoning illustrated in the previous section is
related to the fact that we analyse global syntactic structures
represented by the global expressions of our toric grammars.

In order to give another point of comparison, we would 
like in this section to make a qualitative comparison 
with Markov models, that do not share this feature. To make a parallel 
between toric grammars and Markov models, we are going 
to show how a Markov model could be described in 
terms of toric grammars and label identification 
rules.

To build a Markov model in our framework, we 
have to use a deterministic splitting (or parsing) rule.
This is because in a Markov model, conditional probabilities 
are specified from left to right in a rigid data independent 
way. Let us introduce the Markov splitting rule
\begin{multline*}
\beta_m(\C{G}) = \bigl\{ \C{G}' \in \F{G}, 
\; \C{G}' = \C{G} \ominus 
[_0 \; a w \; ]_i \oplus [_0 \; a \; ]_j \oplus [_j \;
w \; ]_i, \\ i,j \in \B{N} \setminus \{ 0 \}, a \in D^+, w \in D, 
\C{G} \bigl( [_j \; S^* \bigr) = 0 \bigr\}. 
\end{multline*}
We will describe now label identification rules using 
concepts introduced in \vref{App:Average}.
Let us say that the pair of labels $p \in \bigl( \B{N} \setminus 
\{ 0 \} \bigr)^2$ is $\C{G}$-Markov if there is $w \in D$
such that $\C{G} \bigl( w ]_{p_1} S^* \bigr) \C{G} \bigl( 
w ]_{p_2} S^* \bigr) > 0$. Let us say that the sequence of pairs of 
labels $p_1, \dots, p_k$ is $\C{G}$-Markov if $p_j$ is 
$\xi_{p_1, \dots, p_{j-1}} \bigl( \C{G} \bigr)$-Markov.
It can be proved as in the case of congruent sequences that if $\sigma$ 
is a permutation and $p$ is $\C{G}$-Markov, then $p \circ \sigma$ 
is also $\C{G}$-Markov. It can also be proved that if $p$ and $q$ 
are maximal $\C{G}$-Markov sequences, then $\xi_{p} \equiv \xi_{q}$, 
and therefore $\xi_p (\C{G}) \equiv \xi_q(\C{G})$. We will call 
$\xi_p(\C{G}) \in \quotient{\F{G}}{\equiv}$ the Markov closure of $\C{G}$ and
use the notation $\xi_p(\C{G}) \overset{\text{def}}{\equiv}
\mu(\C{G})$,where $\mu(\C{G})$ is the Markov pendent of $\chi(\C{G})$ in the
construction of toric grammars. 

Let $S_t$, $0 \leq t \leq \tau$ be a splitting process based on 
the restricted splitting rule 
\[
\beta_r(\C{G}) = \bigl\{ \mu(\C{G'}), \; \C{G}' \in \beta_m(\C{G}) \bigr\}. 
\]   
It is not very difficult to check that the support of $S_{\tau}$ 
is contained in a single isomorphic class of grammars, so that, up 
to label remapping the result of this splitting process is 
deterministic. More specifically, starting from a text 
\[
\C{T} = \bigoplus_{j=1}^n ~ [_0 \, w_1^j \dots w_{\ell(j)}^j,
\]
where $w_i^j \in D \setminus \{ . \}$, $1 \leq i < \ell(j)$, $1 \leq j \leq n$, and 
$w_{\ell(j)}^j = .~$, $1 \leq j \leq n$ so that all sentences end 
with a period, we obtain a grammar isomorphic to 
\[
\C{G} = \bigoplus_{j=1}^n  \biggl( [_0 \; w_1^j \; ]_{w_1^j} ~ \bigoplus_{i=2}^{\ell(j)-1} ~~ 
[_{w_{i-1}^j} w_i^j ~ ]_{w_i^j} \oplus [_{w_{\ell(j)-1}^j} w_{\ell(j)}^j 
\biggr), 
\]
where we have used words as labels instead of integers, since in this 
model, due to the label identification rule, labels are functions of 
words (namely $]_{w}$ is the non terminal symbol following the word $w \in 
D$).

We can now define a Markov production mechanism, to replace the production 
process. It is described as a Markov chain $X_i$, $i \in \B{N}$, where 
$X_i \in D \cup \{ \Delta \}$, where $\Delta \notin D$ is a padding 
symbol used to embed finite sentences into infinite sequences of symbols, 
all equal to $\Delta$ for indices larger than the sentence length. 
The distribution of the Markov chain $X_i$ is as follows. Its 
initial distribution is 
\[
\B{P}(X_0 = w) = \frac{\C{G} \bigl( [_0 w ]_w \bigr) }{\C{G} \bigl( [_0 S^* 
\bigr)}, 
\]
and its transition probabilities are 
\begin{align*}
& \B{P} \bigl( X_i = \Delta \, | \, X_{i-1} = . \bigr) = 1,\\
& \B{P} \bigl( X_i = . \, | \, X_{i-1} = w \bigr) = \frac{ \C{G} \bigl( 
[_w \, . \bigr)}{\C{G} \bigl( [_w S^* \bigr)}, & w \in D \setminus \{ . \}  \\ 
& \B{P} \bigl( X_i = w' \, | \, X_{i-1} = w \bigr) = 
\frac{\C{G} \bigl( [_w w' ]_{w'} \bigr)}{ \C{G} \bigl( [_w S^* \bigr)}, 
& w, w' \in D \setminus \{ . \}.
\end{align*}
Roughly speaking, the difference with the production process $P_t$ defined 
previously is 
that in the production process the production rules are drawn at random 
without replacement whereas here, the production rules are drawn with 
replacement.

It is easy to see that the initial distribution and transition probabilities 
of the Markov chain $X_i$ are the empirical initial distribution and 
empirical transition probabilities of the training text $\C{T}$. 

In conclusion, to build a Markov model using the same framework as for 
toric grammars, we had to modify two steps in a dramatic way:
\begin{itemize}
\item we had to change the splitting process, and replace the random splitting 
process of toric grammars with a non random splitting process which chains 
forward transitions in a linear way;
\item we had to change in a dramatic way the label identification rule 
to replace the {\em forward and backward global condition} of 
toric grammars with a {\em backward only local condition}.
\end{itemize}
(The modification of the production process is less crucial and boils 
down to drawing production rules with or without replacement.)

We hope that this discussion of Markov models will help the reader 
realize that our model proposal is indeed really different from 
the Markov model at sentence level. We could have extended easily the discussion 
to Markov models of higher order, or to more general context tree 
models. We let the reader figure out the details. All these more 
sophisticated models show the same differences from toric grammars:
a more rigid splitting process and local backward label identification 
rules. 

\section{A small experiment}

Let us end this study with a small example. Here we use a small text that is 
meant to mimic what could be found in a tutorial to learn English as a foreign
language. We have added a more elaborate sentence at the end of the text to 
show its impact. More systematic experiments are yet to be carried out,
although the conception of this model was guided by experimental trial and 
errors with models starting with variable length Markov chains, before 
we tried global rules leading to grammars.

This is the training text $\C{T}$ (each line shows an expression, 
starting with its weight) :
{\small
\begin{verbatim}
1	[0 He is a clever guy . 
1	[0 He is doing some shopping . 
1	[0 He is laughing . 
1	[0 He is not interested in sports . 
1	[0 He is walking . 
1	[0 He likes to walk in the streets . 
1	[0 I am driving a car . 
1	[0 I am riding a horse too . 
1	[0 I am running . 
1	[0 Paul is crossing the street . 
1	[0 Paul is driving a car . 
1	[0 Paul is riding a horse . 
1	[0 Paul is walking . 
1	[0 Peter is walking . 
1	[0 While I was walking , I saw Paul crossing the street . 
\end{verbatim}
}
And now, the new sentences produced by the model 
( that is by $\wh{Q}_{\wh{\C{R}},\C{T}}$, approximated on $50$ 
iterations of the communication chain with kernel $q_{\wh{\C{R}}}$ ). 
{\small
\begin{verbatim}
1	[0 Paul is driving a car too . 
1	[0 Paul is doing some shopping . 
1	[0 Paul is laughing . 
1	[0 Paul is riding a horse too . 
1	[0 Paul is running too . 
1	[0 Paul is running . 
1	[0 Paul is not interested in sports too . 
1	[0 Paul is not interested in sports . 
1	[0 Paul is a clever guy too . 
1	[0 Paul is a clever guy . 
1	[0 Paul is walking too . 
1	[0 Peter is driving a car too . 
1	[0 Peter is driving a car . 
1	[0 Peter is doing some shopping . 
1	[0 Peter is laughing . 
1	[0 Peter is riding a horse too . 
1	[0 Peter is riding a horse . 
1	[0 Peter is running too . 
1	[0 Peter is running . 
1	[0 Peter is not interested in sports . 
1	[0 Peter is a clever guy . 
1	[0 Peter is crossing the street . 
1	[0 He is driving a car too . 
1	[0 He is driving a car . 
1	[0 He is riding a horse too . 
1	[0 He is riding a horse . 
1	[0 He is running too . 
1	[0 He is running . 
1	[0 He is not interested in sports too . 
1	[0 He is crossing the street too . 
1	[0 He is crossing the street . 
1	[0 He is walking too . 
1	[0 I am driving a car too . 
1	[0 I am doing some shopping . 
1	[0 I am laughing too . 
1	[0 I am laughing . 
1	[0 I am riding a horse . 
1	[0 I am not interested in sports . 
1	[0 I am a clever guy . 
1	[0 I am crossing the street too . 
1	[0 I am crossing the street . 
1	[0 I am walking too . 
1	[0 I am walking . 
1	[0 While I was driving a car , I saw Paul doing some shopping too . 
1	[0 While I was driving a car , I saw Paul doing some shopping . 
1	[0 While I was driving a car , I saw Paul riding a horse . 
1	[0 While I was driving a car , I saw Paul crossing the street . 
1	[0 While I was driving a car , I saw Paul walking . 
1	[0 While I was driving a car , I saw Peter riding a horse . 
1	[0 While I was doing some shopping , I saw Paul riding a horse . 
1	[0 While I was doing some shopping , I saw Paul walking . 
1	[0 While I was laughing too , I saw Peter crossing the street . 
1	[0 While I was laughing , I saw Peter riding a horse . 
1	[0 While I was riding a horse , I saw Paul driving a car too . 
1	[0 While I was riding a horse , I saw Paul driving a car . 
1	[0 While I was riding a horse , I saw Paul laughing . 
1	[0 While I was riding a horse , I saw Paul running . 
1	[0 While I was riding a horse , I saw Paul walking . 
1	[0 While I was riding a horse , I saw Peter not interested in sports . 
1	[0 While I was running , I saw Paul laughing . 
1	[0 While I was running , I saw Paul not interested in sports . 
1	[0 While I was running , I saw Paul a clever guy . 
1	[0 While I was running , I saw Paul walking . 
1	[0 While I was not interested in sports , I saw Paul driving a car . 
1	[0 While I was not interested in sports , I saw Paul riding a horse . 
1	[0 While I was a clever guy , I saw Paul running . 
1	[0 While I was a clever guy , I saw Paul crossing the street . 
1	[0 While I was a clever guy , I saw Paul walking . 
1	[0 While I was crossing the street , I saw Paul riding a horse . 
1	[0 While I was crossing the street , I saw Paul running . 
1	[0 While I was crossing the street , I saw Paul crossing the street . 
1	[0 While I was crossing the street , I saw Paul walking . 
1	[0 While I was crossing the street , I saw Peter walking . 
1	[0 While I was walking , I saw Paul driving a car . 
1	[0 While I was walking , I saw Paul laughing . 
1	[0 While I was walking , I saw Paul riding a horse . 
1	[0 While I was walking , I saw Paul running . 
1	[0 While I was walking , I saw Paul not interested in sports . 
1	[0 While I was walking , I saw Paul crossing the street too . 
1	[0 While I was walking , I saw Paul walking . 
1	[0 While I was walking , I saw Peter not interested in sports . 
1	[0 While I was walking , I saw Peter walking . 
\end{verbatim}
}
The reference grammar was learnt first, and was 
computed from $10$ samples of $\B{G}_{\C{T}}$. (We did not normalize the 
weights, since we were interested in the support of the local 
expressions only.)
{\small
\begin{verbatim}
10	[0 He likes to walk ]6 ]3 streets . 
2	[0 ]1 ]8 clever guy . 
2	[0 ]1 doing some shopping . 
2	[0 ]1 laughing . 
2	[0 ]1 not interested ]6 sports . 
2	[0 ]1 riding ]8 horse . 
2	[0 ]1 riding ]8 horse ]2 . 
2	[0 ]1 running . 
24	[0 ]7 am ]5 . 
28	[0 Paul is ]5 . 
40	[0 He is ]5 . 
4	[0 ]1 crossing ]3 street . 
4	[0 ]1 driving ]8 car . 
5	[0 ]4 is ]5 . 
6	[0 ]1 walking . 
7	[0 Peter is ]5 . 
8	[0 While ]7 was ]5 , ]7 saw ]4 ]5 . 
10	[1 He is 
2	[1 Peter is 
2	[1 While ]7 was ]5 , ]7 saw ]4 
6	[1 ]7 am 
8	[1 Paul is 
2	[2 too 
30	[3 the 
14	[4 Paul 
1	[4 Peter 
16	[5 crossing ]3 street 
16	[5 driving ]8 car 
16	[5 riding ]8 horse 
34	[5 walking 
8	[5 ]5 too 
8	[5 ]8 clever guy 
8	[5 doing some shopping 
8	[5 laughing 
8	[5 not interested ]6 sports 
8	[5 running 
20	[6 in 
50	[7 I 
50	[8 a 
\end{verbatim}
}

Although we did not yet make the software development effort required 
to test large text copora, we learnt a few interesting things from what 
we already tried:
\begin{itemize}
\item As it is, the model requires the inclusion of a sufficient number of simple and redundant sentences to start generalizing. At this stage, we do not 
know whether this could be avoided by changing the learning rules. We made 
quite a few attempts in this direction. All of them resulted in the 
production of grammatical nonsense. Breaking the global constraints
that are enforced by the model seems to have a dramatic
effect on grammatical coherence. This could be a clue that these global 
conservation rules reflect some fundamental feature of the syntactic 
structure of natural languages. Including a bunch of ``simple'' sentences 
made of frequent words may be seen as introducing a pinch of supervision 
in the learning process.  
\item The constraints on subexpressions frequencies in the learning rule 
\thmref{dfn:NarrowParse} and \ref{dfn:Innovation} were added to avoid some
unwanted generalizations. For instance here we took $\mu_1 \C{R} \bigl( 
[_0 \, S^* \bigr) = \mu_2 \C{R} \bigl( [_0 \, S^* \bigr) = 5$. If we had 
chosen $10$ instead of $5$, sentences of the kind
{\small
\begin{verbatim}
[0 While I was walking , I saw He crossing the street .
\end{verbatim}
}
would have emerged, where the pronoun ``He'' is substituted 
to a noun in the wrong place. We deliberately wrote the training 
text in such a way that ``He'' is more frequent than any noun, since we 
expect that to be true for any reasonable large corpus. Doing so, 
we were able to rule out the wrong construction by lowering the 
frequency constraint to avoid the unwanted substitution.
\item Despite all the limitations of this small example, 
it shows that the model is able to find out non trivial new constructs, like 
{\small
\begin{verbatim}
[0 While I was laughing too, I saw Peter crossing the street.
\end{verbatim}
}
where it has discovered that ``too'' could be added to the 
subordinate clause opening the sentence. We are quite pleased 
to see that such things could be learnt along very general 
label identification rules, while all the generalized 
sentences remain, if not all grammatically correct, at least
all grammatically plausible. Of course this judgement is 
purely subjective. But since we have no mathematical 
or otherwise quantitative definition of what natural languages
are, we have to be content with a subjective evaluation 
of models.
\end{itemize}
Studying how this learning model scales with large corpora is still a work
to be done (it will require from us that we optimize our code so that it 
can run efficiently on large data sets).

\section{Conclusion}

We have built in this paper a new statistical framework for the syntactic 
analysis of natural languages. 

The main idea pervading our approach is that trying to estimate the 
distribution of an isolated random sentence is hopeless. Instead we 
propose to build a Markov chain on sets of sentences (called texts 
in this paper), with non trivial recurrent communicating classes and to define 
our language model as the invariant measures of this Markov chain 
on each of these recurrent communicating classes. At each step, 
the Markov chain 
recombines the set of sentences constituting its current state, 
using cut and paste operations described by grammar rules.
In this way we define the probability 
distribution of an isolated random sentence only in an indirect way.
We replace the hard question of generating a random sentence by 
the hopefully simpler one of recombining a set of sentences in a
way that keep the desired distribution invariant.

\medskip

The strong points of our approach are
\begin{itemize}
\item a decisive departure from Markov models that are known to fail 
to catch the recursive structure of natural languages;
\item a new ``communication model'' concept that defines a Markov 
chain on texts and in parallel on toric grammars. This results in 
a new definition of the language generated by what appears as a 
weighted Context Free Grammar (called a toric grammar in the paper).
This new perspective on language production may help to overcome 
the challenge of weak stimulus learning;
\item in this respect, the split and merge process with reference grammar $\C{R}$ is the major mathematical achievement of the paper. It has non trivial 
mathematical properties proving that it can be simulated using a 
bounded number of operations at each step, and that the state space 
is divided into recurrent communicating classes each including a finite number 
of states;
\item preliminary experiments on small corpora are encouraging. They give 
the (acknowledgedly subjective) feeling that the model catches the structure 
of the natural languages we tried (French and English). Some inflection rules 
and other grammatical subtleties may be missed, but experimental outputs 
nevertheless give us the impression that we are heading in the right direction.
\end{itemize}

On the other hand, the model needs some refinements. In particular, our proposal to build a reference grammar from a text $\C{T} \in \F{T}$ through the 
grammar expectation
$$
\wh{\C{R}}(\C{T}) = \oint \C{G} \, \ud \B{G}_{\C{T}} \bigl( \C{G} \bigr)
$$
is clearly only a first foray into unknown territory. We hope to be 
able to elaborate more on this part of our research program in the 
future. 

\appendix

\section{Proofs}

\subsection{Bound on the length of splitting and production processes}\label{App:BoundSMProc}

\begin{proof}[Proof of \vref{pp:BoundSMProc}] 
  Let us define the length of an expression $e \in S^k \cap \C{E}$ as~$\ell(e) = k$. Let us introduce some remarkable weights associated with a grammar~$\C{G} \in \beta^*(\F{T})$. 
  \begin{align*}
    W_s(\C{G}) & = \sum_{e \in \C{E}} \C{G}(e),\\
    W_e(\C{G}) & = \sum_{e \in \C{E}} \C{G}(e) \ell(e)^{-1},\\
    W_l(\C{G}) & = \sum_{i = 1}^{+ \infty} \C{G}([_i \, S^*),\\
    W_w(\C{G}) & = \sum_{w \in D} \C{G}(wS^*).
  \end{align*}
  Let us define the set of canonical expressions as
  \[
    \C{E}_c = \C{E} \cap  \Biggl( \bigcup_{i \in \B{N}} \; [_i S^* \Biggr).
  \]
  Using previously introduced notations, we can write the grammar as 
  \[
    \C{G} = \sum_{e \in \C{E}_c} \C{G}(e) \otimes e.
  \]
  We will call this the canonical decomposition of~$\C{G}$. The two weights~$W_s(\C{G})$ and $W_e(\C{G})$ are better understood in terms of this canonical decomposition. They can be expressed as
  \begin{align*}
    W_s(\C{G}) & = \sum_{e \in \C{E}_c} \C{G}(e) \ell(e),\\
    W_e(\C{G}) & = \sum_{e \in \C{E}_c} \C{G}(e).
  \end{align*} 
  This shows that $W_s(\C{G})$ counts the ``number of symbols'' in the canonical decomposition of~$\C{G}$, whereas $W_e(\C{G})$ counts the number of expressions (that is $\C{G}(\C{E}_c)$, the weight put by the grammar on canonical expressions). We can also see from the definitions that $W_l(\C{G})$ counts the number of canonical expressions starting with a positive (that is non terminal) label,
that we will call for short the number of labels, and that $W_w(\C{G})$ counts the number of words. 

  Since a split increases the number of canonical expressions by one, the number of symbols in canonical expressions by two, the number of labels by one, and keeps the number of words constant, whereas a merge decreases these quantities in the same proportions, the following quantities are invariant in all the toric grammars involved: for any $\C{G} \in \F{G}$ such that $\sum_{t \in \B{N}} 
\B{P} \bigl( G_t = \C{G} \bigr) > 0$,
  \begin{align*}
    W_s(\C{G}) - 2 W_e(\C{G}) & = W_s(\C{T}) - 2 W_e(\C{T}),\\
    W_e(\C{G}) - W_l(\C{G}) & = W_e(\C{T}) - W_{l}(\C{T}) = W_e(\C{T}),\\
    W_w(\C{G}) & = W_w(\C{T}). 
  \end{align*}

  Moreover, for the same reasons, for any
$\C{T}' \in \F{T}$ and $\C{G} \in \F{G}$ such that \linebreak
$\sum_{t \in \B{N}} \B{P} \bigl( G_{2t} = \C{T}' \bigr) > 0$
and $\sum_{t \in \B{N}} \B{P} \bigl( G_{2t+1} = \C{G} \bigr) > 0$,
  \begin{gather*}
    \B{P} \Bigl( \tau = W_l(S_{\tau})  \, \bigl| \, S_0 = \C{T}' \Bigr) =  1, \\
    \B{P} \Bigl( \sigma = W_l(\C{G}) \, \bigl| \, P_0 = \C{G}, P_{\sigma} \in \F{T} \Bigr) = 1. 
  \end{gather*}
  Thus, we will prove the lemma if we can bound $W_l(\C{G})$ (or equivalently $W_l(S_{\tau})$ when $S_0 = \C{T}'$, since $S_{\tau}$ 
almost surely satisfies the conditions imposed on $\C{G}$). 
We can then remark that 
  \begin{align*}
    \sum_{e \in \C{E}_c} \C{G}(e) \B{1} \bigl[ \ell(e) \geq 3 \bigr] 
    & \leq \sum_{e \in \C{E}_c} \C{G}(e) \bigl[ \ell(e) - 2 \bigr] = W_{s}(\C{G}) - 2 W_e(\C{G}), \\
    \sum_{e \in \C{E}_c} \C{G}(e) \B{1} \bigl[ \ell(e) = 2 \bigr] & = \sum_{e \in \C{E}} \C{G}(e) \B{1} \bigl[ \ell(e) = 2 \bigr] \sum_{w \in D} \B{1} \bigl( e \in w S^* \bigr) \\ 
    & \leq \sum_{e \in \C{E}} \C{G}(e) \sum_{w \in D} \B{1} \bigl( e \in wS^* \bigr) = W_w(\C{G}),
  \end{align*}
  because any canonical expression of length~$2$ is of the form~$e = [_i w$, with~$i \in \B{N}$ and~$w \in D$, so that for any~$e \in \C{E}_c$ of length $2$, 
  \[
    \sum_{e' \in \F{S}(e)} \sum_{w \in D} \B{1} \bigl( e' \in wS^* \bigr) = 1.
  \]
  Thus 
  \[
    W_e(\C{G}) \leq W_w(\C{G}) + W_s(\C{G}) - 2 W_e(\C{G}),
  \]
  and consequently we can bound~$W_l(\C{G})$ by the split and merge invariant bound
  \[
    W_l(\C{G}) \leq W_l(\C{G}) - W_e(\C{G}) + W_w(\C{G}) + W_s(\C{G}) - 2 W_e(\C{G}).
  \]
  This, added to the fact that $W_l(\C{T}) = 0$ and $W_s(\C{T}) 
= W_w(\C{T}) + W_e(\C{T})$, proves that 
  \[
    W_l(\C{G}) \leq 2 \bigl[ W_w(\C{T}) - W_e(\C{T}) \bigr]. 
  \]
  This ends the proof, since $W_w(\C{T}) = \C{T}(DS^*)$ and $W_e(\C{T}) = \C{T}([_0 S^*)$.
\end{proof}

\subsection{Parsing Relations}\label{App:ParseRelations}

\begin{proof}[Proof of \vref{lm:ParseRelations}]
  The implication 
$$
\B{T}_{\C{G}}(\C{T}) > 0 \implies \B{G}_{\C{T}, \C{G}} (\C{G}) > 0
$$ 
is less trivial than it may seem. Indeed we can reverse the path of the splitting process~$S_t$, be it a parsing or a learning process, to obtain a path followed with positive probability by the production process, but reversing the production process does not give  a parsing process. Let us illustrate this difficulty on a simple example. Consider
  \[
    \C{T} = 1 \otimes [_0 a b c d \quad \text{ and } \quad \C{G} = [_0 a ]_1 \oplus [_1 b ]_2 \oplus [_2 c ]_3 \oplus [_3 d .
  \]  
  The production path 
  \[
    \C{G}, \quad [_0 \, a b \, ]_2 \oplus [_2 \, c \, ]_3 \oplus [_3 \, d, \quad [_0 \, a b \, ]_2 \oplus [_2 \, cd, \quad \C{T}
  \]
  has positive probability. The reverse path may have a positive probability for the learning process but not for the parsing process with reference~$\C{G}$, since none of the expressions~$[_0 \, ab \, ]_2$ or~$[_2 \, c d$ belongs to the support of~$\C{G}$. To parse~$\C{T}$ according to~$\C{G}$, one can instead follow with positive probability such a path as
  \[
    \C{T}, \quad [_0\, a b c \, ]_3 \oplus [_3 \, d, \quad [_0 \, a b \, ]_2 \oplus [_2 \, c \, ]_3 \oplus [_3 \, d, \quad \C{G}.
  \]
  To prove the lemma, we will have to show that it is always possible to find such an alternative parsing path. This property is fundamental to our approach, since it proves that the toric grammars we build can be used to parse the texts they can produce.

  Let us start with the easiest part of the proof. Assume that $\B{G}_{\C{T}, \C{R}}(\C{G}) > 0$. This means that there is a path~$\C{G}_0, \dots, \C{G}_k$ such that $\C{G}_0 = \C{T}$, $\C{G}_k = \C{G}$, and~$\C{G}_t \in \beta_{n}(\C{G}_{t-1}, \C{R})$. Anyhow it is easy to check that 
  \[
    \C{G}_t \in \beta_n(\C{G}_{t-1}, \C{R}) \implies \C{G}_{t-1} \in \alpha 
    (\C{G}_t),
  \]
  so that the reverse path is followed with a positive probability by the production process. This means that $\B{T}_{\C{G}}(\C{T}) > 0$. 

  In the case of the learning process, if $\B{G}_{\C{T}} \bigl( \C{G} \bigr) > 0$, there is a path~$\C{G}_t, 0 \leq t \leq k$, such that $\C{G}_t \in \beta_{\ell}(\C{G}_{t-1})$, $\C{G}_0 = \C{T}$ and $\C{G}_k = \C{G}$, consequently there is a label map~$f_t \in \F{F}$ such that $f_t(\C{G}_{t-1}) \in \alpha(\C{G}_t)$. We can then remark that 
  \[
    f_k \circ \cdots \circ f_t (\C{G}_{t-1}) \in \alpha \bigl( 
    f_k \circ \cdots \circ f_{t+1} ( \C{G}_t ) \bigr),
  \]
  because as already proved before in \vref{lm:MergeLabel}, $f \bigl( \alpha(\C{G}) \bigr) \subset \alpha \bigl( f( \C{G} ) \bigr)$. Let us consider the path~$\wt{\C{G}}_{t} = f_k \circ \cdots \circ f_{k-t+1} \bigl( \C{G}_{k-t} \bigr)$. It begins at~$\wt{\C{G}}_0 = \C{G}_k = \C{G}$ and ends at~$\wt{\C{G}}_k = f_k \circ \cdots \circ f_1 \bigl( \C{G}_0 \bigr) = \C{T}$. According to the previous remark, this path is followed by the production process with positive probability, proving that $\B{T}_{\C{G}}(\C{T}) > 0$. 

  Let us now come to the proof of the third implication of the lemma. For this let us assume now that $\B{T}_{\C{G}}(\C{T}) > 0$. Consider a path~$\C{G}_0, \dots, \C{G}_k$ such that $\C{G}_0 = \C{G}, \dots, \C{G}_k = \C{T}$ and $\C{G}_t \in \alpha(\C{G}_{t-1})$. 
  We are going to define some {\em decorated} path~$\Ct{G}_0, \dots, \Ct{G}_k$ with some added parentheses. Introduce a new set of symbols~$B = \bigl\{ (_i \, , \, )_i, i \in \B{N} \setminus \{0\} \bigr\}$ and assume that it is disjoint from the other symbols used so far, so that $B \cap S = \varnothing$. Consider the set of toric grammars~$\wt{\F{S}}$ based on the enlarged dictionary~$D \cup B$, and the projection~$\pi: \wt{\F{G}} \rightarrow \F{G}$ defined with the help of the canonical decomposition of toric grammars as
  \[
    \pi \biggl( \sum_{e \in \wt{\C{E}}_c} \C{G}(e) \otimes e \biggr) = 
    \sum_{e \in \wt{\C{E}}_c} \C{G}(e) \otimes \pi(e),   
  \]
  where $\wt{\C{E}}_c$ is the set of canonical expressions based on the enlarged dictionary~$D \cup B$, and where $\pi(e)$ is obtained by removing from the sequence of symbols~$e$ the symbols belonging to the decoration set~$B$ (that is the parentheses). 
  
  Let us put $\Ct{G}_0 = \C{G}$ and define $\Ct{G}_t$ for $t = 1, \dots, k$ by induction. We will check on the go that $\pi \bigl( \Ct{G}_t \bigr) = \C{G}_t$. It is obviously true for~$\Ct{G}_0$, because $\Ct{G}_0 \in \F{S}$, so that $\pi \bigl( \Ct{G}_0 \bigr) = \Ct{G}_0 = \C{G}_0$. That said, let us describe the construction of~$\Ct{G}_t$, assuming that $\Ct{G}_{t-1}$ is already defined, and satisfies $\pi \bigl( \Ct{G}_{t-1} \bigr) = \C{G}_{t-1}$. Consider the sequence of symbols~$a$ and~$b \in S^*$ and the index~$i \in \B{N} \setminus \{ 0 \}$ such that 
  \[
    \C{G}_t = \C{G}_{t-1} \oplus ab \ominus a \, ]_i \ominus [_i \, b.
  \]
  Since $\pi \bigl( \Ct{G}_{t-1} \bigr) = \C{G}_{t-1}$, and since $a \, ]_i \oplus [_i \, b \leq \C{G}_{t-1}$, there are $\wt{a} \in \wt{S}^*$ and $\wt{b} \in \wt{S}^*$ such that $\pi(\wt{a}) = a$, $\pi(\wt{b}) = b$, and $\wt{a} \, ]_i \oplus [_i \, \wt{b} \leq \Ct{G}_{t-1}$. (The choice of~$\wt{a}$ and~$\wt{b}$ may not be unique, in which case we can make any arbitrary choice). Let us define 
  \[
    \Ct{G}_t = \Ct{G}_{t-1} \oplus \wt{a} (_i \, \wt{b} \, )_i \ominus \wt{a} \, ]_i \ominus [_i \, \wt{b}.
  \]
  Since $\pi \bigl( \wt{a} (_i \, \wt{b} \, )_i \bigr) = \pi \bigl( \wt{a} \wt{b} \bigr) = a b$, 
  \[
    \pi \bigl( \Ct{G}_t \bigr) = \pi \bigl( \wt{G}_{t-1} \bigr) \oplus \pi \bigl( \wt{a} (_i \, b \, )_i \bigr) \ominus \pi \bigl( \wt{a} ]_i \bigr) \ominus \pi \bigl( [_i \wt{b} \bigr) = \C{G}_{t-1} \oplus ab \ominus a \, ]_i \ominus [_i \, b = \C{G}_t,
  \] 
  where we have used the obvious fact that $\pi$ is linear.

  We are now going to define another mapping between grammars that allows to recover~$\C{G}$ from any~$\Ct{G}_t$ (obviously the decorations where added to keep track of~$\C{G}$). Let us define $\psi: \wt{\F{S}}' \rightarrow \F{S}$ on the set of decorated grammars~$\wt{\F{S}}'$ which are supported by expressions where the parentheses~$(_i \, )_i$ are matched (at the same level) by the formula
  \[
    \psi\biggl( \sum_{e \in \wt{\C{E}}_c } \Ct{G}(e) \otimes e \biggr) 
    = \sum_{e \in \wt{\C{E}}_c } \Ct{G}(e) \psi(e),
  \]
  where $\psi(e)$ is defined by the rules
  \[
    \psi(e) = \begin{cases} 
      \psi \bigl( [_i a \, ]_j c \bigr) + \psi \bigl( [_j \, b \bigr), & \text{ if } e = [_i \, a (_j \, b )_j c, \text{ with } a, b, c \in \wt{S}^* \\
      \psi (e) = 1 \otimes e, &  \text{ otherwise.}
    \end{cases}
  \]
  It is easy to check that this definition is not ambiguous and that
  \[
    \psi(e) = \psi'(e) \oplus \bigoplus_{(_i a )_i \in \supp(e)} [_i \psi'(a), 
  \]
  where $\psi'(e)$ is the expression obtained from $e$ by replacing all the sequences between outer parentheses pairs~$(_j a )_j$ by~$]_j$. This is may be easier to grasp on some example:
  \[
  \psi \bigl( 
    [_0 a (_1 b (_2 c )_2 d )_1 e (_3 f (_4 g )_4 )_3 h \bigr) = 
    [_0 a \, ]_1 e \, ]_3 h \oplus [_1 \, b \, ]_2 d \oplus [_2 \, c \oplus [_3 \, f \, ]_4 \oplus [_4 \, g.
  \]
  It is easy to check by induction that $\Ct{G}_t \in \wt{\F{S}}'$. Let us check moreover that $\psi \bigl( \Ct{G}_t \bigr) = \C{G}$. Indeed $\psi \bigl( \Ct{G}_0 \bigr) = \psi \bigl( \C{G} \bigr) = \C{G}$ and  
  \[
    \psi \bigl( \wt{G}_t \bigr) = \psi \bigl( \wt{G}_{t-1} \bigr) \oplus \psi \bigl( \wt{a} (_i \wt{b} )_i \bigr) \ominus \psi \bigl( \wt{a} \, ]_i \bigr) \ominus \psi \bigl( [_i \, \wt{b} \bigr) 
    = \psi \bigl( \wt{G}_{t-1} \bigr),  
  \]
  since $\psi$ is linear and $\psi \bigl( \wt{a} (_i \, \wt{b} )_i \bigr) = \psi \bigl( \wt{a} \, ]_i \bigr) \oplus \psi \bigl( [_i \, b \bigr)$.
  
  We are now going to define a continuation for the path~$\bigl( \Ct{G}_t, 0 \leq t \leq k \bigr) $ that will bring us back to~$\C{G}$. 

  We will maintain during our inductive construction two properties:
  \begin{gather*}
    \psi \bigl(\Ct{G}_t \bigr)  = \C{G},\\
    \text{and } \quad \supp \bigl( \Ct{G}_t \bigr) \cap \wt{\C{E}}_{l} \subset \C{E}_l,
  \end{gather*}
  where $\wt{\C{E}}_l$ is the set of local decorated expressions, so that 
  \[
    \wt{\C{E}}_l = \bigl\{ e \in \wt{\C{E}}: [_0 \notin
    \supp(e) \bigr\}. 
  \]
  We already proved that the first property is satisfied by~$\Ct{G}_k$. As $\pi \bigl( \Ct{G}_k \bigr) = \C{G}_k = \C{T}$, $\supp(\Ct{G}_k) \cap \Ct{E}_l = \varnothing$, so that the second condition is also satisfied.
  Let us assume that, for some~$t > k$, $\Ct{G}_{t-1}$ has been defined and satisfies the two conditions above, and let us proceed to the construction of~$\Ct{G}_t$.
  
  As long as $\Ct{G}_{t-1} \not \in \F{S}$, (and this will be the case for~$t < 2k$), find some canonical expression~$e \in \wt{\C{E}}_c \setminus \C{E}_c$, such that $\Ct{G}_{t-1}( e ) \geq 1$. From our induction hypotheses, we see that necessarily $[_0 \in \supp(e)$. Our continuation will be such that each such expression has matching parentheses with matching labels, and we will check this on the go while building it by induction. Among those matching pairs of parentheses, there is necessarily at least one inner pair. We can for instance choose the one starting with the last opening parenthesis~$(_j$ of the sequence~$e$. This choice makes it obvious that the subsequence of~$e$ enclosed between~$(_j$ and~$)_j$ contains  no further parentheses. 

  Since $\psi$ is linear and preserves positive measures, $\C{G} \ominus \psi(e) = \psi\bigl( \Ct{G}_{t-1} \bigr) \ominus \psi(e) = \psi \bigl( \Ct{G}_{t-1} \ominus e \bigr) \geq 0$, On the other hand, $e$ has the form~$e = [_0 a (_j b )_j c$, where $\psi(b) = b$ (since $(_j )_j$ is an inner pair of parentheses in~$e$). As $\psi(e) = \psi \bigl( [_0 a \, ]_j c \bigr) + \psi \bigl( [_j b \bigr)$ and $\psi \bigl( [_j b \bigr) = [_j b$, this shows that $[_j \, b \leq \C{G}$, and therefore that $\C{G}( [_j b ) > 0$.
  Let us now define 
  \[
    \Ct{G}_t = \Ct{G}_{t-1} \ominus e \oplus [_j b \oplus [_0 \, a \, ]_j c.
  \]
  Applying $\psi$ to~$\Ct{G}_t$, we see as previously that $\psi \bigl( \Ct{G}_t \bigr) = \psi \bigl( \Ct{G}_{t-1} \bigr) = \C{G}$. As $\Ct{G}_k$ contains $k$~pairs of parentheses, and we consume one pair at each step~$t > k$, we see that $\Ct{G}_{2k}$ contains no more parentheses, so that $\Ct{G}_{2k} \in \F{G}$ and $\Ct{G}_{2k} = \psi \bigl( \Ct{G}_{2k} \bigr) = \C{G}$. Let us put now $\C{G}_t = \pi \bigl( \Ct{G}_t \bigr)$, for $t = k+1, \dots, 2k$. 
  We see that 
  \[
    \C{G}_t = \C{G}_{t-1} \ominus [_0 \, a b c \oplus [_0 \, a \, ]_j c \oplus [_j \, b,
  \]
  where $[_0 \, a b c, [_0 \, a \, ]_j c \in \C{E}$ and 
$\C{G} \bigl( [_j \, b \bigr) > 0$, so that 
$\C{G}_t \in \beta_n(\C{G}_{t-1}, \C{G})$, therefore $\C{G}_k = \C{T}, \dots, \C{G}_{2k} = \C{G}$ is a path of positive probability under the parsing process with reference~$\C{G}$, leading from~$\C{T}$ to~$\C{G}$, in other words, $\B{G}_{\C{T}, \C{G}}(\C{G}) > 0$ as required.  
\end{proof}

\subsection{Convergence to the expectation of a random toric grammar, proof of \vref{lm:Average}}\label{App:Average}

The proof of this results is based on the fact that the 
operation 
$$
\bigl( \C{G}, \C{G}' \bigr) \mapsto \chi \bigl( \C{G} \boxplus \C{G}' \bigr)
$$ is associative. 

Let us begin the proof by several definitions and lemmas.

For any grammar $\C{G} \in \F{G}$ and any pair of indices $p = (p^1, p^2)  \in 
\bigl( \B{N} \setminus 
\{ 0 \} \bigr)^2$, we will say that $p$ is $\C{G}$-congruent 
when there is $a \in S^*$ such that $\C{G}(a ]_{p^1} )
\C{G}(a \, ]_{p^2}) > 0$ or $\C{G}([_{p^1} \, a ) \C{G}([_{p^2}\, a) 
> 0$.

Let us define the label map $\xi_{p}$ as 
\[
\xi_p(i) = \begin{cases}
i, & \text{ when } i \notin \{p^1, p^2\},\\
\min \{p^1, p^2 \}, & \text{ when } i \in \{p^1, p^2\}.
\end{cases}
\]
For any sequence $p_1, \dots, p_k \in (\B{N} \setminus \{0\})^{2k}$ of 
pairs of indices, let us define the label map $\xi_{p_1, \dots, p_k}$ 
as 
\[
\xi_{p_1, \dots, p_k} = \xi_{\xi_{p_1, \dots, p_{k-1}}(p_k)} \circ 
\xi_{p_1, \dots, p_{k-1}},
\]
where $f\bigl((i,j)\bigr) 
= (f(i), f(j))$, for any $(i, j) \in \bigl( \B{N} \setminus \{0\} \bigr)^2$.

Let us say that $(p_1, \dots, p_k) \in \bigl( \B{N} \setminus \{ 0 \} 
\bigr)^{2k}$ is $\C{G}$-congruent, if $\xi_{p_1, \dots, p_{j-1}} (p_j)$ 
is $\xi_{p_1, \dots, p_{j-1}}\bigl( \C{G} \bigr)$-congruent for any 
$j \leq k$, and that it is maximal $\C{G}$-congruent 
is it is $\C{G}$-congruent and any $\C{G}$-congruent sequence 
of the form  $(p_1, \dots, p_k, p_{k+1})$ is such that 
\[
\xi_{p_1, 
\dots, p_{k}}(p_{k+1}^{1}) = \xi_{p_1, \dots, p_k}(p_{k+1}^{2}),
\]
or equivalently such that $\xi_{p_1, \dots, p_{k+1}} = \xi_{p_1, \dots, p_k}$.

\begin{lemma}\label{lm:MultipleXi}
For any sequence $(p_1, \dots, p_\ell) \in \Bigl( \B{N} \setminus \{ 0 \} 
\Bigr)^{2\ell}$, for any $k < \ell$,  
\[
\xi_{p_1, \dots, p_{\ell}} = \xi_{\xi_{p_1, \dots, p_k}(p_{k+1}, \dots, 
p_{\ell})} \circ \xi_{p_1, \dots, p_k}.
\]
\end{lemma}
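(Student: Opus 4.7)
The plan is to prove this by induction on $\ell - k$, which is the number of pairs that get ``moved across'' the split point $k$.

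For the base case $\ell = k + 1$, the claim
\[
  \xi_{p_1, \dots, p_{k+1}} = \xi_{\xi_{p_1, \dots, p_k}(p_{k+1})} \circ \xi_{p_1, \dots, p_k}
\]
is exactly the defining recursion of $\xi_{p_1, \dots, p_{k+1}}$, so nothing has to be done here.

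For the inductive step, I assume the lemma holds whenever the tail has length at most $\ell - 1 - k$, and prove it for a tail of length $\ell - k$. To lighten notations, I would set $q_j = \xi_{p_1, \dots, p_k}(p_{k+j})$ for $1 \leq j \leq \ell - k$, so that $\xi_{p_1, \dots, p_k}(p_{k+1}, \dots, p_\ell) = (q_1, \dots, q_{\ell - k})$. First, applying the defining recursion to peel off the last pair $p_\ell$, I write
\[
  \xi_{p_1, \dots, p_\ell} = \xi_{\xi_{p_1, \dots, p_{\ell-1}}(p_\ell)} \circ \xi_{p_1, \dots, p_{\ell-1}}.
\]
Next, by the induction hypothesis applied with the same split point $k$ (but a shorter tail of length $\ell - 1 - k$), I have
\[
  \xi_{p_1, \dots, p_{\ell-1}} = \xi_{q_1, \dots, q_{\ell-1-k}} \circ \xi_{p_1, \dots, p_k}.
\]
Composing with $p_\ell$ on the right gives $\xi_{p_1, \dots, p_{\ell-1}}(p_\ell) = \xi_{q_1, \dots, q_{\ell-1-k}}(q_{\ell - k})$, and substituting back yields
\[
  \xi_{p_1, \dots, p_\ell} = \xi_{\xi_{q_1, \dots, q_{\ell-1-k}}(q_{\ell-k})} \circ \xi_{q_1, \dots, q_{\ell-1-k}} \circ \xi_{p_1, \dots, p_k}.
\]
Finally, applying the defining recursion in the other direction to the first two factors collapses them into $\xi_{q_1, \dots, q_{\ell-k}}$, giving the desired identity.

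The proof is therefore essentially just bookkeeping on the recursive definition: the only mild point of care is to observe that $\xi$ is equivariant in the sense that $\xi_{p_1,\dots,p_{\ell-1}}(p_\ell)$ can be computed by first pushing $p_\ell$ through $\xi_{p_1,\dots,p_k}$ (producing $q_{\ell-k}$) and then through $\xi_{q_1,\dots,q_{\ell-1-k}}$. Since this is an equality of maps $\B{N} \to \B{N}$ and not just of the induced maps on grammars, no identification up to isomorphism is needed, which keeps the induction completely clean; I do not anticipate any real obstacle.
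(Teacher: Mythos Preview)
Your proposal is correct and follows essentially the same approach as the paper: induction on $\ell$ with $k$ fixed (equivalently, on $\ell-k$), peeling off $p_\ell$ via the defining recursion, applying the induction hypothesis to $\xi_{p_1,\dots,p_{\ell-1}}$, and then using the defining recursion once more to recombine. Your introduction of the shorthand $q_j = \xi_{p_1,\dots,p_k}(p_{k+j})$ makes the argument a bit more readable than the paper's nested-subscript display, but the logic is identical.
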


\begin{proof}
By induction on $\ell$ for $k$ fixed. 
This is true from the definition for $\ell = k+1$. 
Assuming we have established the lemma for $\ell -1$, we can write
\begin{multline*}
\xi_{p_1, \dots, p_{\ell}} = \xi_{\xi_{p_1, \dots, p_{\ell-1}}(p_{\ell})} 
\circ \xi_{p_1, \dots, p_{\ell-1}} \\ 
= \xi_{\xi_{\xi_{p_1, \dots, p_k}(p_{k+1}, \dots, p_{\ell-1})} 
\circ \xi_{p_1, \dots, p_k} (p_{\ell})} \circ 
\xi_{\xi_{p_1, \dots, p_k}(p_{k+1}, \dots, p_{\ell-1})} \circ \xi_{
p_1, \dots, p_k} \\ 
= \xi_{\xi_{p_1, \dots, p_k}(p_{k+1}, \dots, p_{\ell})} \circ \xi_{p_1, \dots, 
p_k},
\end{multline*}
\end{proof}

\begin{lemma}\label{lm:XiPerm}
For any permutation $\sigma$ of $\{1, \dots, k\}$, 
\[
\xi_{p_1, \dots, p_k} \equiv \xi_{p_{\sigma(1)}, \dots, p_{\sigma(k)}}
\]
\end{lemma}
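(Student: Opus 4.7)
The plan is to identify $\xi_{p_1,\dots,p_k}$ with the canonical map that sends each integer to the minimum element of its equivalence class under the smallest equivalence relation $\sim_{p_1,\dots,p_k}$ on $\B{N}$ containing every pair $(p_\ell^1, p_\ell^2)$, $1 \leq \ell \leq k$. Since this equivalence relation depends only on the set of generators and not on their ordering, both $\xi_{p_1,\dots,p_k}$ and $\xi_{p_{\sigma(1)},\dots,p_{\sigma(k)}}$ will be seen to factor through the same partition of $\B{N}$, from which the isomorphism $\equiv$ is a short exercise.

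First I would prove by induction on $k$ the following structural claim: for each sequence $p_1,\dots,p_k$, the label map $\xi_{p_1,\dots,p_k}$ sends every $\sim_{p_1,\dots,p_k}$-class to its minimum element (so in particular its fibres are exactly the equivalence classes). The base case $k=1$ is immediate from the definition of $\xi_{p_1}$. For the inductive step I would use the recursive formula
\[
\xi_{p_1,\dots,p_k} = \xi_{\xi_{p_1,\dots,p_{k-1}}(p_k)} \circ \xi_{p_1,\dots,p_{k-1}};
\]
writing $q = \xi_{p_1,\dots,p_{k-1}}(p_k)$, the induction hypothesis gives that $q^1$ and $q^2$ are exactly the minima of the $\sim_{p_1,\dots,p_{k-1}}$-classes of $p_k^1$ and $p_k^2$, so post-composing with $\xi_q$ fuses those two classes into a single class whose new minimum is $\min\{q^1, q^2\}$. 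This is precisely the passage from $\sim_{p_1,\dots,p_{k-1}}$ to $\sim_{p_1,\dots,p_k}$, and the minimum-preserving property propagates. The degenerate case $q^1 = q^2$ (when the two classes already coincide) also causes no trouble, since $\xi_q$ is then the identity and $\sim_{p_1,\dots,p_k} = \sim_{p_1,\dots,p_{k-1}}$.

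Once the structural claim is secured, the conclusion is formal: the set $\{(p_\ell^1, p_\ell^2) : 1 \leq \ell \leq k\}$ is invariant under $\sigma$, hence $\sim_{p_{\sigma(1)},\dots,p_{\sigma(k)}}$ coincides with $\sim_{p_1,\dots,p_k}$, and the two label maps have identical fibres. An isomorphism witness $h \in \F{F}$ can then be defined on the common image by $h \bigl( \xi_{p_1,\dots,p_k}(i) \bigr) = \xi_{p_{\sigma(1)},\dots,p_{\sigma(k)}}(i)$; this is well-defined and injective by the equality of kernels, and it extends to a bijection of $\B{N}$ fixing $0$, since the constraint $p_\ell^j \in \B{N} \setminus \{0\}$ forces $\{0\}$ to be a singleton class on both sides. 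The main obstacle will be the inductive step itself, where the recursive definition entangles the outer $\xi_q$ with the relabelling performed by the inner $\xi_{p_1,\dots,p_{k-1}}$: one has to translate the single identification $q^1 \sim q^2$ imposed in the range back into a merging of two whole equivalence classes in the source, and this is precisely what the stronger ``minimum'' form of the induction hypothesis makes possible. Steps beyond Step~1 are essentially tautologies about quotient maps.
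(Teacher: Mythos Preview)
Your proposal is correct and follows essentially the same route as the paper: both introduce the canonical projection $\pi_k$ sending each label to the minimum of its class under the equivalence relation generated by $\{p_1,\dots,p_k\}$, and prove by induction on $k$ that $\xi_{p_1,\dots,p_k}$ agrees with $\pi_k$. Your inductive hypothesis is actually a shade stronger than the paper's (you show literal equality $\xi_{p_1,\dots,p_k}=\pi_k$, whereas the paper only establishes that the two maps have the same fibres and hence are $\equiv$); this makes your argument slightly cleaner, and in particular your final isomorphism witness $h$ can simply be taken to be the identity.
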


\begin{proof}
Let us consider the smallest equivalence relation containing the set \linebreak
$\{p_1, \dots, p_k\}$. Let $\ov{\pi}_k: \B{N} \setminus \{0\} \rightarrow
\C{C}$ be the corresponding projection of each label to its component.
Let us befine the label map $\pi_k$ by $\pi_k(0) = 0$ and 
\[
\pi_k(i) = \min \ov{\pi}_k(i), \qquad i > 0.
\]
We are going to prove by induction on $k$ that $\xi_{p_1, \dots, p_k} 
\equiv \pi_k$. Since $\pi_k$ is invariant by permutation of the sequence
$(p_1, \dots, p_k)$, this will prove the lemma.

Let us remark now that $\xi_{p_1, \dots, p_k} \equiv \pi_k$ if and only 
if 
\[
\xi_{p_1, \dots, p_k}(i) = \xi_{p_1, \dots, p_k}(j) 
\iff \pi_k(i) = \pi_k(j), \qquad i, j > 0.
\]
So we are going to prove this equivalence. 
It is easy to see from the previous lemma 
that for any integer $m = 1, \dots, k$,  
\begin{equation}
\label{eq:equality}
\xi_{p_1, \dots, p_k} (p_{m}^{1}) = \xi_{p_1, \dots, p_{k}}(p_{m}^{2}). 
\end{equation}
Indeed, 
\[
\xi_{p_1, \dots, p_k} =  \xi_{\xi_{p_1, \dots, 
p_{m}}(p_{m+1}, \dots, p_k)} \circ \xi_{\xi_{p_1, \dots, p_{m-1}}(p_{m})} \circ \xi_{p_1, \dots, p_{m-1}},
\]
so that, changing $p_m$ for $\xi_{p_1, \dots, p_{m-1}}(p_m)$, we are 
back to proving the result when $m = k = 1$, where it is obvious 
from the definitions.

Now, \vref{eq:equality} and the minimality of $\pi_k$ implies 
that 
\[
\pi_k(i) = \pi_k(j) \implies \xi_{p_1, \dots, p_k} (i) = 
\xi_{p_1, \dots, p_k}(j), \qquad i, j > 0.
\]
Let us assume conversely that $\xi_{p_1, \dots, p_k}(i) = 
\xi_{p_1, \dots, p_k}(j)$ and let 
\[
m = \min \bigl\{ \ell: \xi_{p_1, \dots, p_{\ell}}(i)
= \xi_{p_1, \dots, p_{\ell}}(j) \bigr\}.
\]
Since $\xi_{p_1, \dots, p_m} = \xi_{\xi_{p_1, \dots, p_{m-1}}(p_m)} 
\circ \xi_{p_1, \dots, p_{m-1}}$, we see that necessarily 
\[
\xi_{p_1, \dots, p_{m-1}} \bigl( \{ i, j \} \bigr) 
= \xi_{p_1, \dots, p_{m-1}} \bigl( \{ p_{m}^{1}, p_{m}^{2} \} \bigr),
\]
and that this set contains two distinct elements.
Exchanging the role of $i$ and $j$ if necessary, we can assume 
without loss of generality that 
\[
\xi_{p_1, \dots, p_{m-1}} \bigl( (i, j) \bigr) 
= \xi_{p_1, \dots, p_{m-1}} \bigl( p_m \bigr). 
\]
From the induction hypothesis, this implies that 
$\pi_{m-1} \bigl( (i,j) \bigr) = \pi_{m-1}( p_m )$. 
Since the equivalence relation defined by $\pi_{m-1}$ is a subset 
of the equivalence relation defined by $\pi_k$, this implies 
that $\pi_k \bigl( (i, j) \bigr) = \pi_k( p_m )$. Since moreover 
$\pi_k(p_{m}^{1}) = \pi_k (p_{m}^{2})$, this implies that 
$\pi_k(i) = \pi_k(j)$. 
\end{proof}

\begin{lemma}\label{lm:XiRelabel}
For any $f \in \F{F}$, any sequence of pairs of positive labels 
$p_1, \dots, p_k$, there is a label map $g \in \F{F}$ such that 
\[
\xi_{f(p_1, \dots, p_k)} \circ f = g \circ \xi_{p_1, \dots, p_k}.
\]
\end{lemma}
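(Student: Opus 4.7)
The plan is to use the characterization of $\xi_{p_1, \dots, p_k}$ established in the proof of \vref{lm:XiPerm}: two indices $i, j$ satisfy $\xi_{p_1, \dots, p_k}(i) = \xi_{p_1, \dots, p_k}(j)$ if and only if they lie in the same class of the smallest equivalence relation $\sim_p$ on $\B{N}$ containing the pairs $\{p_r^1, p_r^2\}$ for $r = 1, \dots, k$. Writing $\sim_{f(p)}$ for the analogous equivalence relation on $\B{N}$ generated by $\{f(p_r^1), f(p_r^2)\}$, the goal becomes to construct a well-defined $g \in \F{F}$ factoring $\xi_{f(p)} \circ f$ through $\xi_p$.

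The first step is to observe that $f$ is a morphism of these equivalence relations: $i \sim_p j$ implies $f(i) \sim_{f(p)} f(j)$. This follows from the standard chain characterization of generated equivalences: if $i \sim_p j$ is witnessed by a chain $i = i_0, i_1, \dots, i_m = j$ in which each consecutive pair $\{i_{\ell}, i_{\ell+1}\}$ is either trivial or equals some $\{p_r^1, p_r^2\}$, then $f(i) = f(i_0), \dots, f(i_m) = f(j)$ is a chain of exactly the same form witnessing $f(i) \sim_{f(p)} f(j)$.

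Combining this morphism property with the fiber characterization of \vref{lm:XiPerm} applied to both $\xi_p$ and $\xi_{f(p)}$ gives: whenever $\xi_p(i) = \xi_p(j)$, also $\xi_{f(p)}(f(i)) = \xi_{f(p)}(f(j))$. The construction is then
\[
  g(m) = \xi_{f(p)}(f(i)) \quad \text{for any } i \text{ with } \xi_p(i) = m,
\]
which is unambiguous on the image of $\xi_p$; extend $g$ to the rest of $\B{N}$ by, say, the identity. The condition $g(0) = 0$ is automatic because $\xi_p(0) = 0$ lies in the image and $\xi_{f(p)}(f(0)) = \xi_{f(p)}(0) = 0$. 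By construction $g \circ \xi_p = \xi_{f(p)} \circ f$, as required.

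The main obstacle will be making the chain characterization of $\sim_p$ precise enough to transport it by $f$; this is essentially the same bookkeeping that already appears in the proof of \vref{lm:XiPerm} and should present no real difficulty. An equivalent route would be a direct induction on $k$, using the recursion $\xi_{p_1, \dots, p_{k+1}} = \xi_{\xi_{p_1, \dots, p_k}(p_{k+1})} \circ \xi_{p_1, \dots, p_k}$ together with the induction hypothesis to reduce to the case $k = 1$, which is then handled by a short case analysis (the only nontrivial identification being that $\xi_{f(p)} \circ f$ takes the same value on the two endpoints of $p$). I prefer the equivalence-relation formulation above since it is more conceptual and exposes why the lemma is essentially forced by \vref{lm:XiPerm}.
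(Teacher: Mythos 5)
Your proof is correct and takes essentially the same approach as the paper: both rely on the characterization from Lemma \ref{lm:XiPerm} that the fibers of $\xi_{p_1,\dots,p_k}$ are exactly the classes of the smallest equivalence relation generated by the pairs $\{p_r^1,p_r^2\}$, and then observe that $f$ carries each generator to a generator of the analogous relation for $f(p_1,\dots,p_k)$. The paper compresses this into two sentences (reduce to generators, check on generators) and leaves the construction of $g$ implicit; you spell out the chain argument and build $g$ explicitly, but the underlying idea is the same.
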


\begin{proof}
We have to prove that 
\[
\xi_{p_1, \dots, p_k}(i) = \xi_{p_1, \dots, p_k}(j) 
\implies \xi_{f(p_1, \dots, p_k)} \circ f (i) = \xi_{f(p_1, 
\dots, p_k)} \circ f(j), \qquad i, j > 0.
\]
From the proof of the previous lemma, it is enough to 
check that the right-hand side holds when $(i,j) = p_m$, $m=1, 
\dots, k$, which is then obvious. 
\end{proof}

\begin{lemma}\label{lm:Congruence}
If $f \in \F{F}$ and $(p_1, \dots, p_k)$ is $\C{G}$-congruent, then $(f(p_1), \dots, f(p_k))$ is also $f(\C{G})$-congruent.
\end{lemma}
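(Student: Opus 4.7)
The plan is to proceed by induction on $k$, with the base case $k=1$ reducing to a direct observation about image measures, and the inductive step being handled by invoking \vref{lm:XiRelabel} to factor the relabeling $\xi_{f(p_1), \dots, f(p_{k-1})} \circ f$ through $\xi_{p_1, \dots, p_{k-1}}$.

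First I would handle the base case $k=1$. Suppose $p = (p^1, p^2)$ is $\C{G}$-congruent, witnessed by some $a \in S^*$ with (say) $\C{G}(a\,]_{p^1}) \, \C{G}(a\,]_{p^2}) > 0$. Since $f(\C{G})$ is the image measure under $f$, extended to expressions in the natural way, the sequence $f(a)$ satisfies $f(\C{G})\bigl(f(a)\,]_{f(p^1)}\bigr) \geq \C{G}\bigl(a\,]_{p^1}\bigr) > 0$ (because $a\,]_{p^1}$ is in the preimage of $f(a)\,]_{f(p^1)}$), and similarly for $p^2$. The other case (left context) is symmetric. Hence $f(p)$ is $f(\C{G})$-congruent.

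For the inductive step, assume the claim holds for sequences of length $k-1$ and any label map. Let $(p_1, \dots, p_k)$ be $\C{G}$-congruent. By the induction hypothesis (applied to $(p_1, \dots, p_{k-1})$), the sequence $(f(p_1), \dots, f(p_{k-1}))$ is $f(\C{G})$-congruent. It remains to show that $\xi_{f(p_1), \dots, f(p_{k-1})}\bigl(f(p_k)\bigr)$ is $\xi_{f(p_1), \dots, f(p_{k-1})}\bigl(f(\C{G})\bigr)$-congruent. Apply \vref{lm:XiRelabel} to obtain a label map $g \in \F{F}$ with
\[
\xi_{f(p_1), \dots, f(p_{k-1})} \circ f = g \circ \xi_{p_1, \dots, p_{k-1}}.
\]
Evaluating both sides at $p_k$ and at $\C{G}$ (viewed as a map on expressions, respectively as image measures) yields
\[
\xi_{f(p_1), \dots, f(p_{k-1})}\bigl(f(p_k)\bigr) = g\bigl(\xi_{p_1, \dots, p_{k-1}}(p_k)\bigr), \quad \xi_{f(p_1), \dots, f(p_{k-1})}\bigl(f(\C{G})\bigr) = g\bigl(\xi_{p_1, \dots, p_{k-1}}(\C{G})\bigr).
\]

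By the hypothesis that $(p_1, \dots, p_k)$ is $\C{G}$-congruent, $\xi_{p_1, \dots, p_{k-1}}(p_k)$ is $\xi_{p_1, \dots, p_{k-1}}(\C{G})$-congruent. Applying the base case to the label map $g$, the grammar $\xi_{p_1, \dots, p_{k-1}}(\C{G})$, and the single pair $\xi_{p_1, \dots, p_{k-1}}(p_k)$, we conclude that $g\bigl(\xi_{p_1, \dots, p_{k-1}}(p_k)\bigr)$ is $g\bigl(\xi_{p_1, \dots, p_{k-1}}(\C{G})\bigr)$-congruent, which by the displayed identities is exactly the required congruence. This closes the induction.

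The only potential pitfall is to make sure the identity $\xi_{f(p_1), \dots, f(p_{k-1})}\bigl(f(\C{G})\bigr) = g\bigl(\xi_{p_1, \dots, p_{k-1}}(\C{G})\bigr)$ is read correctly as functoriality of image measures under composition, i.e.\ $(h_1 \circ h_2)(\C{G}) = h_1\bigl(h_2(\C{G})\bigr)$; once this is in hand, the proof is essentially a direct reduction to the base case via \vref{lm:XiRelabel}.
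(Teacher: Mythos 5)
Your proof is correct and follows essentially the same approach as the paper: both hinge on \vref{lm:XiRelabel} to factor $\xi_{f(p_1), \dots, f(p_{m-1})} \circ f = g \circ \xi_{p_1, \dots, p_{m-1}}$, and both then use the basic monotonicity of image measures ($f(\C{G})(f(e)) \geq \C{G}(e)$) to verify the congruence condition at each step. Your organization as an explicit induction, reducing the $m$-th step to the $k=1$ base case applied to the auxiliary map $g$, is a slightly cleaner packaging of what the paper does by directly checking the condition at each index $m$, but the key ideas are identical.
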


\begin{proof}
Assume that for some $a \in S^*$ 
\[
\xi_{p_1, \dots, p_{m-1}} \bigl( \C{G} \bigr) \bigl( a \, ]_{ \xi_{p_1, \dots, 
p_{m-1}}(p_{m}^{1})} \bigr) > 0.
\]
Then, $\xi_{f(p_1, \dots, p_{m-1})} \circ f = g \circ \xi_{p_1, \dots, p_{m-1}}$,  and 
\begin{multline*}
\xi_{f(p_1, \dots, p_{m-1})} \circ f \bigl( \C{G} \bigr) 
\bigl( g(a) \, ]_{\xi_{f(p_1, \dots, p_{m-1})} \circ f (p_{m}^{1})} \bigr) 
\\ = g \circ \xi_{p_1, \dots, p_{m-1}} \bigl( \C{G} \bigr)  
\bigl( g (a) \, ]_{g \circ \xi_{p_1, \dots, p_{m-1}}(p_{m}^{1})} \bigr)  
\\ = \xi_{p_1, \dots, p_m} \bigl( \C{G} \bigr) 
\Bigl( g^{-1} \circ g \bigl( a \, ]_{\xi_{p_1, \dots, p_{m-1}}(p_{m}^{1})} 
\bigr) \Bigr) \\ \geq \xi_{p_1, \dots, p_{m-1}} \bigl( \C{G} \bigr) 
\bigl( a \, ]_{\xi_{p_1, \dots, p_{m-1}}(p_{m}^{1})} \bigr)
> 0.
\end{multline*}
The same is true when $p_{m}^{1}$ is replaced with $p_{m}^{2}$ and when 
$a\,]_{\xi_{p_1, \dots, p_{m-1}(p_{m}^{1})}}$ is replaced with 
$a [_{\xi_{p_1, \dots, p_{m-1}(p_{m}^{2})}}$. 

The lemma is a straightforward consequence of these remarks and 
the definition of a congruent sequence. 
\end{proof}

\begin{lemma}\label{lm:AssocCongruence}
If $(p_1, \dots, p_k)$ and $(q_1, \dots, q_{\ell})$ are both 
$\C{G}$-congruent, then 
\[
(p_1, \dots, p_k, q_1, \dots, q_{\ell}) 
\]
is $\C{G}$-congruent.
\end{lemma}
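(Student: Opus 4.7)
The plan is to split the verification into two blocks, corresponding to the indices $1, \dots, k$ and $k+1, \dots, k+\ell$, and for the second block reduce the statement to the $\C{G}$-congruence of $(q_1, \dots, q_\ell)$ after relabeling via $\xi_{p_1, \dots, p_k}$.

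For the first block, the conditions that must be checked at indices $j = 1, \dots, k$ in the definition of $\C{G}$-congruence for the concatenation are literally the conditions already granted by the $\C{G}$-congruence of $(p_1, \dots, p_k)$, so nothing new has to be proved there.

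For the second block, let me set $f = \xi_{p_1, \dots, p_k}$. By \vref{lm:Congruence} applied to the label map $f$ and the $\C{G}$-congruent sequence $(q_1, \dots, q_\ell)$, the transported sequence $\bigl( f(q_1), \dots, f(q_\ell) \bigr)$ is $f(\C{G})$-congruent. Unpacking this for each $m = 1, \dots, \ell$, the pair $\xi_{f(q_1), \dots, f(q_{m-1})} \bigl( f(q_m) \bigr)$ is $\xi_{f(q_1), \dots, f(q_{m-1})} \bigl( f(\C{G}) \bigr)$-congruent. Now the key algebraic step is to match this with the congruence condition required by the concatenation, using \vref{lm:MultipleXi}: taking the initial segment to be $(p_1, \dots, p_k)$ gives
\[
\xi_{p_1, \dots, p_k, q_1, \dots, q_{m-1}} \;=\; \xi_{f(q_1), \dots, f(q_{m-1})} \circ f,
\]
so applying both sides to $q_m$ and to $\C{G}$ identifies
\[
\xi_{p_1, \dots, p_k, q_1, \dots, q_{m-1}}(q_m) = \xi_{f(q_1), \dots, f(q_{m-1})} \bigl( f(q_m) \bigr),
\]
and
\[
\xi_{p_1, \dots, p_k, q_1, \dots, q_{m-1}}(\C{G}) = \xi_{f(q_1), \dots, f(q_{m-1})} \bigl( f(\C{G}) \bigr).
\]
Therefore the congruence condition at index $k+m$ for the concatenated sequence is exactly the one already established, and the induction on $m$ closes.

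The only real content is the identification in the displayed equality above, and that is immediate from \vref{lm:MultipleXi}; \vref{lm:Congruence} does the actual work of transporting the $\C{G}$-congruence of $(q_1, \dots, q_\ell)$ to the $f(\C{G})$-congruence of the $f$-image, which is precisely what is needed after the first $k$ steps of building $\xi_{p_1, \dots, p_k, q_1, \dots, q_{m-1}}$ have collapsed into the single label map $f$. No delicate step is expected; the main thing to get right is carefully tracking how $\xi$ composes under concatenation, which is exactly encapsulated by \vref{lm:MultipleXi}.
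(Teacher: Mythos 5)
Your proof is correct and follows essentially the same route as the paper: invoke Lemma~\ref{lm:Congruence} to transport the $\C{G}$-congruence of $(q_1,\dots,q_\ell)$ into $\xi_{p_1,\dots,p_k}(\C{G})$-congruence of the relabeled sequence, then use the composition identity of Lemma~\ref{lm:MultipleXi} to identify the resulting conditions with those required of the concatenated sequence. Your write-up is if anything a touch more explicit than the paper's, which only spells out the verification at index $k+\ell$; you correctly note that the same identification works for every $m = 1,\dots,\ell$ and that the first block of indices is immediate.
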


\begin{proof}
According to the previous lemma, $\xi_{p_1, \dots, p_k}(q_1, \dots, q_{\ell})$
is $\xi_{p_1, \dots, p_k} \bigl( \C{G} \bigr)$-congruent. 
Coming back to the definition this proves that 
\[
\xi_{\xi_{p_1, \dots, p_k}(q_1, \dots, q_{\ell-1})} 
\circ \xi_{p_1, \dots, p_k}(q_\ell)
\]
is 
\[
\xi_{\xi_{p_1, \dots, 
p_k}(q_1, \dots, q_{\ell-1})} \circ \xi_{p_1, \dots, p_k} \bigl( \C{G} \bigr)\text{-congruent}. 
\]
In \vref{lm:MultipleXi} we have moreover proved that
\[
\xi_{p_1, \dots, p_k, q_1, \dots, q_{\ell-1}} = \xi_{\xi_{p_1, 
\dots, p_k}(q_1, \dots, q_{\ell-1})} \circ \xi_{p_1, \dots, 
p_k}.
\]
This identity applied to the above statement shows that 
$(p_1, \dots, p_k, q_1, \dots, q_\ell)$ satisfies the definition 
of a $\C{G}$-congruent sequence.
\end{proof}

\begin{prop}\label{pp:ChiXi}
If $(p_1, \dots, p_k)$ and $(q_1, \dots, q_{\ell})$ are both 
maximal $\C{G}$-congruent, then 
\[
\xi_{p_1, \dots, p_k} \bigl( \C{G} \bigr) \equiv \xi_{q_1, \dots, 
q_{\ell}} \bigl( \C{G} \bigr) \equiv \chi \bigl( \C{G} \bigr).
\]
\end{prop}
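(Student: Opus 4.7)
The plan is to prove the result in two steps: first, show that any two maximal $\C{G}$-congruent sequences yield isomorphic label maps; second, exhibit one maximal $\C{G}$-congruent sequence whose $\xi$-image realizes $\chi(\C{G})$.

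For the first step, let $p = (p_1, \dots, p_k)$ and $q = (q_1, \dots, q_\ell)$ be two maximal $\C{G}$-congruent sequences. Iterated application of \vref{lm:AssocCongruence} shows that the concatenation $(p, q) = (p_1, \dots, p_k, q_1, \dots, q_\ell)$ is $\C{G}$-congruent. I claim that $\xi_{p, q} = \xi_p$: arguing by induction on $j$, assume $\xi_{p, q_1, \dots, q_{j-1}} = \xi_p$. The $\C{G}$-congruence of $(p, q_1, \dots, q_j)$ means that $q_j$ is $\xi_p(\C{G})$-congruent, so $(p, q_j)$ is already $\C{G}$-congruent; by the maximality of $p$ this forces $\xi_p(q_j^1) = \xi_p(q_j^2)$, and from $\xi_{p, q_1, \dots, q_j} = \xi_{\xi_p(q_j)} \circ \xi_p$ one concludes $\xi_{p, q_1, \dots, q_j} = \xi_p$. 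Next, \vref{lm:XiPerm} gives $\xi_{p, q} \equiv \xi_{q, p}$, and the symmetric argument yields $\xi_{q, p} = \xi_q$, so that $\xi_p \equiv \xi_q$ and therefore $\xi_p(\C{G}) \equiv \xi_q(\C{G})$.

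For the second step, I would construct a specific $\C{G}$-congruent sequence $s$ with $\xi_s(\C{G}) \equiv \chi(\C{G})$. By definition $\chi(\C{G}) \equiv \und{\chi}^m(\C{G})$, and each transition from $\C{G}_t = \und{\chi}^t(\C{G})$ to $\C{G}_{t+1}$ merges labels along equivalence classes of the relation induced by $C(\C{G}_t)$. Each such merge decomposes into a chain of elementary pairs belonging to $C(\C{G}_t)$, and \vref{lm:Congruence} lets me pull such chains back to pairs that are congruent in the intermediate grammar obtained from the $\xi$ of the sequence built so far. Concatenating the chains for $t = 0, 1, \dots, m-1$ produces a $\C{G}$-congruent sequence $s$ with $\xi_s(\C{G}) \equiv \chi(\C{G})$. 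Maximality of $s$ follows because $\chi(\C{G})$ is a fixed point of $\und{\chi}$: if $(s, r)$ were $\C{G}$-congruent with $\xi_s(r^1) \neq \xi_s(r^2)$, then transporting $r$ through the isomorphism $\xi_s(\C{G}) \equiv \chi(\C{G})$ would produce a non-trivial $\chi(\C{G})$-congruent pair, contradicting the fixed-point property. Applying the uniqueness from the first step, any maximal $\C{G}$-congruent $p$ satisfies $\xi_p(\C{G}) \equiv \xi_s(\C{G}) \equiv \chi(\C{G})$.

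The main technical difficulty will be the construction in the second step: one must track how the equivalence classes of $C(\C{G}_t)$ in the intermediate quotient $\C{G}_t$ lift back to pairs that are $\C{G}$-congruent when inserted at the correct position of the growing sequence. \vref{lm:Congruence} handles each elementary pair individually, but verifying that the composed sequence still satisfies the recursive definition of a $\C{G}$-congruent sequence — where congruence is tested against intermediate grammars produced by the same $\xi$-operations being constructed — is a mildly self-referential bookkeeping task that should resolve by induction on the length of the sequence under construction.
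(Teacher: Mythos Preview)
Your proof is correct and follows the same strategy as the paper: concatenate the two maximal sequences, use maximality to show $\xi_{p,q}=\xi_p$ and $\xi_{q,p}=\xi_q$, then invoke \vref{lm:XiPerm} to conclude $\xi_p\equiv\xi_q$; finally exhibit a maximal $\C{G}$-congruent sequence realizing $\chi(\C{G})$. The only difference is cosmetic: the paper dispatches the second step in one sentence (``we see from the definition of $\chi$ that there is some maximal $\C{G}$-congruent sequence $r$ with $\chi(\C{G})=\xi_r(\C{G})$''), whereas you spell out the construction by decomposing each iterate of $\und{\chi}$ into elementary pair merges --- your version is more explicit but not a different argument.
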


\begin{proof}
From the previous lemma, $(p_1, \dots, p_k, q_1, \dots, q_{\ell})$ 
is $\C{G}$-congruent. Since $p$ is maximal, $\xi_{p_1, \dots, p_k, 
q_1, \dots, q_\ell} = \xi_{p_1, \dots, p_k}$. 
In the same way $\xi_{q_1, \dots, q_{\ell}, p_1, \dots, p_k} = 
\xi_{q_1, \dots, q_\ell}$. 
We have seen moreover in a previous lemma that 
\[
\xi_{p_1, \dots, p_k, q_1, \dots, q_{\ell}} 
\equiv
\xi_{
q_1, \dots, q_{\ell}, p_1, \dots, p_k }. 
\]
This proves that $\xi_{p_1, \dots, p_k} \equiv \xi_{q_1, \dots, q_{\ell}}$. 

We see from the 
definition of $\chi$ (see \vref{dfn:LabelIdentification}) that there is some maximal $\C{G}$-congruent 
sequence $r_1, \dots, r_m$ such that $\chi \bigl( \C{G} \bigr) 
= \xi_{r_1, \dots, r_m} \bigl(\C{G} \bigr)$. Therefore 
\[
\chi \bigl( \C{G} \bigr)  
\equiv \xi_{p_1, \dots, p_k} \bigl( \C{G} \bigr) 
\equiv \xi_{q_1, \dots, q_{\ell}} \bigl( \C{G} \bigr). 
\]
\end{proof}

\begin{prop}\label{pp:Associativity}
For any $\C{G}, \C{G}' \in \F{G}$, 
\[
\chi \bigl( \chi ( \C{G} )  
\boxplus \C{G}' \bigr) =  \chi \bigl( 
\C{G} \boxplus \C{G}' \bigr).
\]
Consequently, for any $\C{G}, \C{G}', \C{G}'' \in \F{G}$, 
\[
\chi \Bigl( \chi \bigl( \C{G} \boxplus \C{G}' \bigr) \boxplus 
\C{G}'' \Bigr) = \chi \Bigl( \C{G} \boxplus \C{G}' \boxplus 
\C{G}'' \Bigr).
\]
\end{prop}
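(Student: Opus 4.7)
The plan is to exploit \vref{pp:ChiXi}, which represents $\chi$ as $\xi_{p_1,\dots,p_k}$ for any maximal $\C{G}$-congruent sequence $(p_1,\dots,p_k)$, together with the injectivity of the even label map $f_e$. I would fix a maximal $\C{G}$-congruent sequence $(p_1,\dots,p_k)$, so that up to isomorphism $\chi(\C{G}) = \xi_{p_1,\dots,p_k}(\C{G})$. A short induction on $k$, using the injectivity of $f_e$ on the one-pair identity $f_e \circ \xi_p = \xi_{f_e(p)} \circ f_e$, then gives $f_e \circ \xi_{p_1,\dots,p_k} = \xi_{f_e(p_1),\dots,f_e(p_k)} \circ f_e$ as label maps. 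Because each $f_e(p_j)$ is a pair of even positive integers, $\xi_{f_e(p_1),\dots,f_e(p_k)}$ fixes every odd positive label and $0$ and therefore acts as the identity on $f_o(\C{G}')$. Setting $H = \C{G} \boxplus \C{G}'$, this yields the key reduction
\[
\chi(\C{G}) \boxplus \C{G}' \equiv f_e\bigl(\xi_{p_1,\dots,p_k}(\C{G})\bigr) + f_o(\C{G}') = \xi_{f_e(p_1),\dots,f_e(p_k)}(H).
\]

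Next I would verify that $\bigl(f_e(p_1),\dots,f_e(p_k)\bigr)$ is itself an $H$-congruent sequence. By \vref{lm:Congruence} applied prefix by prefix it is $f_e(\C{G})$-congruent; since $f_e(\C{G}) \leq H$ and any pair that is congruent for a smaller grammar is congruent for the larger one, it is $H$-congruent as well. I would then extend it to a maximal $H$-congruent sequence $\bigl(f_e(p_1),\dots,f_e(p_k),r_{k+1},\dots,r_\ell\bigr)$, which exists because each legitimate additional pair strictly reduces the finite number of distinct label classes appearing in $H$.

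The technical heart, and the step I expect to be the main obstacle, is the following lifting lemma: if $(q_1,\dots,q_\ell)$ is maximal $H$-congruent and one sets $s_j = \xi_{q_1,\dots,q_k}(q_j)$ for $k<j\leq\ell$, then $(s_{k+1},\dots,s_\ell)$ is maximal $\xi_{q_1,\dots,q_k}(H)$-congruent. The congruence direction would follow from \vref{lm:MultipleXi} applied at each step, rewriting the $\xi$-chain for the tail in terms of $\xi_{q_1,\dots,q_k}(H)$. For maximality, given a prospective extension $s_{\ell+1}$ with $(s_{k+1},\dots,s_{\ell+1})$ still $\xi_{q_1,\dots,q_k}(H)$-congruent, I would lift it to a preimage $q_{\ell+1}$ under $\xi_{q_1,\dots,q_k}$; \vref{lm:MultipleXi} then makes $(q_1,\dots,q_{\ell+1})$ an $H$-congruent sequence, so maximality of $(q_1,\dots,q_\ell)$ forces $\xi_{q_1,\dots,q_\ell}(q_{\ell+1}^1)=\xi_{q_1,\dots,q_\ell}(q_{\ell+1}^2)$, i.e.\ $\xi_{s_{k+1},\dots,s_\ell}(s_{\ell+1}^1)=\xi_{s_{k+1},\dots,s_\ell}(s_{\ell+1}^2)$, as required.

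Applying the lifting lemma with $(q_1,\dots,q_k) = (f_e(p_1),\dots,f_e(p_k))$ to the maximal extension constructed in the second paragraph and combining it with \vref{lm:MultipleXi} and \vref{pp:ChiXi}, I would conclude
\[
\chi\bigl(\chi(\C{G}) \boxplus \C{G}'\bigr) \equiv \xi_{s_{k+1},\dots,s_\ell}\bigl(\chi(\C{G}) \boxplus \C{G}'\bigr) \equiv \xi_{f_e(p_1),\dots,f_e(p_k),r_{k+1},\dots,r_\ell}(H) \equiv \chi(H),
\]
which is the first identity. The three-grammar identity then follows at once by applying this first identity with $\C{G} \boxplus \C{G}'$ substituted for $\C{G}$ and $\C{G}''$ substituted for $\C{G}'$.
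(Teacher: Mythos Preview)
Your approach is essentially the paper's: both fix a maximal $\C{G}$-congruent sequence, show it remains congruent for the combined grammar, extend it there to a maximal sequence, and conclude via the lifting lemma together with \vref{lm:MultipleXi} and \vref{pp:ChiXi}. The one simplification you miss is that the paper assumes at the outset, without loss of generality, that $\C{G}$, $\C{G}'$, and $\chi(\C{G})$ carry pairwise disjoint label sets, so that $\boxplus$ collapses to ordinary $+$ and your explicit bookkeeping with $f_e$ and $f_o$ becomes unnecessary; conversely, your lifting lemma is exactly the step the paper dismisses with ``we see from the definitions.''
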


\begin{proof}
Let us assume that $\C{G}$, $\C{G}'$ and $\chi(\C{G})$ use disjoint label 
sets, so that 
\begin{align*}
\chi \bigl( \chi \bigl( \C{G} \bigr) \boxplus 
\C{G}' \bigr) & \equiv 
\chi \bigl( \chi (\C{G}) + 
\C{G}' \bigr), \\
\chi \bigl( \C{G} \boxplus \C{G}' \bigr) & \equiv 
\chi \bigl( \C{G} + \C{G}' \bigr).
\end{align*}

Let $p_1, \dots, p_k$ be some maximal $\C{G}$-congruent sequence. 
It is also $\bigl(\C{G} + \C{G}'\bigr)$-congruent, and since 
label sets are disjoint,
\[
\xi_{p_1, \dots, p_k} \bigl( \C{G} \bigr)  
+ \C{G}' =  \xi_{p_1, \dots, p_k} \bigl( 
\C{G} + \C{G}'  \bigr). 
\]

Let us continue the sequence $p_1, \dots, p_k$ to form 
a maximal $\bigl( \C{G} + \C{G}' \bigr)$-congruent 
sequence $p_1, \dots, p_{\ell}$. 
Let $(q_{k+1}, \dots, q_\ell)$ be defined as  
\[
q_{m} = \xi_{p_1, \dots, p_k}(p_{k+m}).
\]
We see from the definitions that $(q_{k+1}, \dots, q_{\ell})$ 
is a maximal $\xi_{p_1, \dots, p_k} \bigl( \C{G} 
+ \C{G}' \bigr)$-congruent sequence, and therefore 
a maximal $\Bigl(\xi_{p_1, \dots, p_k}(\C{G}) + \C{G}' \Bigr)$-congruent 
sequence. Consequently 
\begin{multline*}
\chi \Bigl( \chi \bigl( \C{G} \bigr) 
+ \C{G}' \Bigr) \equiv
\xi_{q_{k+1}, \dots, q_{\ell}} \Bigl( \xi_{p_1, 
\dots, p_k}  \bigl( \C{G} \bigr) 
+ \C{G}' \Bigr) \\ 
= \xi_{q_{k+1}, \dots, q_\ell} \circ \xi_{p_1, \dots, p_k} \bigl( 
\C{G} + \C{G}' \bigr) = \xi_{\xi_{p_1, \dots, p_k} (p_{k+1}, \dots, p_{\ell})} 
\circ \xi_{p_1, \dots, p_k} \bigl( \C{G} + \C{G}' \bigr) \\  
= \xi_{p_1, \dots, p_{\ell}} \bigl( 
\C{G} + \C{G}' \bigr) \equiv \chi \Bigl( 
\C{G} + \C{G}' \Bigr),
\end{multline*}
proving the proposition.
\end{proof}

\begin{proof}[Proof of \vref{lm:Average}]

Let $\pi$ be the projection of $\F{G}$ on $\quotient{\F{G}}{\equiv}$.

From the law of large numbers, we have that, for all $\C{G}\in\F{G}$,
\[
  \frac{1}{n}\sum_{i=1}^n\indi(G_i\equiv\C{G}) \underset{n\rightarrow\infty}{\longrightarrow}\B{G}(\pi(\C{G})).
\]
Let us now remark that
 $\ds \bigboxplus_{i=1}^n n^{-1}G_i	=\bigboxplus_{\ov{\C{G}}\in\quotient{\F{G}}{\equiv}}\bigboxplus_{\substack{i\\G_i\in\ov{\C{G}}}}n^{-1}G_i$.
Thus  
\begin{multline*}
\frac{1}{n} \chi\left(\bigboxplus_{i=1}^n G_i\right)	=\chi\left(\bigboxplus_{\ov{\C{G}}\in\quotient{\F{G}}{\equiv}}\chi\left(\bigboxplus_{\substack{i\\G_i\in\ov{\C{G}}}}n^{-1}G_i\right)\right) \\
  =\chi\left(\bigboxplus_{\ov{\C{G}}\in\quotient{\F{G}}{\equiv}}\left(\sum_{i=1}^nn^{-1}\indi(G_i\in\ov{\C{G}})\right)\chi(\ov{\C{G}})\right)	\\
  =\chi\left(\bigboxplus_{\ov{\C{G}}\in\quotient{\F{G}}{\equiv}}\left(\sum_{i=1}^nn^{-1}\indi(G_i\in\ov{\C{G}})\right)\ov{\C{G}}\right).
\end{multline*}
We used here \vref{pp:Associativity}  
and the fact that for any $a, b \in \B{R}_+$, 
$$
\chi \bigl[ (a \, \C{G})\boxplus (b \, \C{G}) 
\bigr]=(a + b) \,\chi(\C{G}),
$$
 which comes from the following reasoning:
Suppose that 
$$
\{1,\ldots,d\}=\{i\,;\ \C{G}([_iS^*)>0\},
$$
 and let~$p_i=(2i,2i-1)$. Since each~$p_i$ is $ (a \, \C{G}) 
\boxplus ( b \, \C{G}) $-congruent, $(p_1,\ldots,p_d)$ is also 
$(a \, \C{G}) \boxplus (b \, \C{G})$-congruent, from \vref{lm:AssocCongruence}. It is quite straightforward to see that
\[
  \xi_{p_1,\ldots,p_d} \bigl[ (a \, \C{G} ) \boxplus (b \, \C{G}) \bigr] \equiv
(a + b) \, \C{G}.
\]
This implies that
$$
  \chi \bigl[ ( a \, \C{G} ) \boxplus ( b \, \C{G}) \bigr]  =
\chi \circ \xi_{p_1,\ldots,p_d} \bigl[ ( a \, \C{G} ) \boxplus ( b \, \C{G}) \bigr] 	
    = \chi \bigl[ (a + b) \, \C{G} \bigr] 	= (a + b) \,\chi(\C{G}).	
$$

To take the limit inside $\chi$, we need to prove that $\chi$ is continuous
in a suitable sense. 
Actually, $\cal{G}\mapsto \chi(\cal{G})$ is continuous on sets of fixed 
support, and this is what is required to conclude.

Indeed, for any sequence $(\C{G}_i)$ with fixed support for $n$ large enough, there is a fixed label map $f$ (depending on the support) such that for 
$n$ large enough $\chi(\C{G}_i)=f(\C{G}_i)$, and the result follows from the fact that $\C{G}\mapsto f(\C{G})$ is continuous; since $f(\C{G})(A)=\C{G}(f^{-1}(A))$.

Consequently
\begin{multline*}
  \lim_{n \rightarrow \infty} \frac{1}{n} \; \chi \Biggl( \bigboxplus_{i=1}^n 
G_i	 \Biggr)  = 
\chi \Biggl( \bigboxplus_{\ov{\C{G}} \in \quotient{\F{G}}{\equiv}} 
\lim_{n \rightarrow \infty} \biggl( \frac{1}{n} \sum_{i=1}^n \B{1} \bigl( 
G_i \in \ov{\C{G}} \bigr) \biggr) \ov{\C{G}} \Biggr) \\ 
\begin{aligned} & = 
\chi\left(\bigboxplus_{\ov{\C{G}}\in\quotient{\F{G}}{\equiv}}\B{G}(\ov{\C{G}})\ov{\C{G}}\right) 
    =\chi\left(\bigboxplus_{\ov{\C{G}}\in\quotient{\F{G}}{\equiv}}\B{G}(\ov{\C{G}})\chi(\ov{\C{G}})\right)	\\
    & =\chi\left(\bigboxplus_{\ov{\C{G}}\in\quotient{\F{G}}{\equiv}}\bigboxplus_{\C{G}\in\ov{\C{G}}}\B{G}(\C{G})\chi(\C{G})\right)	
    =\chi\left(\bigboxplus_{\C{G}\in\F{G}}\B{G}(\C{G})\chi(\C{G})\right)
\end{aligned}	\\
    =\chi\left(\bigboxplus_{\C{G}\in\F{G}}\B{G}(\C{G})\C{G}\right)	
    =\oint \C{G} \, \ud \B{G}(\C{G}).
\end{multline*}
\end{proof}

\section{Language produced by a Toric Grammar}\label{App:DirectProduction}

In this appendix, we make a deterministic study of the language produced 
by a toric grammar $\C{G} \in \beta^*(\F{T})$. More precisely, we are 
interested in the support of the distribution $\B{T}_{\C{G}}$ 
of the final state of the production process. 

\begin{lemma}\label{lm:NbSplit}
  Let~$\C{T} \in \F{T}$ be some text and $\C{G} \in \beta^*(\C{T})$ be some grammar obtained by splitting this text a finite number of times. The number of splits performed can be read in~$\C{G}$ and is equal to
  \[
    n = \sum_{i=1}^{+ \infty} \C{G} \bigl(\, ]_i S^* \bigr).
  \]
  Let us put $\ov{\alpha}(\C{G}) = \alpha^n(\C{G})$. Then, $\C{T} \in \ov{\alpha}(\C{G})\subset \F{T}$, moreover $\ov{\alpha}(\C{G}) = 
\supp (\B{T}_{\C{G}})$. 
\end{lemma}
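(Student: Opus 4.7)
The proof splits naturally into three pieces: identifying $n$ as the exact number of splits (and hence of merges), reversing the splitting path to land on $\C{T}$ after $n$ merges, and identifying $\ov{\alpha}(\C{G})$ with the support of $\B{T}_{\C{G}}$.

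First I would verify the split-counting formula. Because any expression is a circular permutation of a sequence containing its symbols with multiplicities, $\C{G}(]_i S^*)$ equals the total weight of occurrences of the symbol $]_i$ in $\C{G}$, so $\sum_{i\geq 1} \C{G}(]_i S^*)$ equals the total weight of closing brackets. A single split~$\beta$ removes a subexpression $ab$ and inserts $a]_i$ and $[_i b$, strictly adding one new $]_i$-symbol, while the subsequent label remapping $f\in\F{F}$ is measure preserving on occurrences of the collective set $\{]_i\}_{i\geq 1}$. Since $\C{T}$ contains no $]_i$-symbol for $i\geq 1$, induction on the splitting sequence shows that after $n$ applications of $\beta$ one has $\sum_{i\geq 1} \C{G}(]_i S^*)=n$.

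Next I would prove $\C{T} \in \ov{\alpha}(\C{G})$. Write a splitting sequence $\C{T}=\C{G}_0, \C{G}_1, \dots, \C{G}_n=\C{G}$ with $\C{G}_t\in\beta(\C{G}_{t-1})$. \vref{lm:Merge} gives label maps $f_t\in\F{F}$ such that $f_t(\C{G}_{t-1})\in\alpha(\C{G}_t)$. I would then compose reversals: using \vref{lm:MergeLabel} inductively, $f_n\circ\cdots\circ f_{t+1}(\C{G}_t) \in \alpha^{\,n-t}(\C{G})$, so in particular $f_n\circ\cdots\circ f_1(\C{T})\in\alpha^n(\C{G})=\ov{\alpha}(\C{G})$. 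Since every $f\in\F{F}$ fixes $0$ and $\C{T}$ uses only the label $0$, that composition acts as the identity on $\C{T}$, giving $\C{T}\in\ov{\alpha}(\C{G})$.

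Then I would show $\ov{\alpha}(\C{G})\subset\F{T}$ by the same counting invariant run backwards: each merge decreases $\sum_{i\geq1}(\cdot)(]_i S^*)$ by exactly one, so after $n$ merges from $\C{G}$ the resulting grammar $\C{G}''$ satisfies $\sum_{i\geq 1}\C{G}''(]_i S^*)=0$; by the conservation identity $\C{G}''([_i S^*)=\C{G}''(]_i S^*)$ from \vref{lm:Conservation} (which also holds after merges by the same one-step check), no $[_i$ with $i\geq 1$ appears either, so the support of $\C{G}''$ lies in $\F{S}([_0 D^+)$, i.e.\ $\C{G}''\in\F{T}$.

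Finally, for $\ov{\alpha}(\C{G})=\supp(\B{T}_{\C{G}})$, the inclusion $\ov{\alpha}(\C{G})\subset\supp(\B{T}_{\C{G}})$ is immediate: each element of $\ov{\alpha}(\C{G})$ is reached by a trajectory of $n$ merges of positive probability, and since the terminal state lies in $\F{T}$ we have necessarily $\sigma=n$ on this trajectory (any strictly shorter trajectory would stop at a non-text state by the counting invariant). Conversely, if $\C{T}'\in\supp(\B{T}_{\C{G}})$, then a positive-probability production trajectory $\C{G}=P_0,\dots,P_\sigma=\C{T}'$ exists, and the counting invariant forces $\sigma=n$ (since $\C{T}'$ contains no closing bracket while $\C{G}$ contains exactly $n$ of them), hence $\C{T}'\in\alpha^n(\C{G})=\ov{\alpha}(\C{G})$. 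The only delicate step to watch is the bookkeeping of the label-remapping inside $\beta$: I would emphasize that the symbol-count invariants I use are stable under $f\in\F{F}$, which is the reason the simple counting argument survives label identification and makes this lemma a clean consequence of \vref{lm:Conservation} and \vref{lm:Merge}.
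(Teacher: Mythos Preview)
Your proof is correct and follows essentially the same approach as the paper: the same counting invariant for $n$, the same reversal of the splitting path via \vref{lm:Merge} and \vref{lm:MergeLabel}, and the same bracket-count argument for $\ov{\alpha}(\C{G})\subset\F{T}$. You are in fact slightly more explicit than the paper in two places---you spell out the $[_i$/$]_i$ balance needed to conclude membership in $\F{T}$, and you prove both inclusions of $\ov{\alpha}(\C{G})=\supp(\B{T}_{\C{G}})$ in detail---whereas the paper leaves these as essentially immediate.
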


\begin{proof}
  The grammar $\C{G}$ is obtained by making a succession of splits. Each of those splits add one~$[_i$ and one~$]_i$ to the grammar, whereas in the original text there are no $[_i$ nor $]_i$, except for the $[_0$ at the beginning of each sentence. Since application of an element of~$\F{F}$ does not change the number of such symbols, they may be used to count the number of splits performed.

  Let us take then a sequence of toric grammars~$\C{T} = \C{G}_0 , \dots, \C{G}_n = \C{G}$, such that $\C{G}_k \in \beta(\C{G}_{k-1})$. From \vref{lm:Merge}, there is a sequence~$f_1, \dots, f_n \in \F{F}$ such that $f_k \bigl( \C{G}_{k-1} \bigr) \in \alpha \bigl( \C{G}_k \bigr)$. Let us prove by induction that for any~$k = 0, \dots, n$,
  \[
    f_k \circ \cdots \circ f_1 \bigl( \C{T} \bigr) \in  \alpha^{k} \bigl( \C{G}_{k} \bigr).
  \]
  Indeed, this is true for~$k = 0$, since $\C{G}_0 = \C{T}$. 
  Moreover, assuming that the assertion holds for $k-1$, we deduce that
\[
f_k \circ \dots \circ f_1 \bigl( \C{T} \bigr) \in  f_k \Bigl( 
\alpha^{k-1}\bigl( \C{G}_{k-1} \bigr) \Bigr)
\subset \alpha^{k-1} \Bigl( f_k \bigl( \C{G}_{k-1} \bigr) \Bigr) 
\subset \alpha^k \bigl( \C{G}_k \bigr).
\]
  showing that if the assertion holds for~$k$, it also holds for~$k+1$. For~$k = n$, we obtain that 
  \[
  f_n \circ \cdots \circ f_1 \bigl( \C{T} \bigr) \in \alpha^{n} \bigl( \C{G}_n \bigr).
  \]
  As $f_n \circ \cdots \circ f_1 \bigl( \C{T} \bigr) = \C{T}$, since $\C{T}$ is a text, and $\C{G}_n = \C{G}$, we get that $\C{T} \in \alpha^n \bigl( \C{G} \bigr)$. 

  Let us consider now $\C{G}'\in\ov{\alpha}(\C{G})$. Let $(\C{G}=\C{G}_0,\ldots,\C{G}_n=\C{G}')$ the chain of grammars leading to~$\C{G}'$. Then for any~$k = 0, \dots, n$,
  \[
    \sum_{i=1}^{+ \infty} \C{G}_k \bigl(\, ]_i S^* \bigr)=n-k,
  \]
  since $\C{G}_k\in\alpha(\C{G}_{k-1})$ and each merge takes away one~$[_i$ and one~$]_i$. This implies that $\sum_{i=1}^{+ \infty} \C{G}' \bigl(\, ]_i S^* \bigr)=0$, and thus $\C{G}'\in\F{T}$.
  
  Note that, as remarked  above, repeated merges may create elements of the type~$[_i \, a \, ]_i b$. However, this will not happen if $n$ successful merges can be performed. Indeed in the case when expressions of the form~$[_i \, a \, ]_i b$ remain unmatched during the merge process, we will get $\alpha \bigl( \C{G}_k \bigr) = \varnothing$ for some~$k < n$.
\end{proof}


\begin{thebibliography}{20}
\bibitem{Baker79}
  \textsc{Baker, J.K.} (1979).
  Trainable grammars for speech recognition.
  \textit{The Journal of the Acoustical Society of America},
  \textbf{65}(S1) S132--S132.
\bibitem{Chi98}
  \textsc{Chi, Z.} and \textsc{Geman, S.}  (1998).
  Estimation of probabilistic context-free grammars.
  \textit{Computational Linguistics},
  \textbf{24}(2) 299--305.
  MIT Press.
\bibitem{Chi99}
  \textsc{Chi, Z.} (1999).
  Statistical properties of probabilistic context-free grammars.
  \textit{Computational Linguistics},
  \textbf{25}(1) 131--160.
  MIT Press.
\bibitem{Chomsky56}
  \textsc{Chomsky, N.} (1956).
  Three Models for the Description of Language.
  \textit{IRE Transactions on Information Theory}
  \bibitem{Chomsky57}
  \textsc{Chomsky, N.} (1957).
  \textit{Syntactic Structures}.
  Mouton \& Co.
\bibitem{Chomsky65}
  \textsc{Chomsky, N.} (1965).
  \textit{Aspects of the Theory of Syntax.}
  MIT Press.
\bibitem{Chomsky95}
  \textsc{Chomsky, N.} (1995).
  \textit{The Minimalist Program.}
  MIT Press.
\bibitem{Cohen12}
  \textsc{Cohen, S. B.} and \textsc{Smith, N. A.} (2012).
  Empirical Risk Minimization for Probabilistic Grammars:
  Sample Complexity and Hardness of Learning.
  \textit{Computational Linguistics}.
  \textbf{38}(3) 479--526.
\bibitem{Pietra94}
  \textsc{Della Pietra, S.}, \textsc{Della Pietra, V.}, 
\textsc{Gillett, J.}, \textsc{Lafferty, J.}, \textsc{Printz, H.} and \textsc{Ure{\v{s}}, L.}.
  Inference and estimation of a long-range trigram model.
  \textit{Grammatical Inference and Applications},
  78--92.
  Springer.
\bibitem{Lari90}
  \textsc{Lari, K.} and \textsc{Young, S.} (1990).
  The estimation of stochastic context-free grammars using the inside-outside algorithm.
  \textit{Computer speech \& language},
  \textbf{4}(1) 35--56.
  Elsevier.
\bibitem{Norris}
  \textsc{Norris, J. R.}
  (1998)
  \textit{Markov Chains}.
  Cambridge University Press.
\bibitem{Roark01}
  \textsc{Roark, B.} (2001).
  Probabilistic Top-Down Parsing and Language Modeling.
  \textit{Computational Linguistics}.
  \textbf{27}(2) 249--276.
\bibitem{Sakakibara90}
  \textsc{Sakakibara, y.} (1990).
  Learning context-free grammars from structural data in polynomial time.
  \textit{Theoretical Computer Science},
  \textbf{76} (2) 223--242.
  Elsevier.
\bibitem{Stabler09}
  \textsc{Stabler, E.} (2009)
  Mathematics of language learning.
  \textit{Histoire, Épistémologie, Langage}
  \textbf{31}(1) 127--145.
\bibitem{Tan12}
  \textsc{Tan, M., Zhou, W., Zheng, L.} and \textsc{Wang, S.} (2012)
  A Scalable Distributed Syntactic, Semantic, and Lexical
  Language Model.
  \textit{Computational Linguistics}.
  \textbf{38}(3) 631--671.
\end{thebibliography}
\end{document}